\documentclass[letterpaper]{article}
\usepackage[margin=1in,dvips]{geometry}
\usepackage{graphicx,psfrag,amsmath,amsthm,amssymb}
\usepackage{natbib}
\usepackage{url,color,booktabs}

% Miguel A. Carreira-Perpinan's LaTeX macros.
% Time-stamp: <15/04/29 01:56:55 mcarreira-perpinan>

% Definitions that require only standard LaTeX2e things

% Letters used for matrices and vectors (boldface), for functions (roman), etc.
\newcommand{\A}{\ensuremath{\mathbf{A}}}

\newcommand{\F}{\ensuremath{\mathbf{F}}}

\newcommand{\W}{\ensuremath{\mathbf{W}}}
\newcommand{\X}{\ensuremath{\mathbf{X}}}
\newcommand{\Y}{\ensuremath{\mathbf{Y}}}
\newcommand{\Z}{\ensuremath{\mathbf{Z}}}

\newcommand{\f}{\ensuremath{\mathbf{f}}}

\newcommand{\h}{\ensuremath{\mathbf{h}}}

  % TIPA defines \s and LaTeX \ss!

%\renewcommand{\v}{\ensuremath{\mathbf{v}}}
\newcommand{\w}{\ensuremath{\mathbf{w}}}
\newcommand{\x}{\ensuremath{\mathbf{x}}}
\newcommand{\y}{\ensuremath{\mathbf{y}}}
\newcommand{\z}{\ensuremath{\mathbf{z}}}

% Bold symbols and greek letters

\newcommand{\blambda}{\ensuremath{\boldsymbol{\lambda}}}

% Blackboard bold

\newcommand{\bbN}{\ensuremath{\mathbb{N}}}
\newcommand{\bbR}{\ensuremath{\mathbb{R}}}

% Calligraphic

\newcommand{\calD}{\ensuremath{\mathcal{D}}}

\newcommand{\calI}{\ensuremath{\mathcal{I}}}

\newcommand{\calO}{\ensuremath{\mathcal{O}}}

% Relational operators
% \bydef puts ``def'' over the equals sign and means ``is by definition
% equal to''. Another possibility is to use the \triangleq symbol.

% \simbydef puts ``def'' over the ~ sign and means ``is by definition
% distributed as''.

% Other functions

\newcommand{\norm}[1]{\left\lVert#1\right\rVert}
\newcommand{\ceil}[1]{\lceil#1\rceil}

% Left superscript
% (from http://www.maths.univ-rennes1.fr/~edix/sgahtml/typesetting_rules.html)

% Box of text (for pictures, tables, etc.), without frames. Arguments:
%   #1: separation between rows of text as a multiple of LaTeX's default
%       (optional, default 1).
%   #2: position (one of t, b, c).
%   #3: format (one of l, r, c or even p{2cm}).
%   #4: text, which can include line breaks (\\).
\newcommand{\caja}[4][1]{{%
    \renewcommand{\arraystretch}{#1}%
    \begin{tabular}[#2]{@{}#3@{}}%
      #4%
    \end{tabular}%
    }}

% The amsart class has its own \keywords command, in which case we
% don't replace it.

% The logos (AmSLaTeX, BibTeX, etc.) are defined in texnames.sty.

% List environments
%
% "simplelist": useful for compact lists. Use it instead of "itemize".
% Optional argument: the bullet (default \triangleright). Other nice
% bullets are available in the Zapf Dingbats font (e.g. \ding{166} in
% the pifont package).
%
%
{%
\begin{list}{#1}{
\vspace{-\topsep}
\vspace{-\partopsep}
\setlength{\itemindent}{0cm}
\setlength{\rightmargin}{0cm}
\setlength{\listparindent}{0cm}
\settowidth{\labelwidth}{#1}
\setlength{\leftmargin}{\labelwidth}
\addtolength{\leftmargin}{\labelsep}
\setlength{\itemsep}{0cm}
}%
}%
{%
\end{list}
\vspace{-\topsep}
\vspace{-\partopsep}
}

%
% enumthm: an enumerated list intended to appear in theorems and not
% to be nested. Each item is numbered in Roman numerals in parentheses.
% A suitable way to refer to an item is:
% Theorem~\ref{thm:pithagoras}(\ref{en:triangle}) states...
%
{\begin{enumerate}%
}%
{\end{enumerate}}

\DeclareMathOperator*{\argmin}{arg\,min}

% Probability

% Functions

% Information theory

%\newcommand{\KLop}{\operatorname{KL}}
 % Kullback-Leibler divergence
   % Symmetrised KL divergence

% Set theory
% Are "card" and "im" the correct names in English?

% Matrices

           % Cofactor of a matrix element
           % Adjunt matrix

% Other

% Definitions that require the amsmath.sty and amsthm.sty files
% Time-stamp: <16/02/05 18:08:09 mcarreira-perpinan>

% -------------- Theorem environments --------------
\theoremstyle{plain}% default
\newtheorem{thm}{Theorem}[section]
\newtheorem{lemma}[thm]{Lemma}
\newtheorem*{lemma*}{Lemma}

\newtheorem*{prop*}{Proposition}

\theoremstyle{definition}

\newtheorem*{defn*}{Definition}

\newtheorem*{exmp*}{Example}

\newtheorem*{conj*}{Conjecture}

\theoremstyle{remark}
\newtheorem{rmk}[thm]{Remark}
\newtheorem*{rmk*}{Remark}

\graphicspath{{grf/}}

\bibpunct[, ]{(}{)}{;}{a}{,}{,} % Should be the natbib default but it isn't!

\title{ParMAC: distributed optimisation of nested functions, \\ with application to learning binary autoencoders}
\author{
  Miguel \'A.\ Carreira-Perpi\~n\'an\hspace{5ex} Mehdi Alizadeh \\
  Electrical Engineering and Computer Science, University of California, Merced \\
  {\url{http://eecs.ucmerced.edu}}
}
\date{May 30, 2016}

\begin{document}

\maketitle

\begin{abstract}
  
  Many powerful machine learning models are based on the composition of multiple processing layers, such as deep nets, which gives rise to nonconvex objective functions. A general, recent approach to optimise such ``nested'' functions is the \emph{method of auxiliary coordinates (MAC)} \citep{CarreirWang14a}. MAC introduces an auxiliary coordinate for each data point in order to decouple the nested model into independent submodels. This decomposes the optimisation into steps that alternate between training single layers and updating the coordinates. It has the advantage that it reuses existing single-layer algorithms, introduces parallelism, and does not need to use chain-rule gradients, so it works with nondifferentiable layers. With large-scale problems, or when distributing the computation is necessary for faster training, the dataset may not fit in a single machine. It is then essential to limit the amount of communication between machines so it does not obliterate the benefit of parallelism. We describe a general way to achieve this, \emph{ParMAC}. ParMAC works on a cluster of processing machines with a circular topology and alternates two steps until convergence: one step trains the submodels in parallel using stochastic updates, and the other trains the coordinates in parallel. Only submodel parameters, no data or coordinates, are ever communicated between machines. ParMAC exhibits high parallelism, low communication overhead, and facilitates data shuffling, load balancing, fault tolerance and streaming data processing. We study the convergence of ParMAC and propose a theoretical model of its runtime and parallel speedup. To illustrate our general results in a specific algorithm, we develop ParMAC to learn binary autoencoders with application to fast, approximate image retrieval. We implement this using Message Passing Interface (MPI) in a distributed system and demonstrate nearly perfect speedups in a 128-processor cluster with a training set of 100 million high-dimensional points. The speedups achieved agree well with the prediction of our theoretical speedup model.

\end{abstract}

\section{Introduction: big data, parallel processing and nested models}
\label{s:intro}

Serial computing has reached a plateau and parallel, distributed architectures are becoming widely available, from machines with a few cores to cloud computing with 1000s of machines. The combination of powerful nested models with large datasets is a key ingredient to solve difficult problems in machine learning, computer vision and other areas, and it underlies recent successes in deep learning \citep{Hinton_12a,Le_12a,Dean_12a}. Unfortunately, parallel computation is not easy, and many good serial algorithms do not parallelise well. The cost of communicating (through the memory hierarchy or a network) greatly exceeds the cost of computing, both in time and energy, and will continue to do so for the foreseeable future \citep{FullerMillet11a,Graham_04a}. Thus, good parallel algorithms must minimise communication and maximise computation per machine, while creating sufficiently many subproblems (ideally independent) to benefit from as many machines as possible. The load (in runtime) on each machine should be approximately equal. Faults become more frequent as the number of machines increases, particularly if they are inexpensive machines. Machines may be heterogeneous and differ in CPU and memory; this is the case with initiatives such as SETI@home, which may become an important source of distributed computation in the future. Big data applications have additional restrictions. The size of the data means it cannot be stored on a single machine, so distributed-memory architectures are necessary. Sending data between machines is prohibitive because of the size of the data and the high communication costs. In some applications, more data is collected than can be stored, so data must be regularly discarded. In others, such as sensor networks, limited battery life and computational power imply that data must be processed locally.

In this paper, we focus on machine learning models of the form $\y = \f_{K+1}(\dots \f_2(\f_1(\x))\dots)$, i.e., consisting of a nested mapping from the input \x\ to the output \y. Such \emph{nested models} involve multiple parameterised layers of processing and include deep neural nets \citep{HintonSalakh06a}, cascades for object recognition in computer vision \citep{Serre_07a,Ranzat_07b} or for phoneme classification in speech processing \citep{GoldMorgan99a,SaonChien12a}, wrapper approaches to classification or regression \citep{KohaviJohn97a}, and various combinations of feature extraction/learning and preprocessing prior to some learning task. Nested and hierarchical models are ubiquitous in machine learning because they provide a way to construct complex models by the composition of simple layers. However, training nested models is difficult even in the serial case because \emph{function composition produces inherently nonconvex functions}, which makes gradient-based optimisation difficult and slow, and sometimes inapplicable (e.g.\ with nonsmooth or discrete layers).

Our starting point is a recently proposed technique to train nested models, the \emph{method of auxiliary coordinates (MAC)} \citep{CarreirWang12a,CarreirWang14a}. This reformulates the optimisation into an iterative procedure that alternates training submodels independently with coordinating them. It introduces significant model and data parallelism, can often train the submodels using existing algorithms, and has convergence guarantees with differentiable functions to a local stationary point, while it also applies with nondifferentiable or even discrete layers. MAC has been applied to various nested models \citep{CarreirWang14a,WangCarreir14a,CarreirRaziper15a,RaziperCarreir16a,CarreirVladym15a}. However, the original papers proposing MAC \citep{CarreirWang12a,CarreirWang14a} did not address how to run MAC on a distributed computing architecture, where communication between machines is far costlier than computation. This paper proposes \emph{ParMAC}, a parallel, distributed framework to learn nested models using MAC, implements it in Message Passing Interface (MPI) for the problem of learning binary autoencoders (BAs), and demonstrates its ability to train on large datasets and achieve large speedups on a distributed cluster. We first review related work (section~\ref{s:related}), describe MAC in general and for BAs (section~\ref{s:MAC}) and introduce the ParMAC model and some extensions of it (section~\ref{s:ParMAC}). Then, we analyse theoretically ParMAC's parallel speedup (section~\ref{s:speedup-th}) and convergence (section~\ref{s:conv}). Finally, we describe our MPI implementation of ParMAC for BAs (section~\ref{s:ParMAC-BAhash-implem}) and show experimental results (section~\ref{s:expts}). Although our MPI implementation and experiments are for a particular ParMAC algorithm (for binary autoencoders), we emphasise that our contributions (the definition of ParMAC and the theoretical analysis of its speedup and convergence) apply to ParMAC in general for any situation where MAC applies, i.e., nested functions with $K$ layers.

\section{Related work}
\label{s:related}

Distributed optimisation and large-scale machine learning have been steadily gaining interest in recent years. Most work has centred on \emph{convex} optimisation, particularly when the objective function has the form of empirical risk minimisation (data fitting term plus regulariser) \citep{Cevher_14a}. This includes many important models in machine learning, such as linear regression, LASSO, logistic regression or SVMs. Such work is typically based on stochastic gradient descent (SGD) \citep{Bottou10a}, coordinate descent (CD) \citep{Wright16a} or the alternating direction method of multipliers (ADMM) \citep{Boyd_11a}. This has resulted in several variations of parallel SGD \citep{Mcdonal_10a,Bertsek11a,Zinkev_10a,Gemull_11a,Niu_11a}, parallel CD \citep{Bradley_11a,RichtarTakac13a,LiuWright15a} and parallel ADMM \citep{Boyd_11a,Ouyang_13a,ZhangKwok14a}. 

It is instructive to consider the parallel SGD case in some detail. Here, one typically runs SGD independently on data subsets (done by $P$ worker machines), and a parameter server regularly gathers the replica parameters from the workers, averages them and broadcasts them back to the workers. One can show \citep{Zinkev_10a} that, for a small enough step size and under some technical conditions, the distance to the minimum in objective function value satisfies an upper bound. The upper bound has a term that decreases as the number of workers $P$ increases, so that parallelisation helps, but it has another term that is independent of $P$, so that past a certain point parallelisation does not help. In practice, the speedups over serial SGD are generally modest. Also, the theoretical guarantees of parallel SGD are restricted to \emph{shallow} models, as opposed to \emph{deep} or \emph{nested} models, because the composition of functions is nearly always nonconvex. Indeed, parallel SGD can diverge with nonconvex models. The intuitive reason for this is that, with local minima, the average of two workers can have a larger objective value than each of the individual workers, and indeed the average of two minima need not be a minimum. In practice, parallel SGD can give reasonable results with nonconvex models if one takes care to average replica models that are close in parameter space and thus associated with the same optimum (e.g.\ eliminating ``stale'' models and other heuristics), but this is not easy \citep{Dean_12a}.

Little work has addressed nonconvex models. Most of it has focused on deep nets \citep{Dean_12a,Le_12a}. For example, Google's DistBelief \citep{Dean_12a} uses asynchronous parallel SGD, with gradients for the full model computed with backpropagation, to achieve data parallelism (with the caveat above), and some form of model parallelism. The latter is achieved by carefully partitioning the neural net into pieces and allocating them to machines to compute gradients. This is difficult to do and requires a careful match of the neural net structure (number of layers and hidden units, connectivity, etc.\@) to the target hardware. Although this has managed to train huge nets on huge datasets by using tens of thousands of CPU cores, the speedups achieved were very modest. Other work has used similar techniques but for GPUs \citep{Chen_12e,Zhang_13a,Coates_13a,Seide_14a}.

Another recent trend is on parallel computation abstractions tailored to machine learning, such as Spark \citep{Zaharia_10a}, GraphLab \citep{Low_12a}, Petuum \citep{Xing_15a} or TensorFlow \citep{Abadi_15a}, with the goal of making cloud computing easily available to train machine learning models. Again, this is often based on shallow models trained with gradient-based convex optimisation techniques, such as parallel SGD. Some of these systems implement some form of deep neural nets.

Finally, there also exist specific approximation techniques for certain types of large-scale machine learning problems, such as spectral problems, using the Nystr{\"o}m formula or other landmark-based methods \citep{WilliamSeeger01a,Bengio_04a,DrineasMahoney05a,Talwal_13a,VladymCarreir13a,VladymCarreir16a}.

ParMAC is specifically designed for nested models, which are typically nonconvex and include deep nets and many other models, some of which have nondifferentiable layers. As we describe below, ParMAC has the advantages of being simple and relatively independent of the target hardware, while achieving high speedups.

\section{Optimising nested models using the method of auxiliary coordinates (MAC)}
\label{s:MAC}

Many machine learning architectures share a fundamental design principle: \emph{mathematically, they construct a (deeply) nested mapping from inputs to outputs}, of the form $\f(\x;\W) = \f_{K+1}(\dots \f_2(\f_1(\x;\W_1);\W_2)\dots;\W_{K+1})$ with parameters \W, such as deep nets or binary autoencoders consisting of multiple processing layers. Such problems are traditionally optimised using methods based on gradients computed using the chain rule. However, such gradients may sometimes be inconvenient to use, or may not exist (e.g.\ if some of the layers are nondifferentiable). Also, they are hard to parallelise, because of the inherent sequentiality in the chain rule.

The \emph{method of auxiliary coordinates (MAC)} \citep{CarreirWang12a,CarreirWang14a} is designed to optimise nested models without using chain-rule gradients while introducing parallelism. It solves an equivalent but in appearance very different problem to the nested one, which affords embarrassing parallelisation. The idea is to break nested functional relationships judiciously by introducing new variables (the \emph{auxiliary coordinates}) as equality constraints. These are then solved by optimising a penalised function using alternating optimisation over the original parameters (which we call the \W\ step) and over the coordinates (which we call the \Z\ step). The result is a \emph{coordination-minimisation (CM) algorithm}: the minimisation (\W) step updates the parameters by splitting the nested model into independent submodels and training them using existing algorithms, and the coordination (\Z) step ensures that corresponding inputs and outputs of submodels eventually match.

MAC algorithms have been developed for several nested models so far: deep nets \citep{CarreirWang14a}, low-dimensional SVMs \citep{WangCarreir14a}, binary autoencoders \citep{CarreirRaziper15a}, affinity-based loss functions for binary hashing \citep{RaziperCarreir16a} and parametric nonlinear embeddings \citep{CarreirVladym15a}. In this paper we focus mostly on the particular case of binary autoencoders. These define a nonconvex nondifferentiable problem, yet its MAC algorithm is simple and effective. It allows us to demonstrate, in an actual implementation in a distributed system, the fundamental properties of ParMAC: how MAC introduces parallelism; how ParMAC keeps the communication between machines low; the use of stochastic optimisation in the \W\ step; and the tradeoff between the different amount of parallelism in the \W\ and \Z\ steps. It also allows us to test how good our theoretical model of the speedup is in experiments. We first give the detailed MAC algorithm for binary autoencoders, and then generalise it to $K>1$ hidden layers.

\subsection{Optimising binary autoencoders using MAC}
\label{s:BAhash}

A \emph{binary autoencoder (BA)} is a usual autoencoder but with a binary code layer. It consists of an \emph{encoder} $\h(\x)$ that maps a real vector $\x\in\bbR^D$ onto a \emph{binary} code vector with $L<D$ bits, $\z\in\{0,1\}^L$, and a linear \emph{decoder} $\f(\z)$ which maps \z\ back to $\bbR^D$ in an effort to reconstruct \x. We will call \h\ a \emph{binary hash function} (see later). Let us write $\h(\x) = s(\A\x)$ (\A\ includes a bias by having an extra dimension $x_0=1$ for each \x) where $\A\in\bbR^{L\times (D+1)}$ and $s(t)$ is a step function applied elementwise, i.e., $s(t) = 1$ if $t\ge 0$ and $s(t) = 0$ otherwise. Given a dataset of $D$-dimensional patterns $\X = (\x_1,\dots,\x_N)$, our objective function, which involves the nested model $\y = \f(\h(\x))$, is the usual least-squares reconstruction error:
\begin{equation}
  \label{e:BA-nested}
  E_{\text{BA}}(\h,\f) = \sum^N_{n=1}{ \norm{\x_n - \f(\h(\x_n))}^2 }.
\end{equation}
Optimising this nonconvex, nonsmooth function is NP-complete. Where the gradients do exist with respect to \A\ they are zero, so optimisation of \h\ using chain-rule gradients does not apply. We introduce as auxiliary coordinates the outputs of \h, i.e., the codes for each of the $N$ input patterns, and obtain the following equality-constrained problem:
\begin{equation}
  \label{e:BA-MAC-constrained}
  \min_{\h,\f,\Z}{ \sum^N_{n=1}{ \norm{\x_n - \f(\z_n)}^2 } } \quad \text{s.t.} \quad \z_n = \h(\x_n),\ \z_n\in\{0,1\}^L,\ n=1,\dots,N.
\end{equation}
Note the codes are binary. We now apply the quadratic-penalty method (it is also possible to apply the augmented Lagrangian method; \citealp{NocedalWright06a}) and minimise the following objective function while progressively increasing $\mu$, so the constraints are eventually satisfied:
\begin{equation}
  \label{e:BA-MAC-QP}
  E_Q(\h,\f,\Z;\mu) = \sum^N_{n=1}{ \left( \norm{\x_n - \f(\z_n)}^2 + \mu \norm{\z_n - \h(\x_n)}^2 \right) } \text{ s.t.\ } \z_n\in\{0,1\}^L,\ n=1,\dots,N.
\end{equation}
Finally, we apply alternating optimisation over \Z\ and $\W = (\h,\f)$. This results in the following two steps:
\begin{itemize}
\item Over \Z\ for fixed $(\h,\f)$, this is a binary optimisation on $NL$ variables, but it separates into $N$ independent optimisations each on only $L$ variables, with the form of a binary proximal operator (where we omit the index $n$): $\min_{\z}{ \norm{\x - \f(\z)}^2 + \mu \norm{\z - \h(\x)}^2 }$ s.t.\ $\z\in\{0,1\}^L$. After some transformations, this problem can be solved exactly for small $L$ by enumeration or approximately for larger $L$ by alternating optimisation over bits, initialised by solving the relaxed problem to $[0,1]$ and truncating its solution (see \citealp{CarreirRaziper15a} for details).
\item Over $\W = (\h,\f)$ for fixed \Z, we obtain $L+D$ independent problems: for each of the $L$ single-bit hash functions (which try to predict \Z\ optimally from \X), each solvable by fitting a linear SVM; and for each of the $D$ linear decoders in \f\ (which try to reconstruct \X\ optimally from \Z), each a linear least-squares problem. With linear \h\ and \f\ this simply involves fitting $L$ SVMs to $(\X,\Z)$ and $D$ linear regressors to $(\Z,\X)$.
\end{itemize}
The user must choose a schedule for the penalty parameter $\mu$ (sequence of values $0 < \mu_1 < \dots < \infty$). This should increase slowly enough that the binary codes can change considerably and explore better solutions before the constraints are satisfied and the algorithm stops. With BAs, MAC stops for a finite value of $\mu$ \citep{CarreirRaziper15a}. This occurs whenever \Z\ does not change compared to the previous \Z\ step, which gives a practical stopping criterion. Also, in order to generalise well to unseen data, we stop iterating for a $\mu$ value not when we (sufficiently) optimise $E_Q(\h,\f,\Z;\mu)$, but when the precision of the hash function in a validation set decreases. This is a form of early stopping that guarantees that we improve (or leave unchanged) the initial \Z, and besides is faster. We also have to initialise \Z. This can be done by running PCA and binarising its result, for example. Fig.~\ref{f:BA-alg} gives the MAC algorithm for BAs.

\begin{figure}[t]
  \centering
  \setlength{\fboxsep}{1ex}
  \framebox{%
    \begin{minipage}[c]{0.70\linewidth}
      \begin{tabbing}
        n \= n \= n \= n \= n \= \kill
        \underline{\textbf{input}} $\X_{D \times N} = (\x_1,\dots,\x_N)$, $L \in \bbN$ \\
        Initialise $\Z_{L \times N} = (\z_1,\dots,\z_N) \in \{0,1\}^{LN}$ \\
        \underline{\textbf{for}} $\mu = 0 < \mu_1 < \dots < \mu_{\infty}$ \+ \\
        \underline{\textbf{for}} $l = 1,\dots,L$ \` {\small\textsf{\W\ step: \h}} \+ \\
        $h_l \leftarrow$ fit SVM to $(\X,\Z_{\cdot l})$ \- \\
        $\f \leftarrow$ least-squares fit to $(\Z,\X)$ \` {\small\textsf{\W\ step: \f}} \\
        \underline{\textbf{for}} $n = 1,\dots,N$ \` {\small\textsf{\Z\ step}} \+ \\
        $\z_n \leftarrow \argmin_{\z_n\in\{0,1\}^L}{\norm{\x_n-\f(\z_n)}^2 + \mu \norm{\z_n-\h(\x_n)}^2}$ \- \\
        \underline{\textbf{if}} no change in \Z\ and $\Z = \h(\X)$ \underline{\textbf{then}} stop \- \\
        \underline{\textbf{return}} \h, $\Z = \h(\X)$
      \end{tabbing}
    \end{minipage}
  }
  \caption{Binary autoencoder MAC algorithm.}
  \label{f:BA-alg}
\end{figure}

The BA was proposed as a way to learn good binary hash functions for fast, approximate information retrieval \citep{CarreirRaziper15a}. Binary hashing \citep{GraumanFergus13a} has emerged in recent years as an effective way to do fast, approximate nearest-neighbour searches in image databases. The real-valued, high-dimensional image vectors are mapped onto a binary space with $L$ bits and the search is performed there using Hamming distances at a vastly faster speed and smaller memory (e.g.\ $N=10^9$ points with $D=500$ take 2 TB, but only 8 GB using $L=64$ bits, which easily fits in RAM). As shown by \citet{CarreirRaziper15a}, training BAs with MAC beats approximate optimisation approaches such as relaxing the codes or the step function in the encoder, and yields state-of-the-art binary hash functions \h\ in unsupervised problems, improving over established approaches such as iterative quantisation (ITQ) \citep{Gong_13a}.

In this paper, we focus on linear hash functions because these are, by far, the most used type of hash functions in the literature of binary hashing, due to the fact that computing the binary codes for a test image must be fast at run time. We also provide an experiment with nonlinear hash functions (RBF network).

\subsection{MAC with $K$ layers}
\label{s:MAC:Klayers}

We now consider the more general case of MAC with $K$ hidden layers \citep{CarreirWang12a,CarreirWang14a}, inputs \x\ and outputs \y\ (for a BA, $\x = \y$). It helps to think of the case of a deep net and we will use it as a running example, but the ideas apply beyond deep nets. Consider a regression problem of mapping inputs \x\ to outputs \y\ (both high-dimensional) with a deep net $\f(\x)$ given a dataset of $N$ pairs $(\x_n,\y_n)$. We minimise the least-squares error (other loss functions are possible):
\begin{equation}
  \label{e:nested}
  E(\W) = \frac{1}{2} \sum^N_{n=1}{\norm{\y_n - \f(\x_n;\W)}^2} \qquad \f(\x;\W) = \f_{K+1}(\dots \f_2(\f_1(\x;\W_1);\W_2)\dots;\W_{K+1})
\end{equation}
where each layer function has the form $\f_k(\x;\W_k) = \sigma(\W_k\x)$, i.e., a linear mapping followed by a squashing nonlinearity ($\sigma(t)$ applies a scalar function, such as the sigmoid $1/(1+e^{-t})$, elementwise to a vector argument, with output in $[0,1]$). We introduce one auxiliary variable per data point and per hidden unit and define the following equality-constrained optimisation problem:
\begin{equation}
  \label{e:mac}
  \frac{1}{2} \sum^N_{n=1}{\norm{\y_n - \f_{K+1}(\z_{K,n};\W_{K+1})}^2} \text{ s.t.\ }
  \renewcommand{\arraystretch}{0.5}
  \left\{
  \begin{array}{@{}l@{}}
    \z_{K,n} = \f_K(\z_{K-1,n};\W_K) \\ \dots \\ \z_{1,n} = \f_1(\x_n;\W_1)
  \end{array}
  \right\} n=1,\dots,N.
\end{equation}
Each $\z_{k,n}$ can be seen as the coordinates of $\x_n$ in an intermediate feature space, or as the hidden unit activations for $\x_n$. Intuitively, by eliminating \Z\ we see this is equivalent to the nested problem~\eqref{e:nested}; we can prove under very general assumptions that both problems have exactly the same minimisers \citep{CarreirWang12a}. Applying the quadratic-penalty method, we optimise the following function:
\begin{equation}
  \label{e:mac-quadpen}
  E_Q(\W,\Z;\mu) = \frac{1}{2} \sum^N_{n=1}{\norm{\y_n - \f_{K+1}(\z_{K,n};\W_{K+1})}^2} + \frac{\mu}{2} \sum^N_{n=1}{\sum^K_{k=1}{\norm{\z_{k,n} - \f_k(\z_{k-1,n};\W_k)}^2}}
\end{equation}
over $(\W,\Z)$ and drive $\mu \rightarrow \infty$. This defines a continuous path $(\W^*(\mu),\Z^*(\mu))$ which, under mild assumptions \citep{CarreirWang12a}, converges to a minimum of the constrained problem~\eqref{e:mac}, and thus to a minimum of the original problem~\eqref{e:nested}. In practice, we follow this path loosely. The quadratic-penalty objective function can be seen as breaking the functional dependences in the nested mapping \f\ and unfolding it over layers. Every squared term involves only a shallow mapping; all variables $(\W,\Z)$ are equally scaled, which improves the conditioning of the problem; and the derivatives required are simpler: we require no backpropagated gradients over \W, and sometimes no gradients at all. We now apply alternating optimisation of the quadratic-penalty objective over \Z\ and \W:
\begin{description}
\item[\W\ step (submodels)] Minimising over \W\ for fixed \Z\ results in a separate minimisation over the weights of each hidden unit---each a single-layer, single-unit \emph{submodel} that can be solved with existing algorithms (logistic regression).
\item[\Z\ step (coordinates)] Minimising over \Z\ for fixed \W\ separates over the coordinates $\z_n$ for each data point $n=1,\dots,N$ and can be solved using the derivatives with respect to \z\ of the single-layer functions $\f_1,\dots,\f_{K+1}$ (omitting the subindex $n$): $\smash{\min_{\z}{ \norm{\y - \f_{K+1}(\z_K)}^2 + \mu \sum^K_{k=1}{\norm{\z_k - \f_k(\z_{k-1})}^2} }}$.
\end{description}
Thus, the \W\ step results in many independent, single-layer single-unit submodels that can be trained with existing algorithms, without extra programming cost. The \Z\ step is new and has the form of a ``generalised'' proximal operator \citep{Rockaf76b,CombetPesquet11a}. MAC reduces a complex, highly-coupled problem---training a deep net---to a sequence of simple, uncoupled problems (the \W\ step) which are coordinated through the auxiliary variables (the \Z\ step). For a large net with a large dataset, this affords an enormous potential for parallel computation.

\section{ParMAC: a parallel, distributed computation model for MAC}
\label{s:ParMAC}

We now turn to the contribution of this paper, the distributed implementation of MAC algorithms. As we have seen, a specific MAC algorithm depends on the model and objective function and on how the auxiliary coordinates are introduced. We can achieve steps that are closed-form, convex, nonconvex, binary, or others. However, the following always hold: (1) In the \Z\ step, \emph{the $N$ subproblems for $\z_1,\dots,\z_N$ are independent, one per data point}. Each $\z_n$ step depends on all or part of the current model. (2) In the \W\ step, there are $M$ \emph{independent submodels}, where $M$ depends on the problem. For example, $M$ is the number of hidden units in a deep net, or the number of hash functions and linear decoders in a BA. Each submodel depends on all the data and coordinates (usually, a given submodel depends, for each $n$, on only a portion of the vector $(\x_n,\y_n,\z_n)$). We now show how to turn this into a distributed, low-communication \emph{ParMAC} algorithm, give an MPI implementation of ParMAC for BAs, and discuss the convergence of ParMAC. Throughout the paper, unless otherwise indicated, we will use the term ``machine'' to mean a single-CPU processing unit with its own local memory and disk, which can communicate with other machines in a cluster through a network or shared memory.

\subsection{Description of ParMAC}

The basic idea in ParMAC is as follows. With large datasets in distributed systems, it is imperative to minimise data movement over the network because the communication time generally far exceeds the computation time in modern architectures. In MAC we have 3 types of data: the original training data $(\X,\Y)$, the auxiliary coordinates \Z, and the model parameters (the submodels). Usually, the latter type is far smaller. \emph{In ParMAC, we never communicate training or coordinate data; each machine keeps a disjoint portion of $(\X,\Y,\Z)$ corresponding to a subset of the points. Only model parameters are communicated, during the \W\ step, following a circular topology, which implicitly implements a stochastic optimisation}. The model parameters are the hash functions \h\ and the decoder \f\ for BAs, and the weight vector $\w_h$ of each hidden unit $h$ for deep nets. Let us see this in detail (refer to fig.~\ref{f:ParMAC}).

\begin{figure}[t]
  \scriptsize
  \psfrag{data}[][][1][90]{\normalsize Data}
  \psfrag{model}[][][1][90]{\normalsize Model}
  \psfrag{P1}[t][t]{\normalsize Machine 1}
  \psfrag{P2}[t][t]{\normalsize Machine 2}
  \psfrag{P3}[t][t]{\normalsize Machine 3}
  \psfrag{P4}[t][t]{\normalsize Machine 4}
  \psfrag{wm}[][]{\normalsize $\w_h$}
  \psfrag{w01}[r][Br]{$1$}
  \psfrag{w02}[r][Br]{$2$}
  \psfrag{w03}[r][Br]{$3$}
  \psfrag{w04}[r][Br]{$4$}
  \psfrag{w05}[r][Br]{$5$}
  \psfrag{w06}[r][Br]{$6$}
  \psfrag{w07}[r][Br]{$7$}
  \psfrag{w08}[r][Br]{$8$}
  \psfrag{w09}[r][Br]{$9$}
  \psfrag{w10}[r][Br]{$10$}
  \psfrag{w11}[r][Br]{$11$}
  \psfrag{w12}[r][Br]{$12$}
  \psfrag{w13}[r][Br]{$13$}
  \psfrag{w14}[r][Br]{$14$}
  \psfrag{w15}[r][Br]{$15$}
  \psfrag{w16}[r][Br]{$16$}
  \psfrag{w17}[r][Br]{$17$}
  \psfrag{w18}[r][Br]{$18$}
  \psfrag{w19}[r][Br]{$19$}
  \psfrag{w20}[r][Br]{$20$}
  \psfrag{w21}[r][Br]{$21$}
  \psfrag{w22}[r][Br]{$22$}
  \psfrag{w23}[r][Br]{$23$}
  \psfrag{w24}[r][Br]{$24$}
  \psfrag{w25}[r][Br]{$25$}
  \psfrag{w26}[r][Br]{$26$}
  \psfrag{w27}[r][Br]{$27$}
  \psfrag{w28}[r][Br]{$28$}
  \psfrag{w29}[r][Br]{$29$}
  \psfrag{w30}[r][Br]{$30$}
  \psfrag{w31}[r][Br]{$31$}
  \psfrag{w32}[r][Br]{$32$}
  \psfrag{w33}[r][Br]{$33$}
  \psfrag{w34}[r][Br]{$34$}
  \psfrag{w35}[r][Br]{$35$}
  \psfrag{w36}[r][Br]{$36$}
  \psfrag{w37}[r][Br]{$37$}
  \psfrag{w38}[r][Br]{$38$}
  \psfrag{w39}[r][Br]{$39$}
  \psfrag{w40}[r][Br]{$40$}
  \psfrag{w41}[r][Br]{$41$}
  \psfrag{w42}[r][Br]{$42$}
  \psfrag{w43}[r][Br]{$43$}
  \psfrag{w44}[r][Br]{$44$}
  \psfrag{w45}[r][Br]{$45$}
  \psfrag{w46}[r][Br]{$46$}
  \psfrag{w47}[r][Br]{$47$}
  \psfrag{w48}[r][Br]{$48$}
  \psfrag{xn}[B][B]{\normalsize $\x_n$}
  \psfrag{yn}[B][B]{\normalsize $\y_n$}
  \psfrag{zn}[B][B]{\normalsize $\z_n$}
  \psfrag{x01}[r][Br]{$1$}
  \psfrag{x02}[r][Br]{$2$}
  \psfrag{x03}[r][Br]{$3$}
  \psfrag{x04}[r][Br]{$4$}
  \psfrag{x10}[r][Br]{$10$}
  \psfrag{x11}[r][Br]{$11$}
  \psfrag{x12}[r][Br]{$12$}
  \psfrag{x13}[r][Br]{$13$}
  \psfrag{x14}[r][Br]{$14$}
  \psfrag{x20}[r][Br]{$20$}
  \psfrag{x21}[r][Br]{$21$}
  \psfrag{x22}[r][Br]{$22$}
  \psfrag{x23}[r][Br]{$23$}
  \psfrag{x24}[r][Br]{$24$}
  \psfrag{x30}[r][Br]{$30$}
  \psfrag{x31}[r][Br]{$31$}
  \psfrag{x32}[r][Br]{$32$}
  \psfrag{x33}[r][Br]{$33$}
  \psfrag{x34}[r][Br]{$34$}
  \psfrag{x40}[r][Br]{$40$}
  \includegraphics[width=\linewidth]{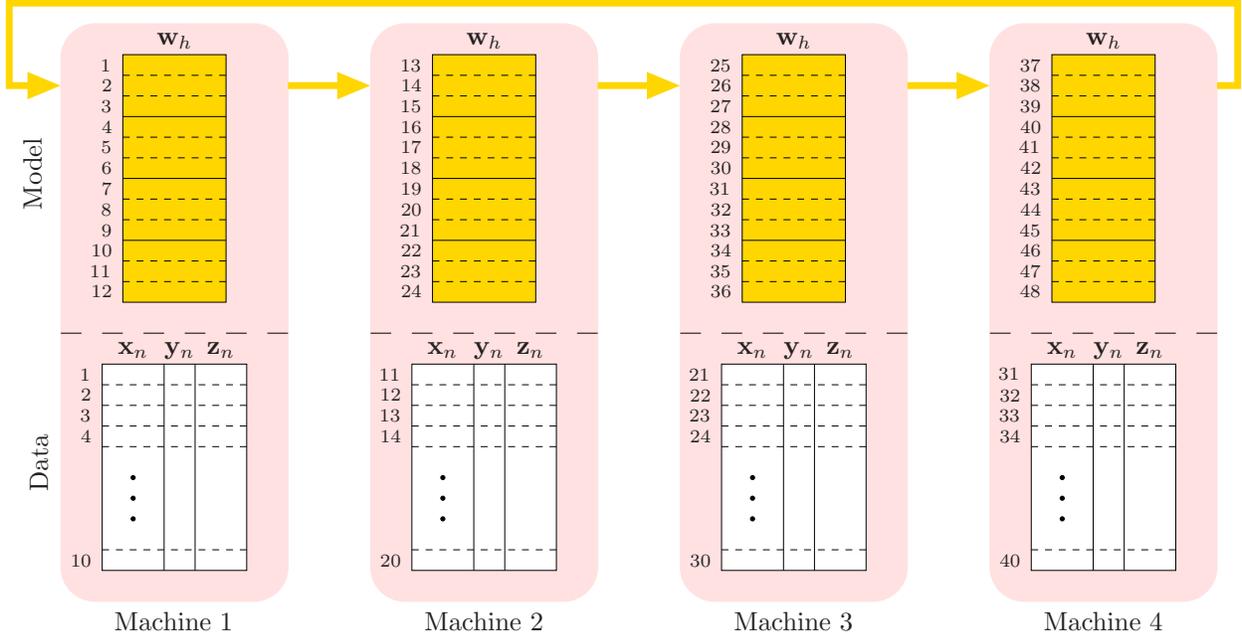}
  \caption{ParMAC model with $P=4$ machines, $M=12$ submodels and $N=40$ data points. ``$\w_h$'' represents the submodels (hash functions and decoders for BAs, hidden unit weight vectors for deep nets). Submodels $h$, $h+M$, $h+2M$ and $h+3M$ are copies of submodel $h$, but only one of them is the most currently updated. At the end of the \W\ step all copies are identical.}
  \label{f:ParMAC}
\end{figure}

Assume for simplicity we have $P$ identical processing machines, each with their own memory and CPU, which are connected through a network. The machines are connected in a circular (ring) unidirectional topology, i.e., machine $1$ $\rightarrow$ machine $2$ $\rightarrow \cdots \rightarrow$ machine $P$ $\rightarrow$ machine $1$, where ``machine $p$ $\rightarrow$ machine $q$'' means machine $p$ can send data directly to machine $q$ (and we say machine $q$ is the successor of machine $p$). Call $\calD = \{(\x_n,\y_n,\z_n)\mathpunct{:}\ n \in \{1,\dots,N\}\}$ the entire dataset and corresponding coordinates. Each machine $p$ will store a subset $\calD_p = \{(\x_n,\y_n,\z_n)\mathpunct{:}\ n \in \calI_p\}$ such that the subsets are disjoint and their union is the entire data, i.e., the index sets satisfy $\calI_p \cap \calI_q = \varnothing$ if $p \neq q$ and $\cup^P_{p=1}{\calI_p} = \{1,\dots,N\}$.

The \Z\ step is very simple. Before the \Z\ step starts%
\footnote{Also, the machines need not start all at the same time in the \Z\ step. A machine can start the \Z\ step on its data as soon as it has received all the updated submodels in the \W\ step. Likewise, as soon as a machine finishes its \Z\ step, it can start the \W\ step immediately, without waiting for all other machines to finish their \Z\ step. However, in our implementation we consider the \W\ and \Z\ steps as barriers, so that all machines start the \W\ or \Z\ step at the same time.},
each machine will contain all the (just updated) submodels. This means that in the \Z\ step each machine $p$ processes its auxiliary coordinates $\{\z_n\mathpunct{:}\ n\in\calI_p\}$ independently of all other machines, i.e., no communication occurs.

The \W\ step is more subtle. At the beginning of the \W\ step, each machine will contain all the submodels and its portion of the data and (just updated) coordinates. Each submodel must have access to the entire data and coordinates in order to update itself and, since the data cannot leave its home machine, the submodel must go to the data (this contrasts with the intuitive notion of the model sitting in a computer while data arrive and are processed). We achieve this in the circular topology as follows. We assume synchronous processing for simplicity, but in practice one would implement this asynchronously. Assume arithmetic modulo $P$ and an imaginary clock whose period equals the time that any one machine takes to process its portion $M/P$ of submodels. At each clock tick, the $P$ machines update each a different portion $M/P$ of the submodels. For example, in fig.~\ref{f:ParMAC}, at clock tick 1 machine $1$ updates submodels $1$--$3$ using its data $\calD_{\calI_1}$ (where $\calI_1 = \{1,\dots,10\}$); machine $2$ updates submodels $4$--$6$; machine $3$ updates submodels $7$--$9$; and machine $4$ updates submodels $10$--$12$. This happens in parallel. Then each machine sends the submodels updated to its successor, also in parallel. In the next tick, each machine updates the submodels it just received, i.e., machine $1$ updates $10$--$12$, machine $2$ updates submodels $1$--$3$, machine $3$ updates submodels $4$--$6$; and machine $4$ updates submodels $7$--$9$ (and each machine always uses its data portion, which never changes). This is repeated until each submodel has visited each machine and thus has been updated with the entire dataset \calD. This happens after $P$ ticks, and we call this an \emph{epoch}. This process may be repeated for $e$ epochs in $eP$ ticks. At this time, each machine contains $M/P$ submodels that are finished (i.e., updated $e$ times over the entire dataset), and the remaining $M(1-1/P)$ submodels it contains are not finished, indeed the finished versions of those submodels reside in other machines. Finally, before starting with the \Z\ step, each machine must contain all the (just updated) submodels (i.e., the parameters for the entire nested model). We achieve this%
\footnote{In MPI, this can be directly achieved with \texttt{MPI\_Alltoall} broadcasting, which scatters/gathers data from all members to all members of a group (a complete exchange). However, in this paper we implement it using the circular topology mechanism described.}
by running a final round of communication without computation, i.e., each machine sends its just updated submodels to its successor. Thus, after one clock tick, machine $p$ sends $M/P$ final submodels to machine $p+1$ and receives $M/P$ submodels from machine $p-1$. After $P-1$ clock ticks, each machine has received the remaining $M(1-1/P)$ submodels that were finished by other machines, hence each machine contains a (redundant) copy of all the current submodels. Fig.~\ref{f:ParMAC-anim} illustrates the sequence of operations during one epoch for the example of fig.~\ref{f:ParMAC}.

\begin{figure}[p]
  \psfrag{P1}{}
  \psfrag{P2}{}
  \psfrag{P3}{}
  \psfrag{P4}{}
  \psfrag{wm}{}
  \psfrag{w01}{}
  \psfrag{w02}{}
  \psfrag{w03}{}
  \psfrag{w04}{}
  \psfrag{w05}{}
  \psfrag{w06}{}
  \psfrag{w07}{}
  \psfrag{w08}{}
  \psfrag{w09}{}
  \psfrag{w10}{}
  \psfrag{w11}{}
  \psfrag{w12}{}
  \psfrag{w13}{}
  \psfrag{w14}{}
  \psfrag{w15}{}
  \psfrag{w16}{}
  \psfrag{w17}{}
  \psfrag{w18}{}
  \psfrag{w19}{}
  \psfrag{w20}{}
  \psfrag{w21}{}
  \psfrag{w22}{}
  \psfrag{w23}{}
  \psfrag{w24}{}
  \psfrag{w25}{}
  \psfrag{w26}{}
  \psfrag{w27}{}
  \psfrag{w28}{}
  \psfrag{w29}{}
  \psfrag{w30}{}
  \psfrag{w31}{}
  \psfrag{w32}{}
  \psfrag{w33}{}
  \psfrag{w34}{}
  \psfrag{w35}{}
  \psfrag{w36}{}
  \psfrag{w37}{}
  \psfrag{w38}{}
  \psfrag{w39}{}
  \psfrag{w40}{}
  \psfrag{w41}{}
  \psfrag{w42}{}
  \psfrag{w43}{}
  \psfrag{w44}{}
  \psfrag{w45}{}
  \psfrag{w46}{}
  \psfrag{w47}{}
  \psfrag{w48}{}
  \begin{tabular}{@{}c@{}c@{}c@{}c@{}c@{}}
    \psfrag{model}[][]{\caja{c}{c}{tick \\  1}}
    \includegraphics[width=\linewidth,height=0.25\linewidth]{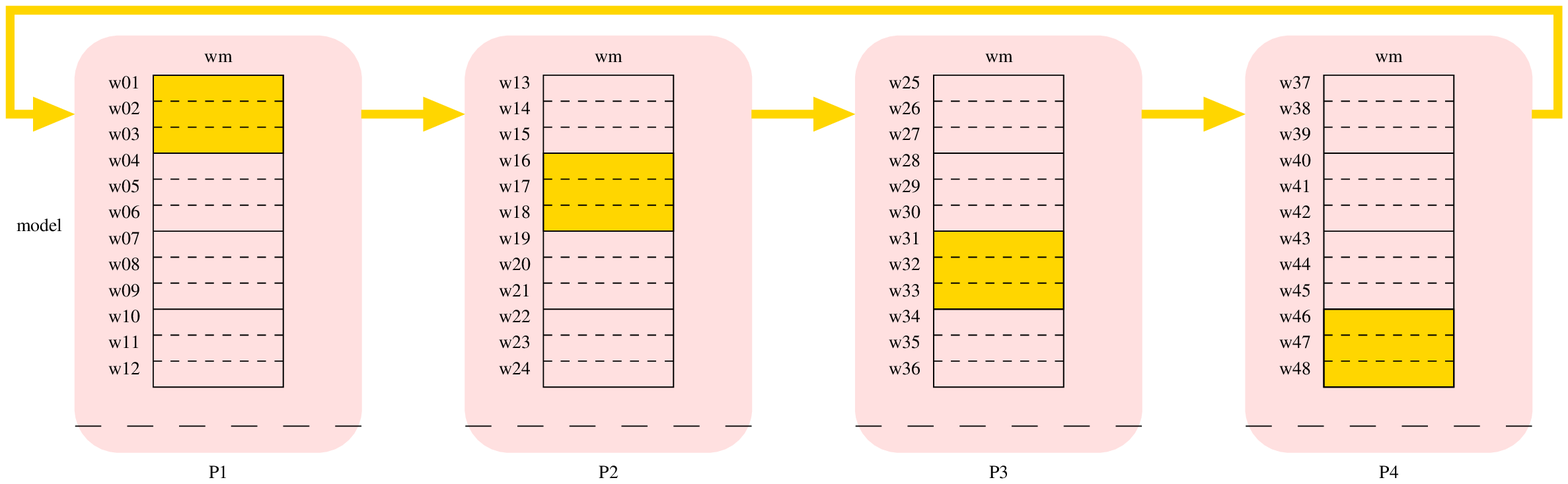} \\
    \psfrag{model}[][]{\caja{c}{c}{tick \\  2}}
    \includegraphics[width=\linewidth,height=0.25\linewidth]{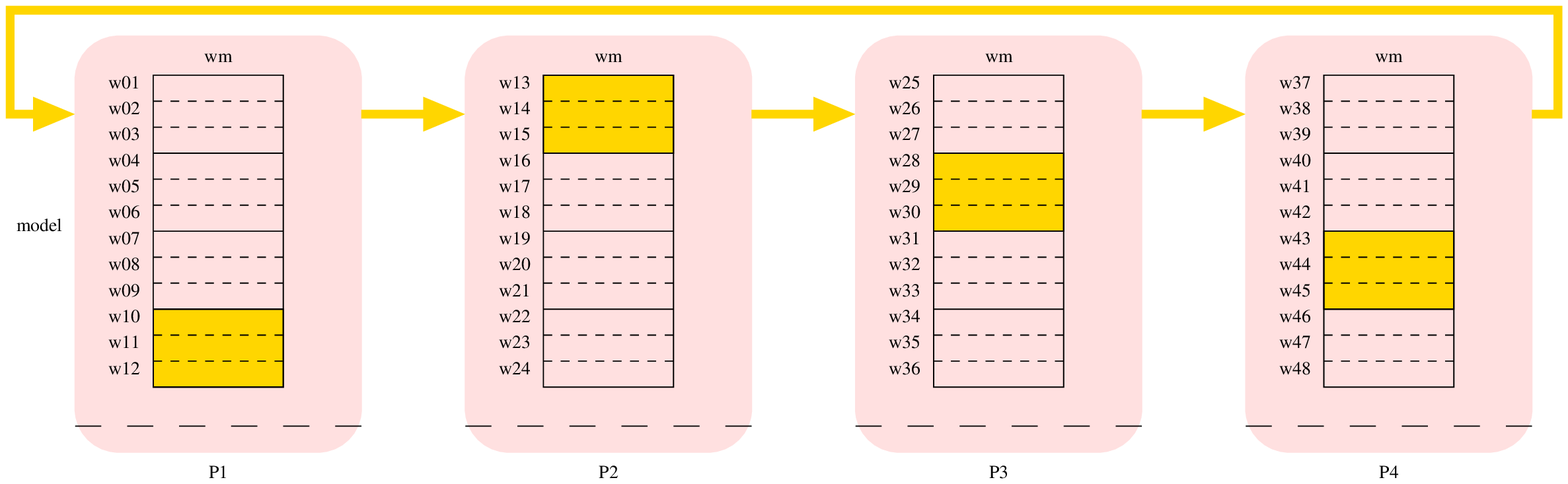} \\
    \psfrag{model}[][]{\caja{c}{c}{tick \\  3}}
    \includegraphics[width=\linewidth,height=0.25\linewidth]{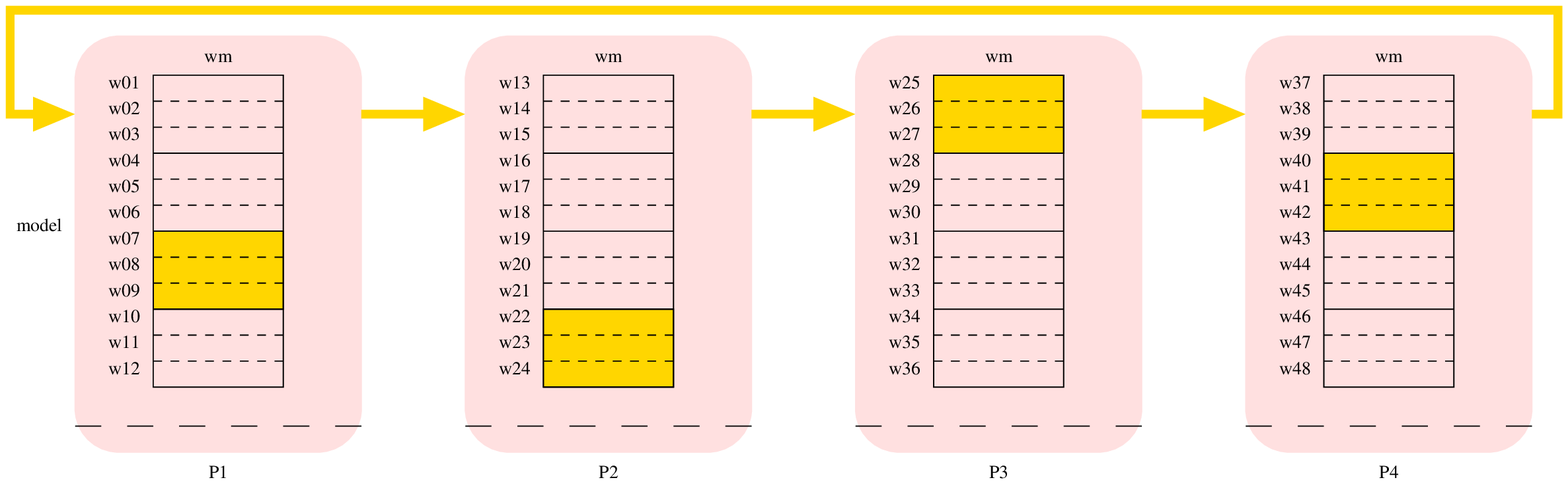} \\
    \psfrag{model}[][]{\caja{c}{c}{tick \\  4}}
    \includegraphics[width=\linewidth,height=0.25\linewidth]{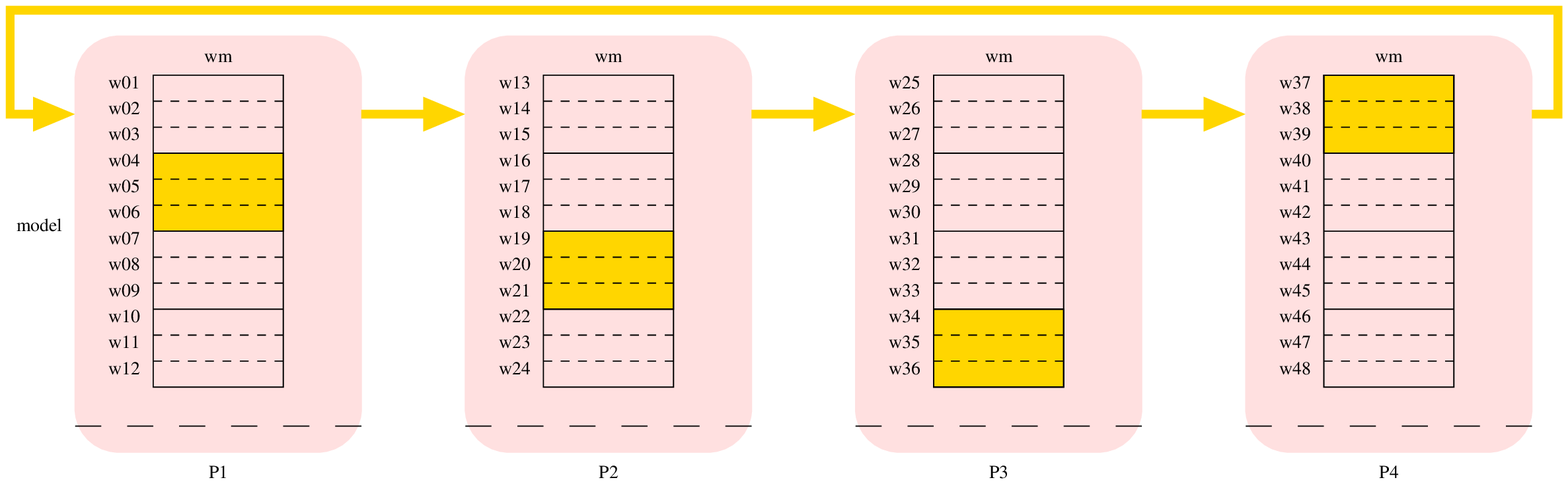} \\
    \psfrag{P1}[t][t]{\normalsize Machine 1}
    \psfrag{P2}[t][t]{\normalsize Machine 2}
    \psfrag{P3}[t][t]{\normalsize Machine 3}
    \psfrag{P4}[t][t]{\normalsize Machine 4}
    \psfrag{model}[][]{\caja{c}{c}{tick \\  5}}
    \includegraphics[width=\linewidth,height=0.25\linewidth]{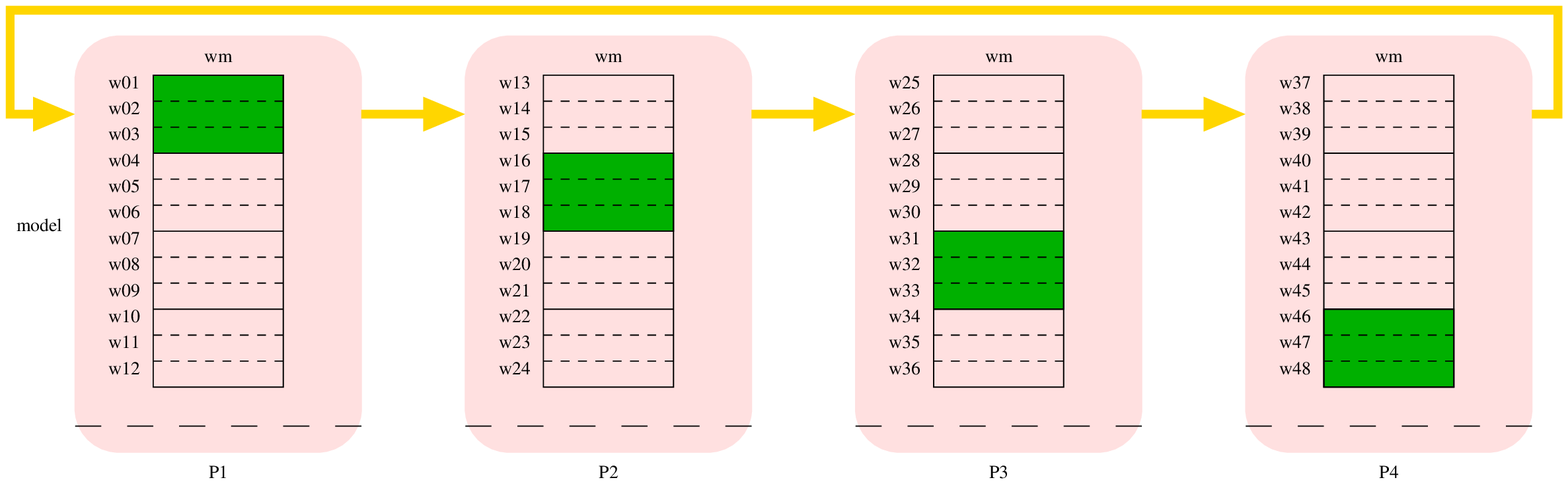}
  \end{tabular}
  \caption{Illustration of one epoch of the synchronous version of ParMAC's \W\ step for the example of fig.~\ref{f:ParMAC} with $P=4$ machines and $M=12$ submodels (we only show the ``model'' part of the figure). Each row corresponds to one clock tick, within which each machine computes on its portion of $M/P$ submodels (coloured gold) and sends them to its successor. The last tick ($= P+1$) is the start of the next epoch, at which point all submodels (coloured green) have been updated over the entire dataset.}
  \label{f:ParMAC-anim}
\end{figure}

In practice, we use an asynchronous implementation. Each machine keeps a queue of submodels to be processed, and repeatedly performs the following operations: extract a submodel from the queue, process it (except in epoch $e+1$) and send it to the machine's successor (which will insert it in its queue). If the queue is empty, the machine waits until it is nonempty. The queue of each machine is initialised with the portion of submodels associated with that machine. Each submodel carries a counter that is initially $1$ and increases every time it visits a machine. When it reaches $Pe$ then the submodel is in the last machine and the last epoch. When it reaches $P(e+1)-1$, it has undergone $e$ epochs of processing and all machines have a copy of it, so it has finished the \W\ step.

Since each submodel is updated as soon as it visits a machine, rather than computing the exact gradient once it has visited all machines and then take a step, the \W\ step is really carrying out \emph{stochastic steps for each submodel}. For example, if the update is done by a gradient step, we are actually implementing stochastic gradient descent (SGD) where the minibatches are of size $N/P$ (or smaller, if we subdivide a machine's data portion into minibatches, which should be typically the case in practice). From this point of view, we can regard the \W\ step as doing SGD on each submodel in parallel by having each submodel visit the minibatches in each machine.

In summary, using $P$ machines, ParMAC iterates as follows:
\begin{description}
\item[\W\ step] The submodels (hash functions and decoders for BAs) visit each machine. This implies we train them with stochastic gradient descent, where one ``epoch'' for a submodel corresponds to that submodel having visited all $P$ machines. All submodels are communicated in parallel, asynchronously with respect to each other, in a circular topology. With $e$ epochs, the entire model parameters are communicated $e+1$ times. The last round of communication is needed to ensure each machine has the most updated version of the model for the \Z\ step.
\item[\Z\ step] Identical to MAC, each data point's coordinates $\z_n$ are optimised independently, in parallel over machines (since each machine contains $\x_n$, $\y_n$, $\z_n$, and all the model parameters). No communication occurs at all.
\end{description}

\subsection{A \W\ step with only two rounds of communication}

As described, and as implemented in our experiments, running $e$ epochs in the \W\ step requires $e$ rounds of communication (plus a final round). However, we can run $e$ epochs with only 1 round of communication \emph{by having a submodel do $e$ consecutive passes within each machine's data}. In the example of fig.~\ref{f:ParMAC}, running $e=2$ epochs for submodel $\w_1$ means the following: instead of visiting the data as 1,\dots,10, 11,\dots,20, 21,\dots,30, 31,\dots,40, 1,\dots,10, 11,\dots,20, 21,\dots,30, 31,\dots,40, it visits the data as 1,\dots,10, 1,\dots,10, 11,\dots,20, 11,\dots,20, 21,\dots,30, 21,\dots,30, 31,\dots,40, 31,\dots,40. (We can also have intermediate schemes such as doing between 1 and $e$ within-machine passes.) This reduces the amount of shuffling, but should not be a problem if the data are randomly distributed over machines (and we can still do within-machine shuffling). This effectively reduces the total communication in the \W\ step to 2 rounds regardless of the number of epochs $e$ (with the second round needed to ensure each machine has the most updated submodels).

\subsection{Extensions of ParMAC}
\label{s:ParMAC:extensions}

In addition, the ParMAC model offers good potential for data shuffling, load balancing, streaming and fault tolerance, which make it attractive for big data. We describe these next.

\paragraph{Data shuffling}
\label{s:shuffling}

It is well known that shuffling (randomly reordering) the dataset prior to each epoch improves the SGD convergence speed. With distributed systems, this can sometimes be a problem and require data movement across machines. Shuffling is easy in ParMAC. Within a machine, we can simply access the local data (minibatches) in random order at each epoch. Across machines, we can simply reorganise the circular topology randomly (while still circular) at the beginning of each new epoch (by generating a random permutation and resetting the successor's address of each machine). We could even have each submodel follow a different, random circular topology. However, we do not implement this because it is unlikely to help (since the submodels are independent) and can unbalance the load over machines.

\paragraph{Load balancing}
\label{s:load-balancing}

This is simple because the work in both the \W\ and \Z\ steps is proportional to the number of data points $N$. Indeed, in the \W\ step each submodel must visit every data point once per epoch. So, even if the submodels differ in size, the training of any submodel is proportional to $N$. In the \Z\ step, each data point is a separate problem dependent on the current model (which is the same for all points), thus all $N$ problems are formally identical in complexity. Hence, in the assumption that the machines are identical and that each data point incurs the same runtime, load balancing is trivial: the $N$ points are allocated in equal portions of $N/P$ to each machine. If the processing power of machine $p$ is proportional to $\alpha_p > 0$ (where $\alpha_p$ could represent the clock frequency of machine $p$, say), then we allocate to machine $p$ a subset of the $N$ points proportional to $\alpha_p$, i.e., machine $p$ gets $N \alpha_p / (\alpha_1 + \dots + \alpha_P)$ data points. This is done once and for all at loading time.

In practice, we can expect some degradation of the parallel speedup even with identical machines and submodels of the same type. This is because machines do vary for various reasons, e.g.\ the runtime can be affected by differences in ventilation across machines located in different areas of a data centre, or because machines are running other user processes in addition to the ParMAC optimisation. Another type of degradation can happen if the submodels differ significantly in runtime (e.g.\ because there are different types of submodels): the runtime of the \W\ step will be driven by the slow submodels, which become a bottleneck. As discussed in section~\ref{s:speedup-th:practical}, we can group the $M$ submodels into a smaller number $M' < M$ of approximately equal-size aggregate submodels, for the purpose of estimating the speedup in theory. This need not be the fastest way to schedule the jobs, and in practice we still process the individual submodels asynchronously.

\paragraph{Streaming}
\label{s:streaming}

Streaming refers to the ability to discard old data and to add new data from training over time. This is useful in online learning, or to allow the data to be refreshed, but also may be necessary when a machine collects more data than it can store. The circular topology allows us to add or remove machines on the fly easily, and this can be used to implement streaming.

We consider two forms of streaming: (1) new data are added within a machine (e.g.\ as this machine collects new data), and likewise old data are discarded within a machine. And (2) new data are added by adding a new machine to the topology, and old data are discarded by removing an existing machine from the topology. Both forms are easily achieved in ParMAC. The first form, within-machine, is trivial: a machine can always add or remove data without any change to the system, because the data for each note is private and never interacts with other machines other than by updating submodels. Adding or discarding data is done at the beginning of the \Z\ step. Discarding data simply means removing the corresponding $\{(\x_n,\y_n,\z_n)\}$ from that machine. Adding data means inserting $\{(\x_n,\y_n)\}$ in that machine and, if necessary, creating within that machine coordinate values $\{\z_n\}$ (e.g.\ by applying the nested model to $\x_n$). We never upload or send any \z\ values over the network.

The second form, creating a new machine or removing an existing one, is barely more complicated, assuming some support from the parallel processing library. We describe it conceptually. Imagine we currently have $P$ machines. We can add a new machine, with its own preloaded data $\{(\x_n,\y_n)\}$, as follows. Adding it to the circular topology simply requires connecting it between any two machines (done by setting the address of their successor): before we have ``machine $p$ $\rightarrow$ machine $p+1$'', afterwards we have ``machine $p$ $\rightarrow$ new machine $\rightarrow$ machine $p+1$''. We add it in the \W\ step, making sure it receives a copy of the final model that has just been finished. The easiest way to do this is by inserting it in the topology at the end of the \W\ step, when each machine is simply sending along a copy of the final submodels. In the \Z\ step, we proceed as usual, but with $P+1$ machines. Removing a machine is easier. To remove machine $p$, we do so in the \Z\ step, by reconnecting ``machine $p-1$ $\rightarrow$ machine $p+1$'' and returning machine $p$ to the cluster. That is all. In the subsequent \W\ step, all machines contain the full model, and the submodels will visit the data in each machine, thus not visiting the data in the removed machine.

\paragraph{Fault tolerance}
\label{s:fault}

This situation is similar to discarding a machine in streaming, except that the fault can occur at any time and is not intended. We can handle it with a little extra bookkeeping, and again assuming some support from the parallel processing library. Imagine a fault occurs at machine $p$ and we need to remove it. If it happens during the \Z\ step, all we need to do is discard the faulty machine and reconnect the circular topology. If it happens during the \W\ step, we also discard and reconnect, but in addition we need to rescue the submodels that were being updated in $p$, which we lose. To do this, we revert to the previously updated copy of them, which resides in the predecessor of $p$ in the circular topology (if no predecessor, we are at the beginning of the \W\ step and we can use any copy in any machine). As for the remaining submodels being updated in other machines, some will have already been updated in $p$ (which require no action) and some will not have been updated in $p$ yet (which should not visit $p$ anymore). We can keep track of this information by tagging each submodel with a list of the machines it has not yet visited. At the beginning of the \W\ step the list of each submodel contains $\{1,\dots,P\}$, i.e., all machines. When this list is empty, for a submodel, then that submodel is finished and needs no further updates. 

Essentially, the robustness of ParMAC to faults comes from its in-built redundance. In the \Z\ (and \W) step, we can do without the data points in one machine because a good model can still be learned from the remaining data points in the other machines. In the \W\ step, we can revert to older copies of the lost submodels residing in other machines.

The asynchronous implementation of ParMAC we described earlier relied on tagging each submodel with a counter in order to know whether it needs processing and communicating. A more general mechanism to run ParMAC asynchronously is to tag each submodel with a list (per epoch) of machines it has to visit. All a machine $p$ needs to do upon receiving a submodel is check its list: if $p$ is not in the list, then the submodel has already visited machine $p$ and been updated with its data, so machine $p$ simply sends it along to its successor without updating it again. If $p$ is in the list, then machine $p$ updates the submodel, removes $p$ from its list, and sends it along to its successor. This works even if we use a different communication topology for each submodel at each epoch.

\section{A theoretical model of the parallel speedup for ParMAC}
\label{s:speedup-th}

In this section we give a theoretical model to estimate the computation and communication times and the parallel speedup in ParMAC. Specifically, eq.~\eqref{e:speedup} gives the speedup $S(P)$ as a function of the number of machines $P$ and other parameters, which seems to agree well with our experiments (section~\ref{s:expts:speedup}). In practice, this model can be used to estimate the optimal number of machines $P$ to use, or to explore the effect on the speedup of different parameter settings (e.g.\ the number of submodels $M$). Throughout the rest of the paper, we will call ``speedup'' $S(P)$ the ratio of the runtime using a single machine (i.e., the serial code) vs using $P > 1$ machines (the parallel code), and ``perfect speedup'' when $S(P) = P$. Our theoretical model applies to the general ParMAC case of $K$ layers, whether differentiable or not; it only assumes that the resulting submodels after introducing auxiliary coordinates are of the same ``size,'' i.e., have the same computation and communication time (this assumption can be relaxed, as we discuss at the end of the section).

We can obtain a quick, rough understanding of the speedup appealing to (a generalisation of) Amdahl's law \citep{GoedecHoisie01a}. ParMAC iterates the \W\ and \Z\ steps as follows (where $M$ is the number of submodels and $N$ the number of data points):
\begin{center}
  \psfrag{W}[][B]{\caja{c}{c}{\W\ step: \\ $M$ problems}}
  \psfrag{Z}[][B]{\caja{c}{c}{\Z\ step: \\ $N \gg M$ problems}}
  \includegraphics[width=0.7\linewidth]{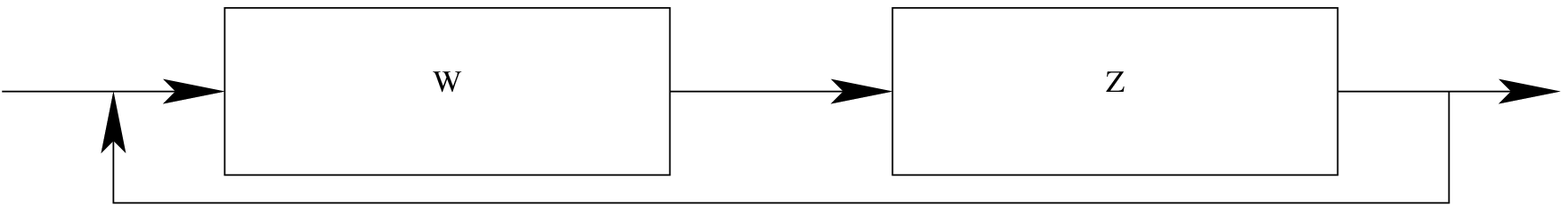}
\end{center}
Roughly speaking, the \W\ step has $M$ independent problems so its speedup would be $\min(M,P)$, while the \Z\ step has $N$ independent problems so its speedup would be $\min(N,P) = P$ (because in practice $N \ge P$). So the overall speedup would be between $M$ and $P$ depending on the relative runtimes of the \W\ and \Z\ steps. This suggests we would expect a nearly perfect speedup $S \approx P$ with $P \le M$ and diminishing returns for $P > M$. This simplified picture ignores important factors such as the ratio of computation vs communication (which our model will make more precise), but it does capture the basic, qualitative behaviour of the speedup.

\subsection{The theoretical model of the speedup}

Let us now develop a more precise, quantitative model. Consider a ParMAC algorithm, operating synchronously, such that there are $M$ independent submodels of the same size in the \W\ step, on a dataset with $N$ training points, distributed over $P$ identical machines (each with $N/P$ points). The ParMAC algorithm runs a certain number of iterations, each consisting of a \W\ and a \Z\ step, so if we ignore small overheads (setup and termination), we can estimate the total runtime as proportional to the number of iterations. Hence, we consider a theoretical model of the runtime of one iteration of the ParMAC algorithm, given the following parameters:
\begin{itemize}
\item $P$: number of machines.
\item $N$: number of training points. \\
  We assume $N > P$ is divisible by $P$. This is not a problem because $N \gg P$ in practice (otherwise, there would be no reason to distribute the optimisation).
\item $M$: number of submodels in the \W\ step. \\
  This may be smaller than, equal to or greater than $P$.
\item $e$: number of epochs in the \W\ step.
\item $t^{\W}_r$: computation time per submodel and data point in the \W\ step. \\
  This is the time to process (within the current epoch) one data point by a submodel, i.e., the time do an SGD update to a weight vector, per data point (if we use minibatches, then this is the time to process one minibatch divided by the size of the minibatch).
\item $t^{\W}_c$: communication time per submodel in the \W\ step. \\
  This is the time to send one submodel from one machine to another, including overheads such as buffering, partitioning into messages or waiting time. We assume communication does not overlap with computation, i.e., a machine can either compute or communicate at a given time but not both. Also, communication involves time spent both by the sender and the receiver; we interpret $t^{\W}_c$ as the time spent by a given machine in first receiving a submodel and then sending it.
\item $t^{\Z}_r$: computation time per data point in the \Z\ step. \\
  This is the time to finish one data point entirely, using whatever optimisation algorithm performs the \Z\ step.
\end{itemize}
$P$, $N$, $M$ and $e$ are integers greater or equal than 1, and $t^{\W}_r$, $t^{\W}_c$ and $t^{\Z}_r$ are real values greater than 0. This model assumes that $t^{\W}_r$, $t^{\W}_c$ and $t^{\Z}_r$ are constant and equal for every submodel or and data point. In reality, even if the submodels are of the same mathematical form and dimension (e.g.\ each submodel is a weight vector of a linear SVM of dimension $D$), the actual times may vary somewhat due to many factors. However, as we will show in section~\ref{s:expts:speedup}, the model does agree quite well with the experimentally measured speedups.

Let us compute the runtimes in the \W\ and \Z\ step under these model assumptions. The runtime in the \Z\ step equals the time for any one machine to process its $N/P$ points on all $M$ submodels, i.e.,
\begin{equation}
  \label{e:runtime-Z}
  \textstyle T^{\Z}(P) = M \frac{N}{P} t^{\Z}_r
\end{equation}
since all machines start and end at the same time and do the same amount of computation, without communication. To compute the runtime in the \W\ step, we again consider the synchronous procedure of section~\ref{s:ParMAC}. At each tick of an imaginary clock, each machine processes its portion $M/P$ of submodels and sends it to its successor. After $P$ ticks, this concludes one epoch. This is repeated for $e$ epochs, followed by a final round of communication of all the submodels. If $M$ is not divisible by $P$, say $M = Q P + R$ with $Q,R \in \bbN$ and $0 < R < P$, we can apply this procedure pretending there are $P-R$ fictitious submodels%
\footnote{This means that our estimated runtime is an upper bound, because when $M$ is not divisible by $P$, there may be a better way to organise the computation in the \W\ step that reduces the time when any machine is idle. In practice this is irrelevant because we implement the computation asynchronously. Each machine keeps a queue of incoming submodels it needs to process, from which it repeatedly takes one submodel, processes it and sends it to the machine's successor.}
(on which machines do useless work). Then, the runtime in each tick is $\ceil{M/P} \frac{N}{P} t^{\W}_r$ (time for any one machine to process its $N/P$ points on its portion $\ceil{M/P}$ of submodels) plus $\ceil{M/P} t^{\W}_c$ (time for any one machine to send its portion of submodels). The total runtime of the \W\ step is then $Pe$ times this plus the time of the final round of computation:
\begin{equation}
  \label{e:runtime-W}
  \textstyle T^{\W}(P) = \ceil{M/P} \left( t^{\W}_r \frac{N}{P} + t^{\W}_c \right) P e + \ceil{M/P} t^{\W}_c P.
\end{equation}
(The final round actually requires $P-1$ ticks, but we take it as $P$ ticks to simplify the equation a bit.) Finally, the total runtime $T^{\W}(P) + T^{\Z}(P)$ of one ParMAC iteration (\W\ and \Z\ step) with $P$ machines is:
\begin{align}
  \label{e:runtime}
  T(P) &= \textstyle M \frac{N}{P} t^{\Z}_r + P \ceil{M/P} \left( e \left( t^{\W}_r \frac{N}{P} + t^{\W}_c \right) + t^{\W}_c \right),\ P > 1 \\
  T(1) &= M N t^{\Z}_r + M N e t^{\W}_r
\end{align}
where for $P=1$ machine we have no communication ($t^{\W}_c = 0$). Hence, the parallel speedup is
\begin{equation}
  \label{e:speedup1}
  S(P) = \frac{T(1)}{T(P)} = \frac{M/P}{\ceil{M/P}} \frac{e t^{\W}_r + t^{\Z}_r}{\frac{1}{P} \left(e t^{\W}_r + \frac{M/P}{\ceil{M/P}} t^{\Z}_r \right) + \frac{1}{N} (e+1) t^{\W}_c} = \frac{\frac{1}{\ceil{M/P}} M (e t^{\W}_r + t^{\Z}_r) P}{\frac{1}{N} (e+1) t^{\W}_c P^2 + e t^{\W}_r P + \frac{1}{\ceil{M/P}} M t^{\Z}_r}
\end{equation}
which can be written more conveniently as
\begin{equation}
  \label{e:speedup}
  S(P) = \frac{\rho \frac{1}{\ceil{M/P}} M P}{\frac{1}{N} P^2 + \rho_2 P + \rho_1 \frac{1}{\ceil{M/P}} M}
\end{equation}
by defining the following constants:
\begin{equation}
  \label{e:ratio-comp-comm}
  \rho_1 = \frac{t^{\Z}_r}{(e+1) t^{\W}_c} \qquad \rho_2 = \frac{e t^{\W}_r}{(e+1) t^{\W}_c} \qquad \rho = \rho_1 + \rho_2 = \frac{e t^{\W}_r + t^{\Z}_r}{(e+1) t^{\W}_c}.
\end{equation}
These constants can be understood as ratios of computation vs communication, independent of the training set size, number of submodels and number of machines. These ratios depend on the actual computation within the \W\ and \Z\ step, and on the performance of the distributed system (computation power of each machine, communication speed over the network or shared memory, efficiency of the parallel processing library that handles the communication between machines). The value of these ratios can vary considerably in practice, but it will typically be quite smaller than 1 (say, $\rho \in [10^{-4},1]$), because communication is much slower than computation in current computer architectures.

\subsection{Analysis of the speedup model}

We can characterise the speedup $S(P)$ of eq.~\eqref{e:speedup} in the following three cases:
\begin{itemize}
\item \emph{If $M \ge P$ and $M$ is divisible by $P$}, then we can write the speedup as follows:
  \begin{equation}
    \label{e:speedup-divisible}
    \text{if $M$ divisible by $P$:} \quad S(P) = 1 / \left( \frac{1}{P} + \frac{1}{\rho N} \right) = P / \left( 1 + \frac{P}{\rho N} \right) \le P.
  \end{equation}
  Here, the function $S(P)$ is independent of $M$ and monotonically increasing with $P$. It would asymptote to $\lim_{P\rightarrow\infty}{S(P) = \rho N}$, but the expression is only valid up to $P=M$. From~\eqref{e:speedup-divisible} we derive the following condition for perfect speedup to occur (in the limit)%
\footnote{Note that if $t^{\W}_c = 0$ (no communication overhead) then $\rho = \infty$ and there is no upper bound in~\eqref{e:speedup-divisible-perfect}, but $P \le N$ still holds, because we have to have at least one data point per machine.}:
  \begin{equation}
    \label{e:speedup-divisible-perfect}
    S \approx P \Longleftrightarrow P \ll \rho N.
  \end{equation}
  This gives an upper bound on the number $P$ of machines to achieve an approximately perfect speedup. Although $\rho$ is quite small in practice, the value of $N$ is very large (typically millions or greater), otherwise there would be no need to distribute the data. Hence, we expect $\rho N \gg 1$, so $P$ could be quite large. In fact, the limit in how large $P$ can be does not come from this condition (which assumes $P \le M$ anyway) but from the number of submodels, as we will see next. \\
  In summary, we conclude that if $M \ge P$ and $M$ is divisible by $P$ then the speedup $S(P)$ is given by~\eqref{e:speedup-divisible}, and in practice $S(P) \approx P$ typically.
\item \emph{If $M \ge P$ and $M$ is not divisible by $P$}, then $S(P)$ is given by the full expression~\eqref{e:speedup}, which is studied in appendix~\ref{s:speedup-app}. $S(P)$ is piecewise continuous on $M$ intervals of the form
  \begin{equation}
    \label{e:speedup-intervals}
    \textstyle\big[1,\frac{M}{M-1}\big),\ \big[\frac{M}{M-1},\frac{M}{M-2}\big),\ \dots,\ \big[\frac{M}{2},M\big),\ [M,\infty).
  \end{equation}
  Within each interval $P \in \big[\frac{M}{k},\frac{M}{k-1}\big)$ for $k = 1,2,3\dots,M$ we have $\ceil{M/P} = k$ and we obtain that $S(P)$ either is monotonically increasing, or is monotonically decreasing, or achieves a single maximum at
  \begin{equation}
    \label{e:speedup-max}
    P^*_k = \sqrt{\rho_1 M N / k} \qquad S^*_k = S(P^*_k) = \frac{\rho M / k}{\rho_2 + 2 \sqrt{\rho_1 M / N k}}.
  \end{equation}
  The parallelisation ability in this case is less than if $M$ is divisible by $P$, since now some machines are idle at some times during the \W\ step.
\item \emph{If $M < P$}, then we can write the speedup as follows:
  \begin{equation}
    \label{e:speedup-largeP}
    \text{if $M < P$:} \quad S(P) = \rho / \left( \frac{\rho_1}{P} + \frac{\rho_2}{M} + \frac{P}{M N} \right) = \rho M / \left( \rho_2 + \rho_1 \frac{M}{P} + \frac{P}{N} \right)
  \end{equation}
  which corresponds to the last interval $P \in [M,\infty)$ (for $k=1$) over which $S$ is continuous. We obtain that $S(P)$ either is monotonically decreasing (if $M \ge P^*_1$), or it increases from $P = M$ up to a single maximum at $P = P^*_1$ and then decreases monotonically, with
  \begin{equation}
    \label{e:speedup-largeP:max}
    P^*_1 = \sqrt{\rho_1 M N} \qquad S^*_1 = S(P^*_1) = \frac{\rho M}{\rho_2 + 2 \sqrt{\rho_1 M / N}}.
  \end{equation}
  As $P \rightarrow \infty$ we have that $S(P) \approx \rho N M / P \rightarrow 0$ (assuming $t^{\W}_c > 0$ so $\rho < \infty$). This decrease of the speedup for large $P$ is caused by the communication overhead in the \W\ step, where $P - M$ machines are idle at each tick in the \W\ step. \\
  In the impractical case where there is no communication cost ($t^{\W}_c = 0$ so $\rho = \infty$) then $S(P)$ is actually monotonically increasing and $\lim_{P \rightarrow \infty}{S(P)} = S^*_1 = \frac{\rho}{\rho_2} M > M$, so the more machines the larger the speedup, although with diminishing returns.
\end{itemize}
Theorem~\ref{th:speedup-charact} shows that $S(P)$ at the beginning of each interval is greater than anywhere before that interval, i.e., $S(M/k) > S(P)$ $\forall P < M/k$, for $k=1,2,\dots,M$. That is, although the speedup $S(P)$ is not necessarily monotonically increasing for $P \ge 1$, it is monotonically increasing for $P \in \big\{1,\frac{M}{M-1},\frac{M}{M-2},\dots,\frac{M}{2},M\big\}$. This suggests selecting values of $P$ that make $M/P$ integer, in particular when $M$ is divisible by $P$.

\paragraph{Globally maximum speedup $S^* = \max_{P \ge 1}{S(P)}$}

This is given by (see appendix~\ref{s:speedup-app}):
\begin{itemize}
\item If $M \ge \rho_1 N$: $S^* = M / \left( 1 + \frac{M}{\rho N} \right) \le M$, achieved at $P = M$.
\item If $M < \rho_1 N$: $S^* = S^*_1 = \frac{\rho M}{\rho_2 + 2 \sqrt{\rho_1 M / N}} > M$, achieved at $P = P^*_1 = \sqrt{\rho_1 M N} > M$.
\end{itemize}
In practice, with large values of $N$, the more likely case is $S^* = S^*_1 > M$ for $P = P^*_1 > M$. In this case, the maximum speedup is achieved using more machines than submodels (even though this means some machines will be idle at some times in the \W\ step), and is bigger than $M$. Since diminishing returns occur as we approach the maximum, the practically best value of $P$ will be somewhat smaller than $P^*_1$.

\paragraph{The ``large dataset'' case}

The case where $N$ is large is practically important because the need for distributed optimisation arises mainly from this. Specifically, if we take $P \ll \rho_2 N$, the speedup becomes (see appendix~\ref{s:speedup-app}):
\begin{equation}
  \label{e:speedup-largeN}
  \text{if $M$ divisible by $P$:} \quad S(P) \approx P; \qquad \text{if $M > P$:} \quad S(P) \approx \rho / \left( \frac{\rho_1}{P} + \frac{\rho_2}{M} \right)
\end{equation}
so that the speedup is almost perfect up to $P = M$, and then it is approximately the weighted harmonic mean of $M$ and $P$ (hence, $S(P)$ is monotonically increasing and between $M$ and $P$). For $P \gg \rho_1$, we have $S(P) \approx \frac{\rho}{\rho_2} M > M$.

\paragraph{The ``dominant \Z\ step'' case}

If we take $t^{\Z}_r \gg t^{\W}_r, t^{\W}_c$ or equivalently $\rho \approx \rho_1$ very large, which means the \Z\ step dominates the runtime, then $S(P) \approx P$. This is because the \Z\ step parallelises perfectly (as long as $P < N$).

\paragraph{Transformations that keep the speedup invariant}

We can rewrite the speedup of eq.~\eqref{e:speedup} as:
\begin{equation}
  \label{e:speedup-indepN}
  S(P) = \frac{\rho' \frac{1}{\ceil{M/P}} M P}{P^2 + \rho'_2 P + \rho'_1 \frac{1}{\ceil{M/P}} M}
\end{equation}
with
\begin{equation}
  \label{e:ratio-comp-comm2}
  \rho' = \rho N = (\rho_1+\rho_2) N = \frac{N(e t^{\W}_r + t^{\Z}_r)}{(e+1) t^{\W}_c} \qquad \rho'_1 = \rho_1 N = \frac{N t^{\Z}_r}{(e+1) t^{\W}_c} \qquad \rho'_2 = \rho_2 N = \frac{N e t^{\W}_r}{(e+1) t^{\W}_c}
\end{equation}
so that $S(P)$ is independent of $N$, which has been absorbed into the communication-computation ratios. This means that $S(P)$ depends on the dataset size ($N$) and computation/communication times ($t^{\W}_r$, $t^{\Z}_r$, $t^{\W}_c$) only through $\rho'$, $\rho'_1$ and $\rho'_2$, and is therefore invariant to parameter transformations that leave these ratios unchanged. Such transformations are the following (where $\alpha > 0$):
\begin{itemize}
\item Scaling $N$, $t^{\W}_r$ and $t^{\Z}_r$ as $\alpha N$, $\frac{1}{\alpha} t^{\W}_r$ and $\frac{1}{\alpha} t^{\Z}_r$. \\
  ``Larger dataset, faster computation,'' or ``smaller dataset, slower computation.''
\item Scaling $N$ and $t^{\W}_c$ as $\alpha N$ and $\alpha t^{\W}_c$ \\
  ``Larger dataset, slower communication,'' or ``smaller dataset, faster communication.''
\item Scaling $t^{\W}_r$, $t^{\Z}_r$ and $t^{\W}_c$ as $\alpha t^{\W}_r$, $\alpha t^{\Z}_r$ and $\alpha t^{\W}_c$ \\
  ``Faster computation, faster communication,'' or ``slower computation, slower communication.''
\end{itemize}

\subsection{Discussion and examples}

\begin{figure}[b!]
  \centering
  \psfrag{processor}[t][]{number of machines $P$}
  \psfrag{speedup}[][t]{speedup $S(P)$}
  \psfrag{PleM}[b][b]{\caja[1.5]{c}{c}{$P \le M$: \\ $S(P) = \displaystyle\frac{P}{1 + \frac{P}{\rho N}} \approx P$}}
  \psfrag{MleP}[t][t]{\caja[1.5]{c}{c}{$P > M$: \\ $S(P) = \displaystyle\frac{\rho}{\frac{\rho_1}{P} + \frac{\rho_2}{M} + \frac{P}{M N}} \approx \frac{\rho}{\frac{\rho_1}{P} + \frac{\rho_2}{M}}$}}
  \psfrag{maxS}[t][t]{$S(P^*_1)$}
  \begin{tabular}[c]{@{}c@{}}
    \includegraphics[width=\linewidth]{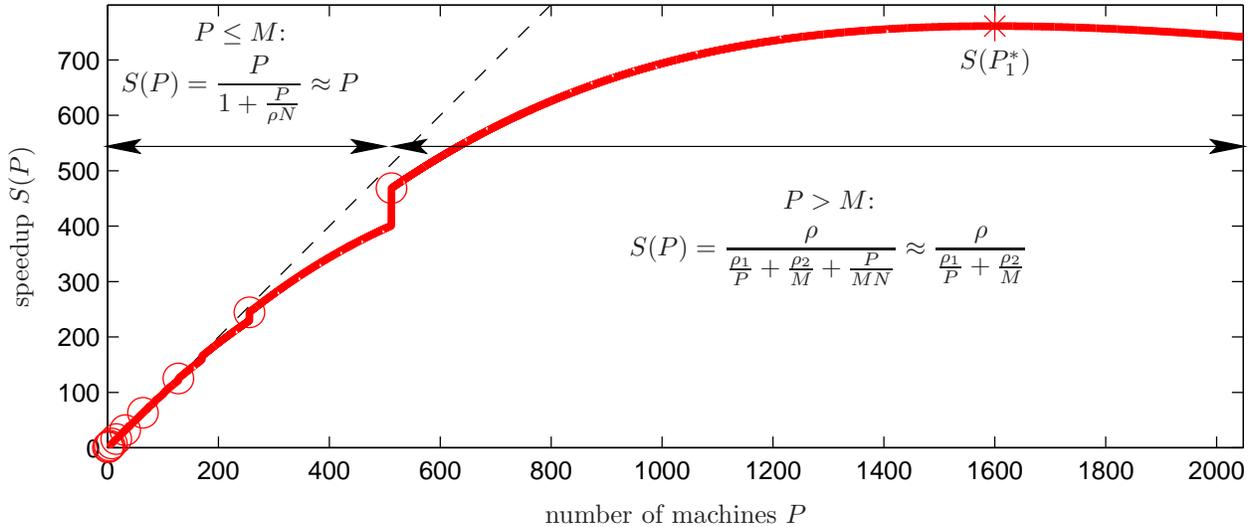}
  \end{tabular}
  \caption{Typical form of the theoretical speedup curve for realistic parameter settings, specifically $N = 10^6$ data points, $M = 512$ submodels, $e=1$ epoch in the \W\ step, and $t^{\W}_r = 1$ (this sets the units of time), $t^{\Z}_r = 5$ and $t^{\W}_c = 10^3$ (so $\rho_1 = 0.0025$, $\rho_2 = 0.0005$ and $\rho = 0.003$). Some of the discontinuities of the curve (where $\ceil{M/P}$ is discontinuous) are visible. We mark the values for $P$ such that $M$ is divisible by $P$ ($\circ$) and the maximum speedup ($\ast$), which occurs for $P = P^*_1 > M$.}
  \label{f:speedup-typical}
\end{figure}

Fig.~\ref{f:speedup-typical} plots a ``typical'' speedup curve $S(P)$, obtained with a realistic choice of parameter values. It displays the prototypical speedup shape we should expect in practice (the experimental speedups of fig.~\ref{f:speedup} confirm this). For $P \le M$ the curve is very close to the perfect speedup $S(P) = P$, slowly deviating from it as $P$ approaches $M$. For $P > M$, the curve continues to increase until it reaches its maximum at $P = P^*_1$, and decreases thereafter.

\begin{figure}[p]
  \centering
  \psfrag{processor}{}
  \begin{tabular}{@{}c@{\hspace{0.05\linewidth}}c@{\hspace{0.05\linewidth}}c@{}}
    & $e = 1$ epoch & $e = 8$ epochs \\
    \raisebox{0.16\linewidth}{\caja{c}{l}{$t^{\W}_c = 1$ \\ $t^{\Z}_r = 1$}} &
    \psfrag{speedup}[][t]{speedup $S(P)$}
    \includegraphics[height=0.30\linewidth]{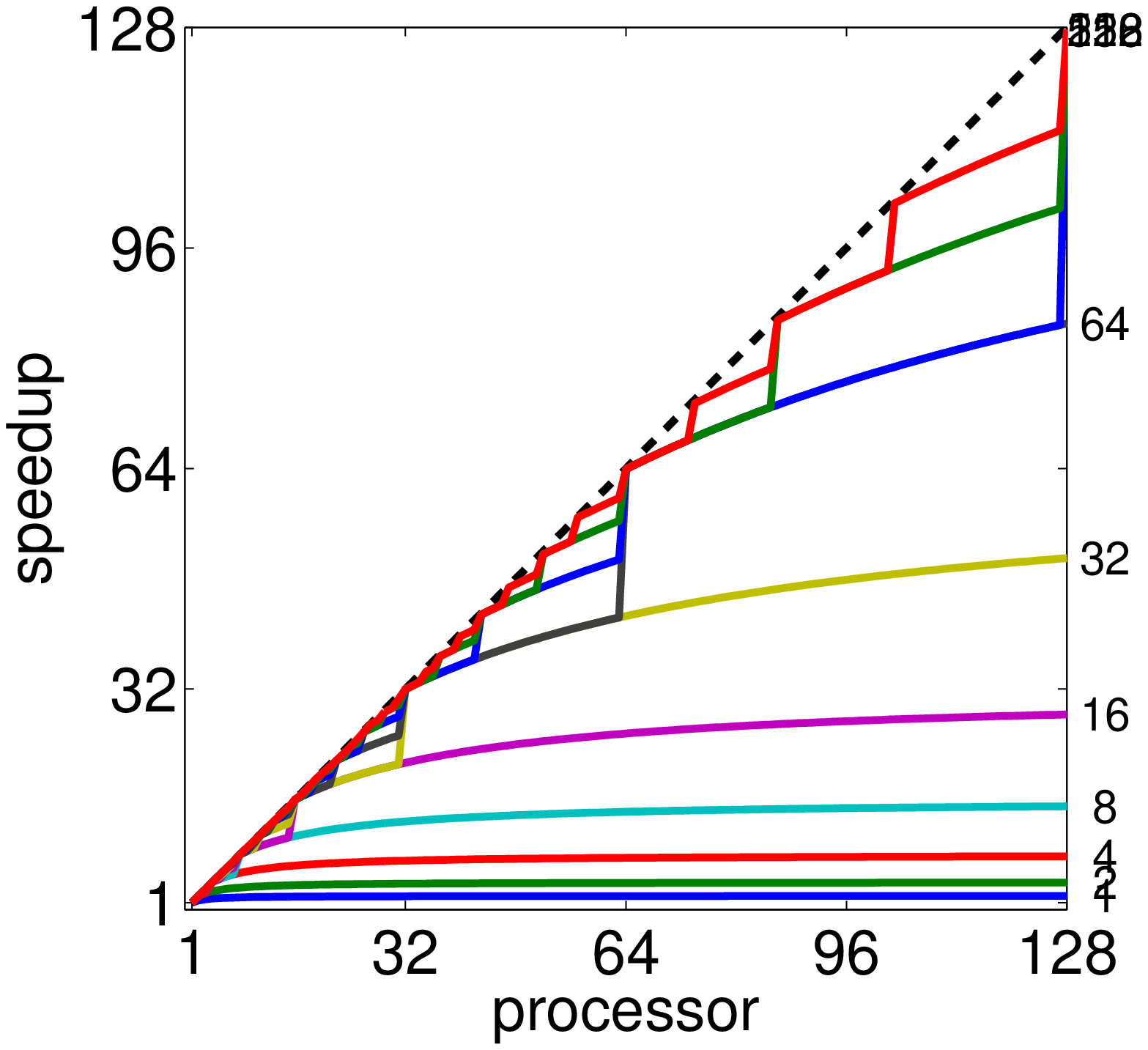} &
    \psfrag{speedup}{}
    \includegraphics[height=0.30\linewidth]{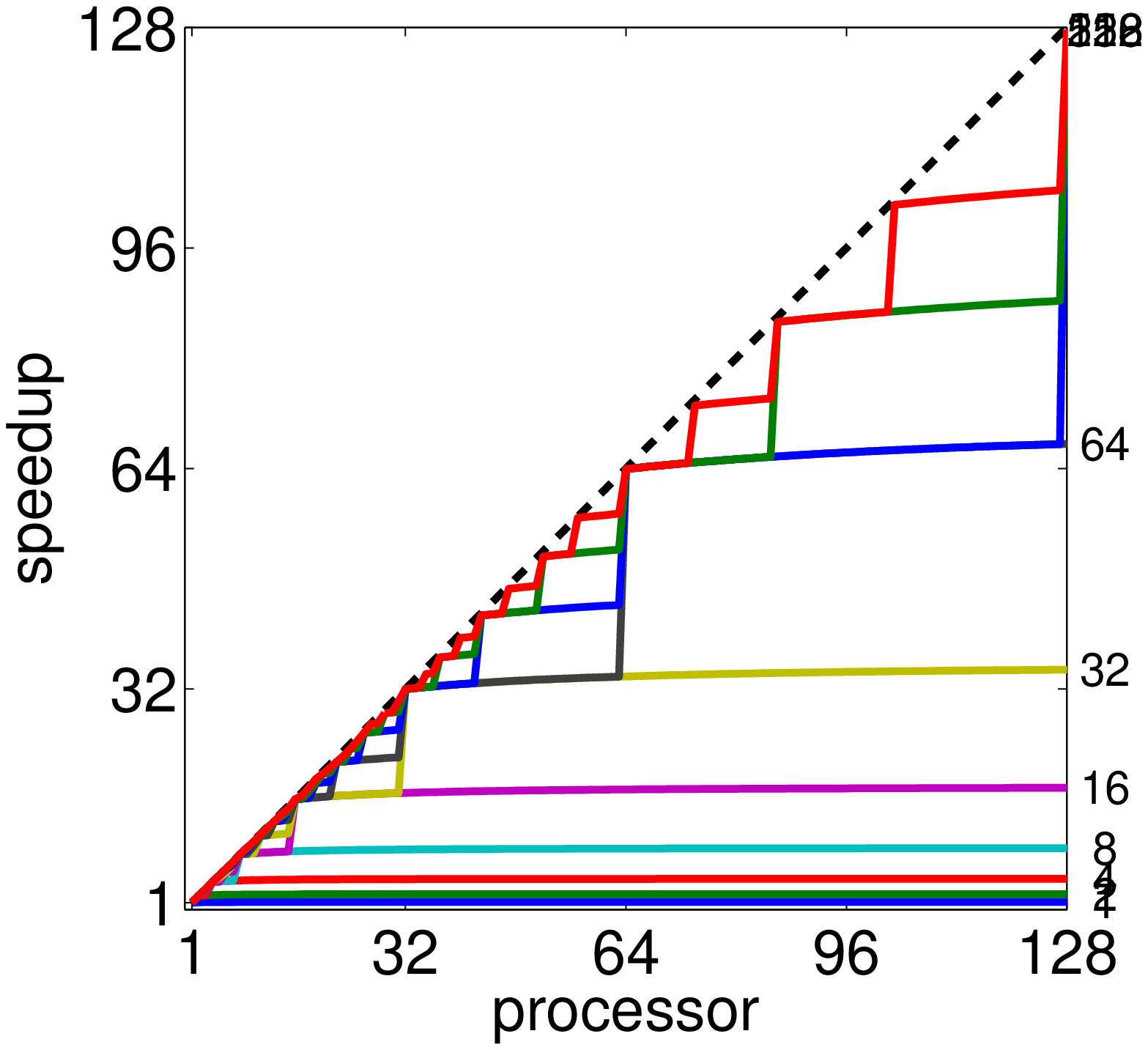} \\[-1ex]
    \raisebox{0.16\linewidth}{\caja{c}{l}{$t^{\W}_c = 1$ \\ $t^{\Z}_r = 100$}} &
    \psfrag{speedup}[][t]{speedup $S(P)$}
    \includegraphics[height=0.30\linewidth]{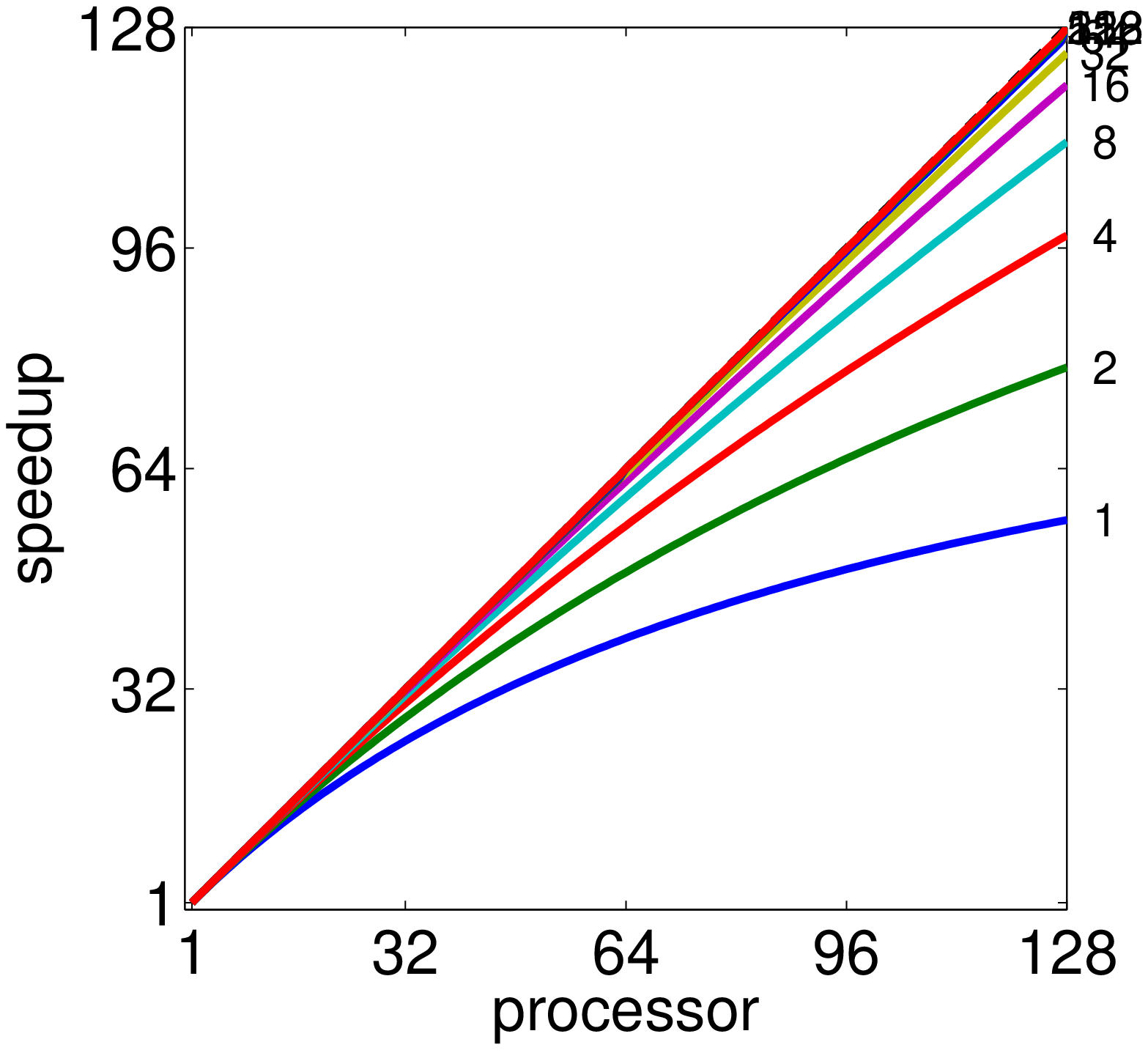} &
    \psfrag{speedup}{}
    \includegraphics[height=0.30\linewidth]{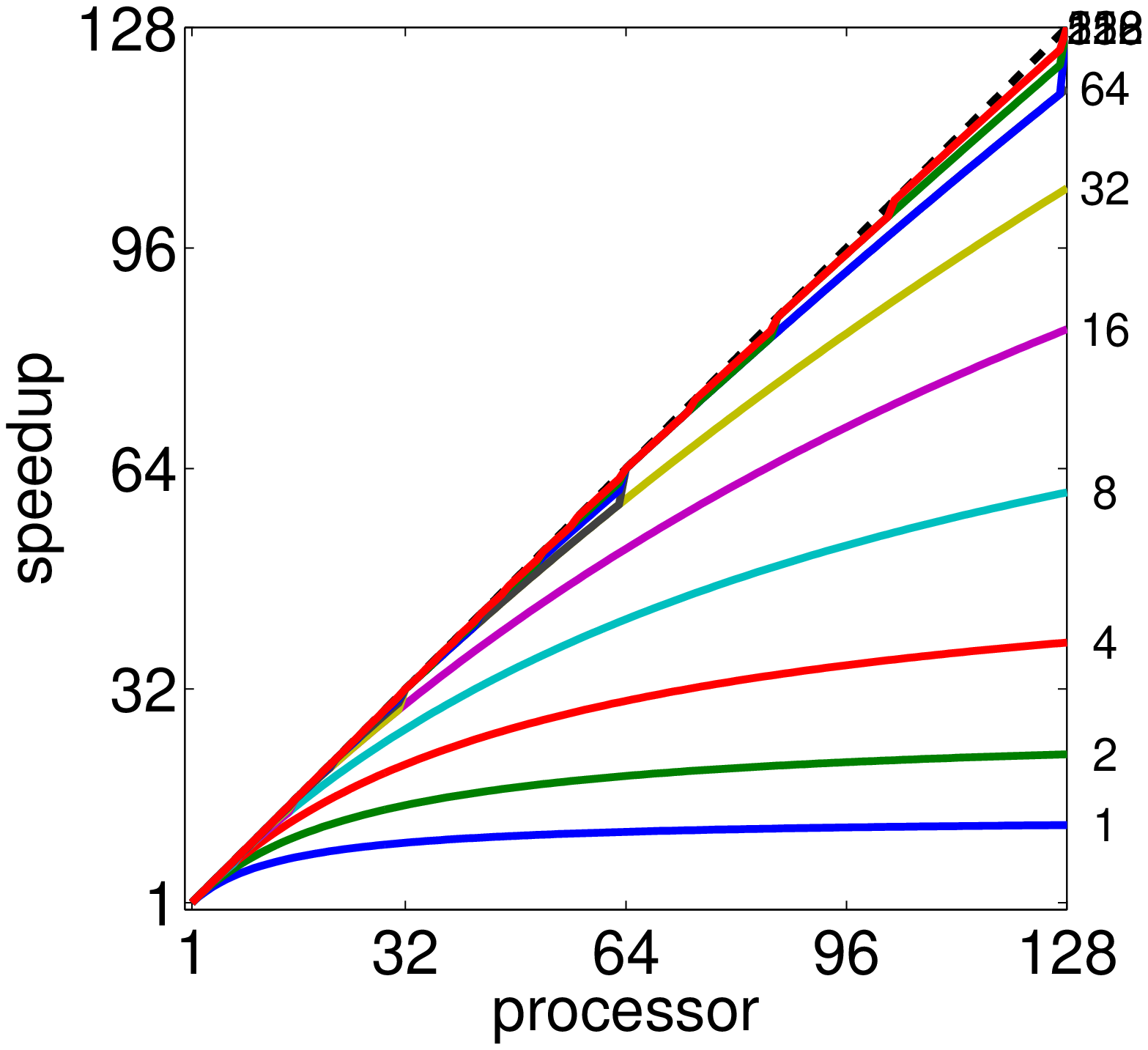} \\[-1ex]
    \raisebox{0.16\linewidth}{\caja{c}{l}{$t^{\W}_c = 100$ \\ $t^{\Z}_r = 1$}} &
    \psfrag{speedup}[][t]{speedup $S(P)$}
    \includegraphics[height=0.30\linewidth]{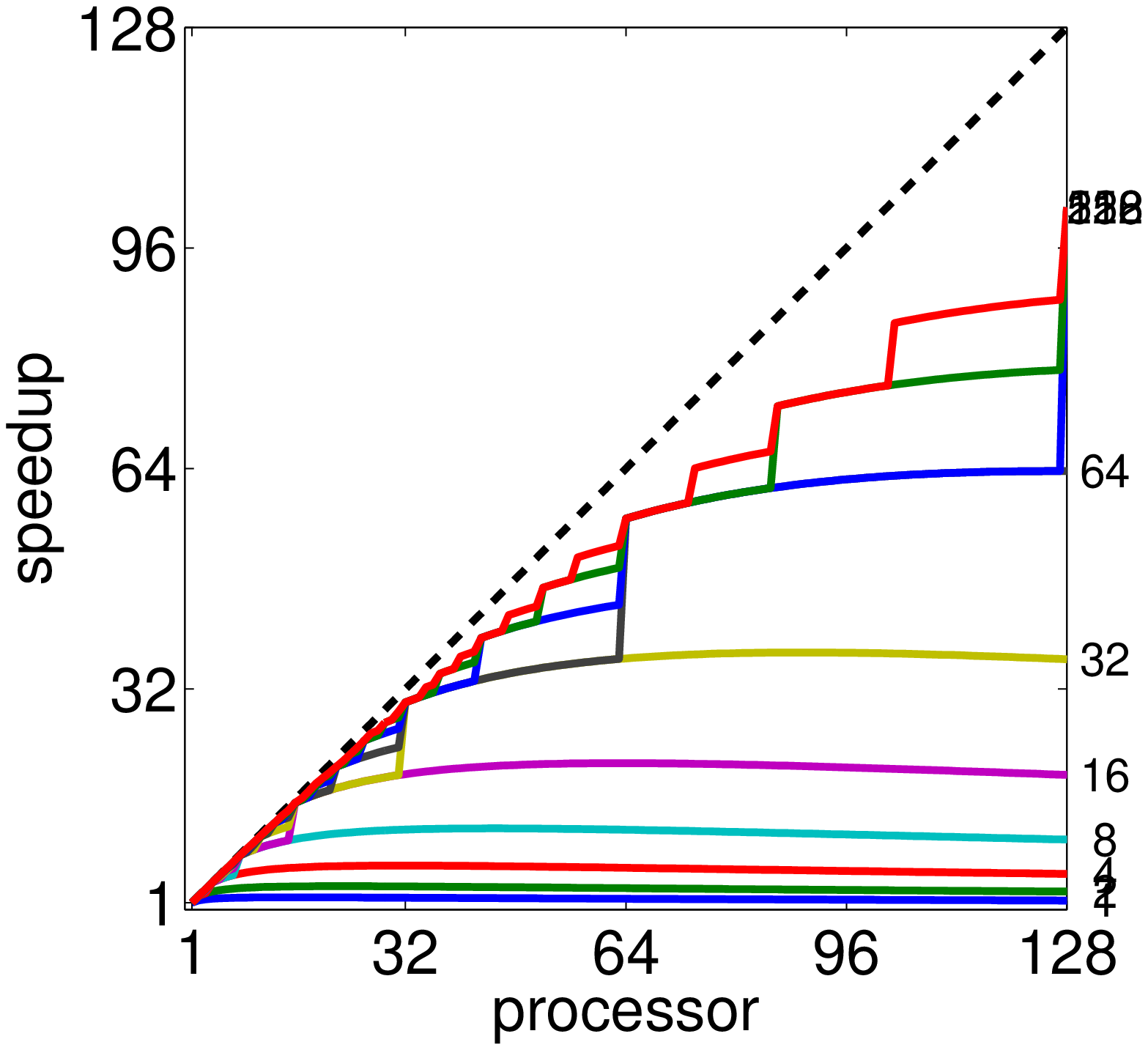} &
    \psfrag{speedup}{}
    \includegraphics[height=0.30\linewidth]{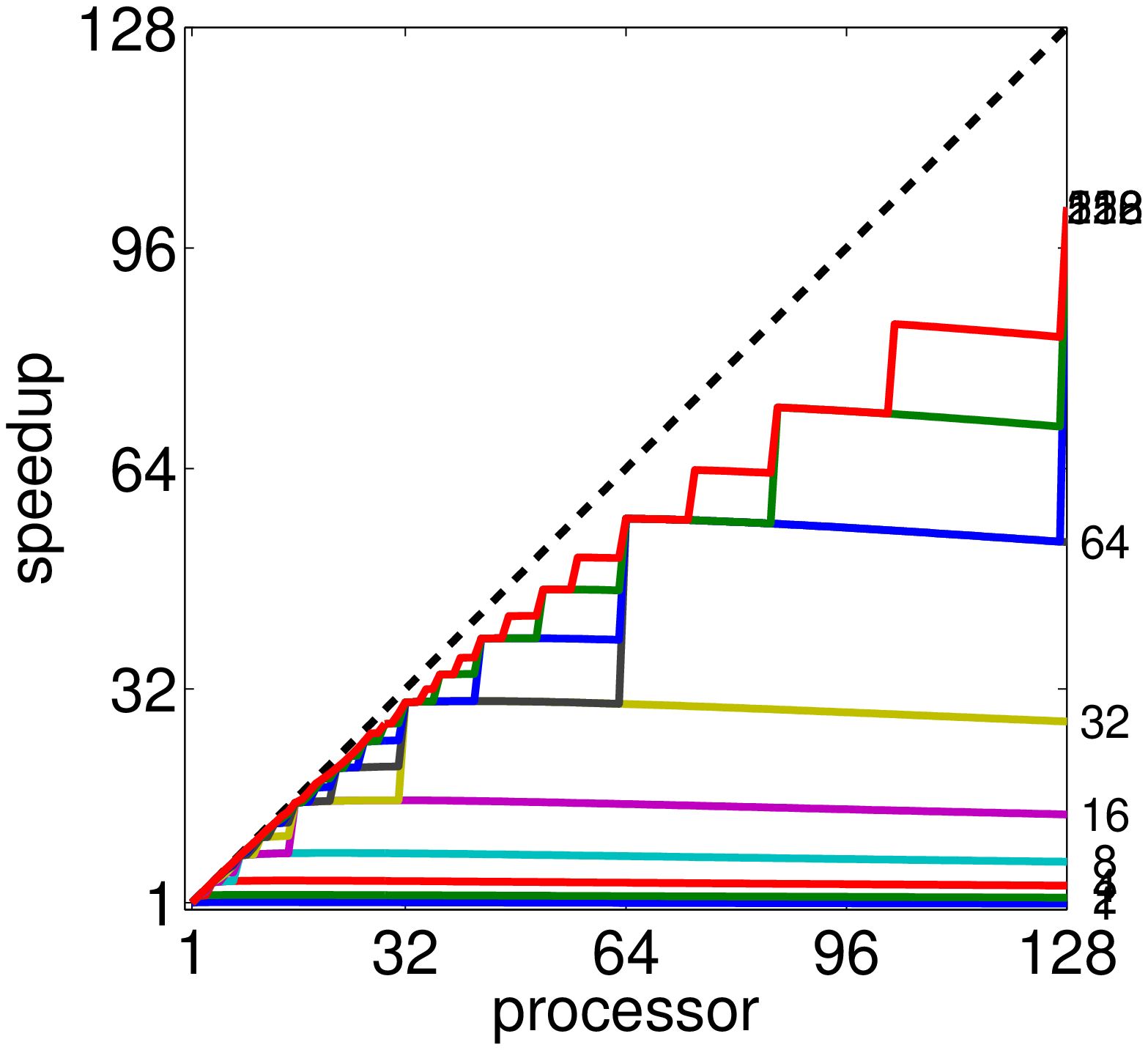} \\[-1ex]
    \raisebox{0.16\linewidth}{\caja{c}{l}{$t^{\W}_c = 1\,000$ \\ $t^{\Z}_r = 100$}} &
    \psfrag{processor}[t][]{number of machines $P$}
    \psfrag{speedup}[][t]{speedup $S(P)$}
    \includegraphics[height=0.30\linewidth]{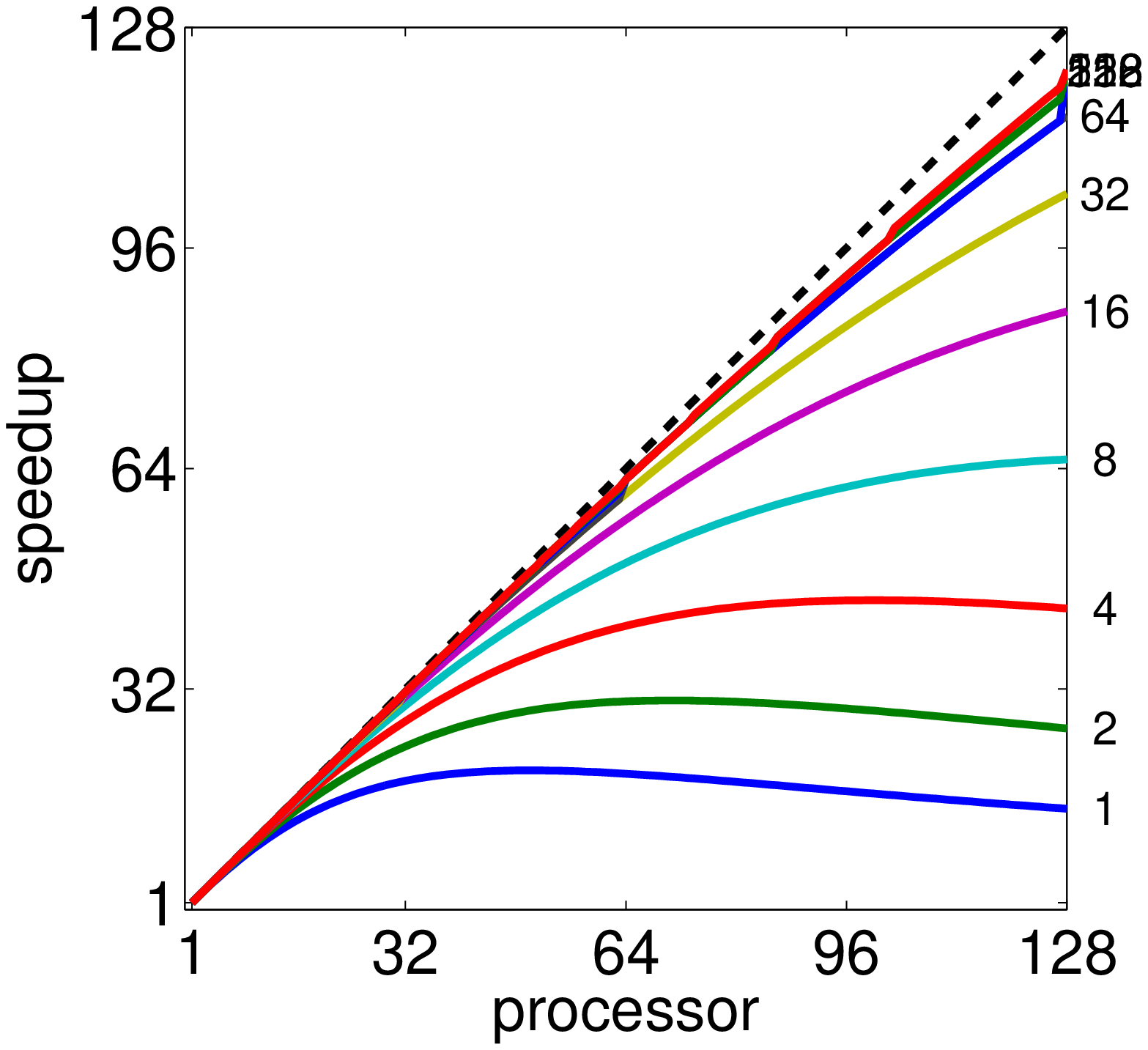} &
    \psfrag{processor}[t][]{number of machines $P$}
    \psfrag{speedup}{}
    \includegraphics[height=0.30\linewidth]{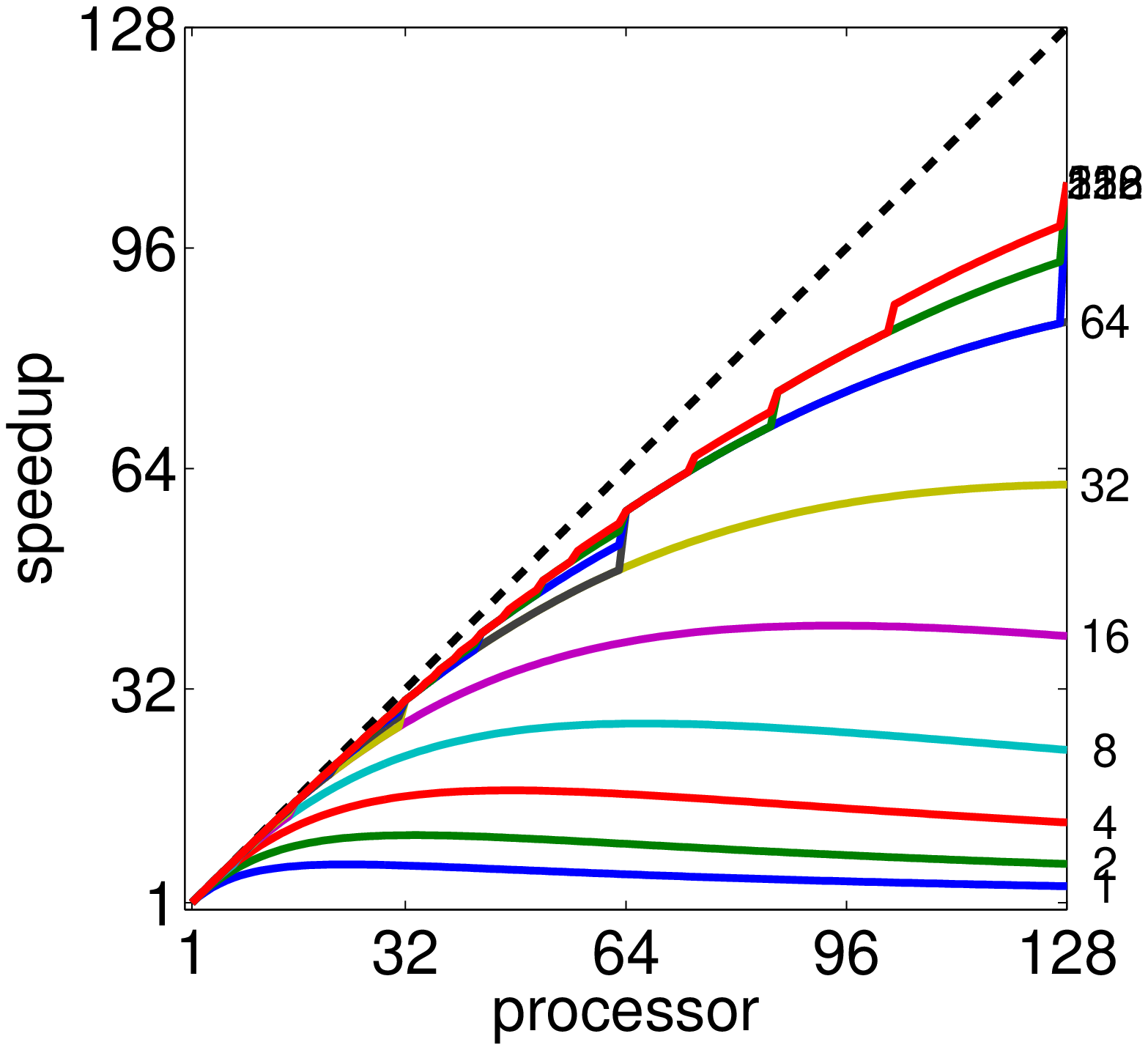}
  \end{tabular}
  \caption{Theoretical speedup $S(P)$ as a function of the number of machines $P$ for various settings of the parameters of ParMAC with a binary autoencoder. The parameters are: dataset size $N = 50\,000$ training points; number of submodels $M \in \{1,2,4,8,16,32,64,128,256,512\}$; number of epochs in the \W\ step $e \in \{1,8\}$; \W\ step computation time (per submodel and data point) $t^{\W}_r = 1$ (this sets the units of time); \W\ step communication time (per submodel) $t^{\W}_c \in \{1,100,1\,000\}$; \Z\ step computation time (per submodel and data point) $t^{\Z}_r \in \{1,100\}$. Within each plot, each curve corresponds to one value of $M$, indicated on the right end of the plot.}
  \label{f:speedup-th}
\end{figure}

Fig.~\ref{f:speedup-th} plots $S(P)$ for a wider range of parameter settings. We set the dataset size to a practically small value ($N = 50\,000$), otherwise the curves tend to look like the typical curve from fig.~\ref{f:speedup-typical}. The parameter settings are representative of different, potential practical situations (some more likely than others). We note the following observations:
\begin{itemize}
\item Again, the most important observation is that the number of submodels $M$ is the parameter with the most direct effect on the speedup: near-perfect speedups ($S \approx P$) occur if $M \ge P$, otherwise the speedups are between $M$ and $P$ (and eventually saturate if $P \gg M$).
\item When the time spent on communication is large in relative terms, the speedup is decreased. This can happen when the runtime of the \Z\ step is low (small $t^{\Z}_r$), when the communication cost is large (large $t^{\W}_c$), or with many epochs (large $e$). Indeed, since the \Z\ step is perfectly parallelisable, any decrease of the speedup should come from the \W\ step.
\item Some of the curves display noticeable discontinuities, caused by the function $(M/P)/\ceil{M/P}$, occurring at values of $P$ of the form $M/k$ for $k \in \{1,\dots,M\}$. At each such value, $S(P)$ is greater than for any smaller value of $P$, in accordance with theorem~\ref{th:speedup-charact}. This again suggests selecting values of $P$ that make $M/P$ integer, in particular when $M$ is a multiple of $P$ ($P$, $2P$, $3P$\dots). This achieves the best speedup efficiency in that machines are never idle (in our theoretical model). \\
  Also, for fixed $P$, the function $(M/P)/\ceil{M/P}$ can take the same value for different values of $M$ (e.g.\ $M=32$ and $M=64$ for $P=60$). This explains why some curves (for different $M$) partly overlap.
\item The maximum speedup is typically larger than $M$ and occurs for $P > M$. It is possible to have the maximum speedup be smaller than $M$ (in which case it occurs at $P=M$); an example appears in fig.~\ref{f:speedup-th}, row 3, column 2 (for the larger $M$ values). But this happens only when $M \ge \rho_1 N$, which requires an unusually small dataset and an impractically large number of submodels. Generally, we should expect $P > M$ to be beneficial. This is also seen experimentally in fig.~\ref{f:speedup}.
\end{itemize}

\subsection{Practical considerations}
\label{s:speedup-th:practical}

In practice, given a specific problem (with a known number of submodels $M$, epochs $e$ and dataset size $N$), our theoretical speedup curves can be used to determine optimal values for the number of machines $P$ to use. As seen in section~\ref{s:expts:speedup}, the theoretical curves agree quite well with the experimentally measured speedups. The theoretical curves do need estimates for the computation time and communication times of the \W\ and \Z\ steps. These are hard to obtain a priori; the computational complexity of the algorithm in \calO-notation ignores constant factors that affect significantly the actual times. Their estimates should be measured from test runs.

As seen from eq.~\eqref{e:speedup-indepN}, we can leave the speedup unchanged by trading off dataset size ($N$) and computation/communication times ($t^{\W}_r$, $t^{\Z}_r$, $t^{\W}_c$) in various ways, as long as one of the three following holds: the products $N t^{\W}_r$ and $N t^{\Z}_r$ remain constant; or the quotient $N/t^{\W}_c$ remains constant; or the quotients $t^{\W}_r/t^{\W}_c$ and $t^{\Z}_r/t^{\W}_c$ remain constant.

Theoretically, the most efficient operating points for $P$ are values such that $M$ is divisible by $P$, because this means no machine is ever idle. In practice with an asynchronous implementation and with $t^{\Z}_r$, $t^{\W}_r$ and $t^{\W}_c$ exhibiting some variation over submodels and data points, this is not true anymore. Still, if in a given application one is constrained to using $P \le M$ machines, choosing $P$ close to a divisor of $M$ would probably be preferable.

One assumption in our speedup model is that the $P$ machines are identical in processing power. The model does extend to the case where the machines are different, as noted in our discussion of load balancing (section~\ref{s:ParMAC:extensions}). This is because the work performed by each machine is proportional to the number of data points it contains: in the \W\ step, because every submodel runs (implicitly) SGD, and every submodel must visit each machine; in the \Z\ step, because each data point is a separate problem, and involves all submodels (which reside in each machine). Hence, we can equalise the work over machines by loading each machine with an amount of data proportional to its processing speed, independent of the number of submodels.

Another assumption in our model (in the \W\ step) is that all $M$ submodels are identical in ``size'' (computation and communication time). This is sometimes not true. For example, in the BA, we have submodels of two types: the $L$ encoders (each a binary linear SVM operating on a $D$-dimensional input) and the $D$ decoders (each a linear regressor operating on an $L$-dimensional input). Since $D > L$, the encoders are bigger than the decoders and take longer to train and communicate. We can still apply our speedup model if we ``group'' smaller submodels into a single submodel of size comparable to the larger submodels, so as to equalise as much as possible the submodel sizes (the actual implementation does not need to group submodels, of course). For the BA, under the reasonable assumption that the ratio of computation times $t^{\W}_r$ (and communication times $t^{\W}_c$) of decoder vs encoder is $L/D < 1$, we can group the $D$ decoders into $L$ groups of $D/L$ decoders each. Each group of decoders has now a computation and communication time equal to that of one encoder. This gives an effective number of independent submodels $M = 2 L$, and this is what we use when applying the model to the experimental speedups in section~\ref{s:expts:speedup}.

Finally, we emphasise that the goal of this section was to characterise the parallel speedup of ParMAC quantitatively and demonstrate the runtime gains that are achievable by using $P$ machines. In practice, other considerations are also important, such as the economic cost of using $P$ machines (which may limit the maximum $P$ available); the type of machines (obviously, we want all the computation and communication times as small as possible); the choice of optimisation algorithm in the \W\ and \Z\ steps; the fact that, because of its size, the dataset may need to be, or already is, distributed across $P$ machines; etc. It is also possible to combine ParMAC with other, orthogonal techniques. For example, if each of the submodels in the \W\ step is a convex optimisation problem (as is the case with the linear SVMs with the binary autoencoder), we could use the techniques described in section~\ref{s:related} for distributed convex optimisation to each submodel. This would effectively allow for larger speedups when $P > M$.

\section{Convergence of ParMAC}
\label{s:conv}

The only approximation that ParMAC makes to the original MAC algorithm is using SGD in the \W\ step. Since we can guarantee convergence of SGD under certain conditions, we can recover the original convergence guarantees for MAC. Let us see this in more detail. Convergence of MAC to a stationary point is given by theorem B.3 in \citet{CarreirWang12a}, which we quote here:
\begin{thm}
  \label{th:MACQP}
  Consider the constrained problem of eq.~\eqref{e:mac} and its quadratic-penalty function $E_Q(\W,\Z;\mu)$ of eq.~\eqref{e:mac-quadpen}. Given a positive increasing sequence $(\mu_k) \rightarrow \infty$, a nonnegative sequence $(\tau_k) \rightarrow 0$, and a starting point $(\W^0,\Z^0)$, suppose the quadratic-penalty method finds an approximate minimiser $(\W^k,\Z^k)$ of $E_Q(\W^k,\Z^k;\mu_k)$ that satisfies $\norm{\nabla_{\W,\Z}{E_Q(\W^k,\Z^k;\mu_k)}} \le \tau_k$ for $k=1,2,\dots$ Then, $\lim_{k\rightarrow\infty}{(\W^k,\Z^k)} = (\W^*,\Z^*)$, which is a KKT point for the problem~\eqref{e:mac}, and its Lagrange multiplier vector has elements $\blambda^*_n = \lim_{k\rightarrow\infty}{-\mu_k \, (\Z^k_n - \F(\Z^k_n,\W^k;\x_n))}$, $n=1,\dots,N$.
\end{thm}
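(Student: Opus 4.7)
The plan is to adapt the classical convergence proof of the quadratic-penalty method (e.g.\ Theorem~17.2 in Nocedal \& Wright) to the MAC formulation. First I would write the constraints of~\eqref{e:mac} as $c_{k,n}(\W,\Z) := \z_{k,n} - \f_k(\z_{k-1,n};\W_k)$ for $k=1,\dots,K$ and $n=1,\dots,N$, and denote the data-fit term by $E_0(\W,\Z) = \frac{1}{2}\sum_n \norm{\y_n - \f_{K+1}(\z_{K,n};\W_{K+1})}^2$, so the Lagrangian of~\eqref{e:mac} is $L(\W,\Z,\blambda) = E_0(\W,\Z) - \sum_{k,n} \blambda_{k,n}^T c_{k,n}(\W,\Z)$. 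A direct calculation matching $\nabla_{\W,\Z} E_Q$ from~\eqref{e:mac-quadpen} with $\nabla_{\W,\Z} L$ identifies the natural multiplier estimate at iteration $k$ as $\blambda^k_{k,n} := -\mu_k\, c_{k,n}(\W^k,\Z^k)$, which is exactly the formula asserted in the theorem. The approximate stationarity hypothesis then reads $\norm{\nabla_{\W,\Z} L(\W^k,\Z^k,\blambda^k)} \le \tau_k \to 0$.

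Second I would extract a convergent subsequence of $(\W^k,\Z^k)$ with limit $(\W^*,\Z^*)$ using boundedness (coercivity of $E_Q$ in \Z\ is automatic from the quadratic penalty, while in \W\ it is granted by hypothesis, e.g.\ smoothness of the $\f_k$ together with a suitable compactness or regularisation on the weights). The crucial structural observation, special to the MAC construction, is that each constraint has the form $c_{k,n} = \z_{k,n} - \f_k(\cdots)$, so $\partial c_{k,n}/\partial\z_{k,n} = \I$ and the constraint Jacobian with respect to \Z\ is block lower-triangular with identity diagonal blocks. Hence the constraint Jacobian has full row rank at every point, i.e.\ LICQ holds trivially at $(\W^*,\Z^*)$. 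This sidesteps the delicate constraint-qualification step that usually complicates quadratic-penalty convergence proofs.

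Using full rank of the constraint Jacobian, one can locally invert the near-stationarity system $\nabla_{\W,\Z} L(\W^k,\Z^k,\blambda^k) = O(\tau_k)$ for $\blambda^k$ along the subsequence and conclude that $\blambda^k$ is bounded and converges to some finite $\blambda^*$. But $\blambda^k_{k,n} = -\mu_k c_{k,n}(\W^k,\Z^k)$ and $\mu_k \to \infty$, so boundedness forces $c_{k,n}(\W^*,\Z^*) = 0$; the constraints are satisfied in the limit, and $\blambda^*$ has the stated form. Passing to the limit in $\nabla_{\W,\Z} L \to 0$ then yields $\nabla_{\W,\Z} L(\W^*,\Z^*,\blambda^*) = 0$ which, combined with primal feasibility, is precisely the KKT system for~\eqref{e:mac}.

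The main obstacle I expect is upgrading subsequential convergence to convergence of the \emph{full} sequence as stated. The standard quadratic-penalty machinery delivers only that every limit point of $(\W^k,\Z^k)$ is a KKT point; getting convergence of the whole sequence additionally requires either uniqueness of the KKT point on the relevant level set, a strict second-order sufficient condition, or an isolation assumption at $(\W^*,\Z^*)$. A secondary technical issue is ensuring that an approximate minimiser with $\norm{\nabla E_Q} \le \tau_k$ exists at each stage, which is typically subsumed by coercivity of $E_Q$; if layers are nondifferentiable the gradient condition must be reinterpreted via Clarke subgradients, requiring the $\f_k$ to be at least locally Lipschitz.
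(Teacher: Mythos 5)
Your proposal is correct and follows essentially the intended argument: the paper itself gives no proof of this theorem (it is quoted verbatim from theorem B.3 of \citet{CarreirWang12a}), and the proof there is exactly what you describe, namely the classical quadratic-penalty convergence theorem (Theorem 17.2 of \citealp{NocedalWright06a}) applied to~\eqref{e:mac}, with the multiplier estimates $\blambda^k = -\mu_k c(\W^k,\Z^k)$ and the key structural observation that each constraint contains $\z_{k,n}$ through an identity block, so the constraint Jacobian has full row rank and LICQ holds automatically. The caveat you raise about subsequential versus full-sequence convergence is a genuine looseness of the statement as quoted (the classical result only guarantees that feasible limit points are KKT points), not a gap in your argument.
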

This theorem applies to the general case of $K$ differentiable layers, where the standard Karush-Kuhn-Tucker (KKT) conditions hold \citep{NocedalWright06a}. It relies on a standard condition for penalty methods for nonconvex problems \citep{NocedalWright06a}, namely that we must be able to reduce the gradient of the penalised function $E_Q$ below an arbitrary tolerance $\tau_k \ge 0$ for each value $\mu_k$ of the penalty parameter (in MAC iterations $k=1,2,\dots$). This can be achieved by running a suitable (unconstrained) optimisation method for sufficiently many iterations. How does this change in the case of ParMAC? The \Z\ step remains unchanged with respect to MAC (the fact that the optimisation is distributed is irrelevant since the $N$ subproblems $\z_1,\dots,\z_N$ are independent). The \W\ step does change, because we are now obliged to use a distributed, stochastic training. What we need to ensure is that we can reduce the gradient of the penalised function with respect to each submodel (since they are independent subproblems in the \W\ step) below an arbitrary tolerance. This can also be guaranteed under standard conditions. In general, we can use convergence conditions from stochastic optimisation \citep{Benven_90a,KushnerYin03a,Pflug96a,Spall03a,BertsekTsitsik00a}. Essentially, these are Robbins-Monro schedules, which require the learning rate $\eta_t$ of SGD to decrease such that $\lim_{t\rightarrow\infty} \eta_t = 0$, $\sum^{\infty}_{t=1}{\eta_t} = \infty$, $\sum^{\infty}_{t=1}{\eta^2_t} < \infty$, where $t$ is the epoch number (SGD iteration, or pass over the entire dataset%
\footnote{Note that it is not necessary to assume that the points (or minibatches) are sampled at random during updates. Various results exist that guarantee convergence with deterministic errors (e.g.\ \citealp{BertsekTsitsik00a} and references therein), rather than stochastic errors. These results assume a bound on the deterministic errors (rather than a bound on the variance of the stochastic errors), and apply to general, nonconvex objective functions with standard conditions (Lipschitz continuity, Robbins-Monro schedules for the step size, etc.). They apply as a particular case to the ``incremental gradient'' method, where we cycle through the data points in a fixed sequence.}). 
We can give much tighter conditions on the convergence and the convergence rate when the subproblems in the \W\ step are convex (which is often the case, as with logistic or linear regression, linear SVMs, etc.). This is a topic that has received much attention recently (see section~\ref{s:related}), and many such conditions exist, often based on techniques such as Nesterov accelerated algorithms and stochastic average gradient \citep{Cevher_14a}. They typically bound the distance to the minimum in objective function value as $\calO(1/t^{\alpha})$ or $\calO(1/\beta^t)$ where the coefficients $\alpha > 0$, $0< \beta < 1$ and the constant factors in the \calO-notation depend on the (strong) convexity properties of the problem, Lipschitz constant, etc.

In summary, \emph{convergence of ParMAC to a stationary point is guaranteed by the same theorem as MAC, with an added SGD-type condition for the \W\ step}. This convergence guarantee is independent of the number of layers and submodels (since they are independent in the \W\ step) and the number of machines $P$ (since effectively we are doing SGD on shuffled datasets of size $N$, even if they are partitioned on portions of size $N/P$).

We can also guarantee ParMAC's convergence with only the original MAC theorem, without SGD-type conditions, while still in the distributed setting and achieving significant parallelism. This can be done by computing the gradient in the \W\ step exactly (as MAC assumes). First, each machine $p = 1,\dots,P$ computes the exact sum of per-point gradients for each submodel (by summing over its data portion), in parallel. Then, we aggregate these $P$ partial gradients into one exact gradient, for each submodel. This could be done via a parameter server, or by having each machine act as the parameter server for one submodel, and could be easily implemented with MPI functions. However, as is well known, this is far slower than using SGD.

With nondifferentiable layers, the convergence properties of MAC (and ParMAC) are not well understood. In particular, for the binary autoencoder the encoding layer is discrete and the problem is NP-complete. But, again, the only modification of ParMAC over MAC is the fact that the encoder and decoder are trained with SGD in the \W\ step, whose convergence tolerance can be achieved with SGD-type conditions. Indeed, our experiments show ParMAC gives almost identical results to MAC.

While convergence guarantees are important theoretically, in practical applications with large datasets in a distributed setting one typically runs SGD for just a few epochs, even one or less than one (i.e., we stop SGD before passing through all the data). This typically reduces the objective function to a good enough value as fast as possible, since each pass over the data is very costly. In our experiments, one to two epochs in the \W\ step make ParMAC very similar to MAC using an exact step.

\section{Implementation of ParMAC for binary autoencoders}
\label{s:ParMAC-BAhash-implem}

We have implemented ParMAC for binary autoencoders in C/C++ using the GNU Scientic Library (GSL) (\url{http://www.gnu.org/s/gsl}) and Basic Linear Algebra Subroutines (BLAS) library (\url{http://www.netlib.org}) for mathematical operations and linear algebra, and the Message Passing Interface (MPI) \citep{Gropp_99a,Gropp_99b,MPI12a} for interprocess communication.

GSL and BLAS provide a wide range of mathematical routines such as basic matrix operations, various matrix decompositions and least-squares fitting. We used the versions of GSL and BLAS that come with our Linux distribution (Ubuntu 14.04). Considerably better performance could be achieved by using LAPACK and an optimised version of BLAS (such as ATLAS, or as provided by a computer vendor for their specific architecture).

MPI is one of the most widely used frameworks for high-performance parallel computing today, and is the best option for ParMAC because of its support for distributed-memory machines and SPMD (single program, multiple data) model, its language independence, and its availability in multiple machines, from small shared-memory multiprocessor machines to hybrid clusters. In MPI, different processes cannot directly access each other's memory space, but data can be transferred by sending messages from one process to another, or collectively among multiple processes. The SPMD model, very useful in distributed machine learning, means that all processes share the same code (and executable file), and each of them can operate on different data with flow control using its individual process id. 

MPI is an industry standard for which there are many implementations, such as MPICH or OpenMPI, mostly compatible with each other. We used MPICH on our UC Merced shared-memory cluster and OpenMPI on the UCSD TSCC distributed cluster (see section~\ref{s:expts:setup}). Our ParMAC C++ code compiles and runs with both implementations. We used the highest compiler optimisation level, specifically we ran \texttt{mpicc -O3 -lgsl -lgslcblas -lm}. This calls the GNU C compiler with option \texttt{-O3}, which turns on all the available code optimisation flags. It results in a longer compilation time but more efficient code.

\begin{figure}[p]
  \small
  \begin{tabbing}
    n \= n \= n \= n \= n \= \kill
    \texttt{MPI\_Init(\&argc, \&argv);} \` // initialise the MPI execution environment \\
    \texttt{MPI\_Comm\_rank(MPI\_COMM\_WORLD, \&mpirank);} \` // get the rank of the calling MPI process \\
    \texttt{MPI\_Comm\_size(MPI\_COMM\_WORLD, \&mpisize);} \` // get the total number of MPI processes \\
    \texttt{loadsettings();} \` // load parameters ($\mu$, epochs, dataset path, etc.) \\
    \texttt{loaddatasets();} \` // load input and output datasets and initial auxiliary coordinates \\
    \texttt{initializelayers();} \` // allocate memory and initialise \f, \h\ and \Z\ steps \\
    // we use \texttt{MPI\_Bsend} to avoid managing send buffers, so we need to allocate the required \\
    // amount of buffer space into which data can be copied until it is delivered \\
    \texttt{MPI\_Pack\_size(commbuffsize, MPI\_CHAR, MPI\_COMM\_WORLD, \&mpi\_attach\_buff\_size);} \\
    // allocate enough memory so it can store the whole model \\
    \texttt{mpi\_attach\_buff = malloc(totalsubmodelcount*(mpi\_attach\_buff\_size+MPI\_BSEND\_OVERHEAD));} \\
    \texttt{MPI\_Buffer\_attach(mpi\_attach\_buff, mpi\_attach\_buff\_size);} \` // attach the allocated buffer \\
    \\
    \texttt{for (iter=1 to length($\mu$)) \{} \` // iterate over all the values of $\mu$ \+ \\
    // begin \W-step \\
    \texttt{visitedsubmodels = 0;} \\
    // each process visits all the submodels, epochs + 1 times \\
    \texttt{while (visitedsubmodels <= totalsubmodelcount*epochs) \{} \+ \\
    // \texttt{stepcounter} is a number that each submodel carries and increases by one in each step. \\
    // Once it reaches a certain value we stop sending the submodel around and it stops. \\
    // We reset \texttt{stepcounter} for all the submodels in the beginning of each \W-step. \\
    \texttt{if (stepcounter > 0) \{} \` // if this is not the first submodel to train in the iteration, we wait to receive \+ \\
    // \texttt{MPI\_Recv} blocks until the requested data is available in the application buffer in the receiving task \\
    \texttt{\textcolor{red}{MPI\_Recv}(receivebuffer, commbuffsize, MPI\_CHAR, MPI\_ANY\_SOURCE, MODEL\_MSG\_TAG,} \\
    \hspace{20mm}\texttt{MPI\_COMM\_WORLD, \&recvStatus);} \\
    \texttt{savesubmodel(receivebuffer);} \` // save the received buffer into a suitable struct \- \\
    \texttt{\}} \\
    \texttt{if (stepcounter < epochs*mpisize) \{} \` // we don't train the submodels in the last update round \+ \\
    \texttt{switch(submodeltype)} \` // train each submodel according to its type \\
    \texttt{case 'SVM': HtrainSGD();} \\
    \texttt{case 'linlayer': FtrainSGD();} \- \\
    \texttt{\}} \\
    \texttt{if (stepcounter < (ringepochs+1)*mpisize) \{} \` // we still need to send this submodel around \+ \\
    // the lookup table is created randomly and stores the path of each submodel over epochs and iterations \\
    \texttt{successor = next\_in\_lookuptable();} \` // pick the successor process from the lookup table \\
    \texttt{loadsubmodel(sendbuffer);} \` // load the submodel from its struct into the send buffer \\
    // \texttt{MPI\_Bsend} returns after the data has been copied from application buffer space to the allocated send buffer \\
    \texttt{\textcolor{red}{MPI\_Bsend}(sendbuffer, taskbufsize*sizeof(double), MPI\_CHAR, successor, MODEL\_MSG\_TAG,} \\
    \hspace{20mm}\texttt{MPI\_COMM\_WORLD);} \- \\
    \texttt{\}} \\
    \texttt{visitedsubmodels++;} \- \\
    \texttt{\}} \\
    // end \W-step \\
    \\
    // begin \Z-step \\
    \texttt{updateZ\_relaxed();} \` // initialise auxiliary coordinates based on a truncated, relaxed solution \\
    \texttt{updateZ\_alternate();} \` // update auxiliary coordinates by alternating optimisation over bits \\
    // end \Z-step \- \\
    \texttt{\}} \\
    \\
    \texttt{MPI\_Buffer\_detach(\&mpi\_attach\_buff, \&mpi\_attach\_buff\_size);} \` // detach the allocated buffer \\
    \texttt{free(mpi\_attach\_buff);} \` // free the allocated memory \\
    \texttt{MPI\_Finalize();} \` // terminate the MPI execution environment
  \end{tabbing}
  \caption{Binary autoencoder ParMAC algorithm (fragment), showing important MPI calls.}
  \label{f:BA-ParMAC-alg}
\end{figure}

The code snippet in figure~\ref{f:BA-ParMAC-alg} shows the main steps of the ParMAC algorithm for the BA. All the functions starting with \texttt{MPI\_} are API calls from the MPI library. As with all MPI programs, we start the code by initialising the MPI environment and end by finalising it. To receive data we use the synchronous%
\footnote{Note that the word ``synchronous'' here does not refer to how we process the different submodels, which as we stated earlier are not synchronised to start or end at specific clock ticks, hence are processed asynchronously with respect to each other. The word ``synchronous'' here refers to MPI's handling of an \emph{individual} receive function (see appendix~\ref{s:MPI}). This can be done either by calling \texttt{MPI\_Recv}, which will block until the data is received (synchronous blocking function), as in the pseudocode in fig.~\ref{f:BA-ParMAC-alg}; or by calling \texttt{MPI\_Irecv} (asynchronous nonblocking function) followed by a \texttt{MPI\_Wait}, which will block until the data is received, like this:
  \begin{tabbing}
    n \= n \= n \= n \= n \= \+ \kill
    \texttt{MPI\_Irecv(receivebuffer, commbuffsize, MPI\_CHAR, MPI\_ANY\_SOURCE, MODEL\_MSG\_TAG, MPI\_COMM\_WORLD, \&recvRequest);} \\
    \texttt{MPI\_Wait(\&recvRequest, \&recvStatus);}
  \end{tabbing}
  Both options are equivalent for our purpose, which is to ensure we receive the submodel before starting to train it. The \texttt{MPI\_Irecv}/\texttt{MPI\_Wait} option is slightly more flexible in that it would allow us to do some additional processing between \texttt{MPI\_IRecv} and \texttt{MPI\_Wait} and possibly achieve some performance gain.},
blocking MPI receive function \texttt{MPI\_Recv}. The process calling this blocks until the data arrives. To send data we use the buffered blocking version of the MPI send functions, \texttt{MPI\_Bsend}. This requires that we allocate enough memory and attach it to the system in advance. The process calling \texttt{MPI\_Bsend} blocks until the buffer is copied to the MPI internal memory; after that, the MPI library takes care of sending the data appropriately. The benefit of using this version of send is that the programmer can send messages without worrying about where they are buffered, so the code is simpler. Appendix~\ref{s:MPI} briefly describes important MPI functions and their arguments.

\section{Experiments}
\label{s:expts}

\subsection{Setup}
\label{s:expts:setup}

\paragraph{Computing systems}

We used two different computing systems, to which we will refer as \emph{distributed} and \emph{shared-memory}:
\begin{description}
\item[Distributed-memory] This used General Computing Nodes from the UCSD Triton Shared Computing Cluster (TSCC), available to the public for a fee. Each node contains 2 8-core Intel Xeon E5-2670 processors (16 cores in total), 64GB DRAM (4GB/core) and a 500GB hard drive. The nodes are connected through a 10GbE network. We used up to $P=128$ processors. Detailed specs are in table~\ref{t:specs} (obtained by running \texttt{dmidecode} in the actual processor) and \url{http://idi.ucsd.edu/computing}.
\item[Shared-memory] This is a 72-processor machine (36 physical cores with hyperthreading) with 256GB RAM located at UC Merced. The processors communicate through shared memory. We used this only for the large-scale experiment, and we used 64 of the 72 processors. Detailed specs are in table~\ref{t:specs} (obtained by running \texttt{dmidecode} in the actual processor).
\end{description}
In both systems, the interprocess communication is handled by MPI (OpenMPI on the TSCC cluster and MPICH on our shared-memory cluster). The shared-memory system has both faster processors and faster communication than the distributed one and this is seen in our experiments (3--4 times faster). This does not imply that shared-memory systems are necessarily superior in practice, it simply reflects characteristics of the equipment we had access to. The ParMAC speedups as a function of the number of processors are comparable in both systems.

\begin{table}
  \caption{Detailed hardware specification of the two machines used in our experiments.}
  \centering
  \begin{tabular}{@{}lll@{}}
    \toprule
    & Distributed-memory (TSCC at UCSD) & Shared-memory (cluster at UC Merced) \\
    \midrule
    CPU & Intel(R) Xeon(R) CPU E5-2670 0 & Intel(R) Xeon(R) CPU E5-2699 v3 \\
    CPU cache & 20 MB & 45 MB  \\
    CPU max frequency & 3.3 GHz & 3.6 GHz \\
    Cores/threads & 8/16 & 8/16 \\
    Memory types & DDR3 800/1066/1333/1600 & DDR4 1600/1866/2133 \\
    RAM bandwidth & 51.2 GB/s & 68 GB/s \\
    Processor connection & 10GbE & shared memory \\
    \bottomrule
  \end{tabular}
  \label{t:specs}
\end{table}

\paragraph{Datasets}

We have used 4 datasets commonly used as image retrieval benchmarks. (1) CIFAR \citep{Krizhev09a} contains $60\,000$ $32\times 32$ colour images in 10 object classes. We ignore the labels in this paper and use $N = 50\,000$ images as training set and $10\,000$ as test set. We extract $D = 320$ GIST features \citep{OlivaTorral01a} from each image. (2) SIFT-10K \citep{Jegou_11a} contains $N = 10\,000$ training high-resolution colour images and $100$ test images, each represented by $D = 128$ SIFT features. (3) SIFT-1M \citep{Jegou_11a} contains $N = 10^6$ training and $10^4$ test images. (3) SIFT-1B (\citealp{Jegou_11b}; \url{http://corpus-texmex.irisa.fr}) has three subsets: $10^9$ base vectors where the search is performed, $N = 10^8$ learning vectors used to train the model and $10^4$ query vectors.

\paragraph{Performance measures}

Regarding the quality of the BA and hash functions learnt, we report the following. (1) The binary autoencoder error $E_{\text{BA}}(\h,\f)$ which we want to minimise, eq.~\eqref{e:BA-nested}. (2) The quadratic-penalty function $E_Q(\h,\f,\Z;\mu)$ which we actually minimise for each value of $\mu$, eq.~\eqref{e:BA-MAC-QP}. (3) The retrieval precision (\%) in the test set using as true neighbours the $K$ nearest images in Euclidean distance in the original space, and as retrieved neighbours in the binary space we use the $k$ nearest images in Hamming distance. We set $(K,k) = (1\,000,100)$ for CIFAR, $(100,100)$ for SIFT-10K and $(10\,000,10\,000)$ for SIFT-1M. For SIFT-1B, as suggested by the dataset creators, we report the recall@R: the average number of queries for which the nearest neighbour is ranked within the top $R$ positions (for varying values of $R$); in case of tied distances, we place the query as top rank. All these measures are computed offline once the BA is trained.

\paragraph{Models and their parameters}

We use BAs with linear encoders (linear SVM) except with SIFT-1B, where we also use kernel SVMs. The decoder is always linear. We set $L = 16$ bits (hash functions) for CIFAR, SIFT-10K and SIFT-1M and $L = 64$ bits for SIFT-1B. The \Z\ step uses enumeration for SIFT-10K and SIFT-1M, and alternating optimisation (initialised by a truncated relaxed solution) otherwise. We initialise the binary codes from truncated PCA ran on a subset of the training set (small enough that it fits in one machine).

To train the encoder ($L$ SVMs) and decoder ($D$ linear mappings) with stochastic optimisation, we used the SGD code from \citet{BottouBousquet08a} (\url{http://leon.bottou.org/projects/sgd}), using its default parameter settings. The SGD step size is tuned automatically in each iteration by examining the first $1\,000$ datapoints.

We use a multiplicative $\mu$ schedule $\mu_i = \mu_0 a^i$ where the initial value $\mu_0$ and the factor $a > 1$ are tuned offline in a trial run using a small subset of the data. For CIFAR we use $\mu_0 = 0.005$ and $a = 1.2$ over 26 iterations ($i = 0,\dots,25$). For SIFT-10K and SIFT-1M we use $\mu_0 = 10^{-6}$ and $a = 2$ over 20 iterations. For SIFT-1B we use $\mu_0 = 10^{-4}$ and $a = 2$ over 10 iterations.

\subsection{Effect of stochastic steps in the \W\ step}
\label{s:expts:stoch}

Figures~\ref{f:sift10k-epochs} to~\ref{f:cifar-epochs-shuffle} show the effect on SIFT-10K and CIFAR of varying the number of epochs and shuffling the data as a function of the number of machines $P$ on the learning curves (errors $E_Q$ and $E_{\text{BA}}$, precision). As the number of epochs increases, the \W\ step is solved more exactly (8 epochs is practically exact in these datasets). Fewer epochs, even just one, cause only a small degradation. The reason is that, although these are relatively small datasets, they contain sufficient redundance that few epochs are sufficient to decrease the error considerably. This is also helped by the accumulated effect of epochs over MAC iterations. Running more epochs increases the runtime and lowers the parallel speedup in this particular model, because we use few bits ($L = 16$) and therefore few submodels ($M = 2L = 32$) compared to the number of machines (up to $P = 128$), so the \W\ step has less parallelism.

Fig.~\ref{f:cifar-epochs-shuffle} shows the positive effect of data shuffling in the \W\ step. To shuffle the minibatches, the successor of a machine is given by a random lookup table. Shuffling generally reduces the error (this is particularly clear in $E_Q$, which is what we actually minimise) and increases the precision with no increase in runtime.

Note that, even if we keep the circular topology fixed throughout the \W\ step (i.e., we do not randomise the topology at each epoch), there is still a small amount of shuffling. This occurs because, although all submodels process the data minibatches in the same ``direction'', submodels in different machines start at different minibatches. Were it not for this (and if no shuffling was done within-machine), ParMAC would give an identical result no matter the number of machines. However, as seen from figures~\ref{f:sift10k-epochs} to~\ref{f:cifar-epochs-shuffle}, it seems that this small intrinsic shuffling simply randomises the learning curves, but does not make them better than the learning curve for one machine.

\begin{figure}[t]
  \psfrag{iteration}{}
  \psfrag{time}{}
  \psfrag{QPerror}[][t]{\footnotesize$E_Q$}
  \psfrag{BAerror}[][t]{\footnotesize$E_{\text{BA}}$}
  \psfrag{precision}[][t]{precision}
  \begin{tabular}{@{}c@{\hspace{0\linewidth}}c@{\hspace{0.015\linewidth}}c@{\hspace{0\linewidth}}c@{}}
    \multicolumn{2}{c}{\makebox[0.45\linewidth][c]{\dotfill Number of epochs in \W\ step\dotfill}} & \multicolumn{2}{c}{\makebox[0.45\linewidth][c]{\dotfill Number of machines $P$\dotfill}} \\
    error-iteration view & error-time view & 1 epoch in \W\ step & 8 epochs in \W\ step \\[-1ex]
    \includegraphics[width=0.274\linewidth]{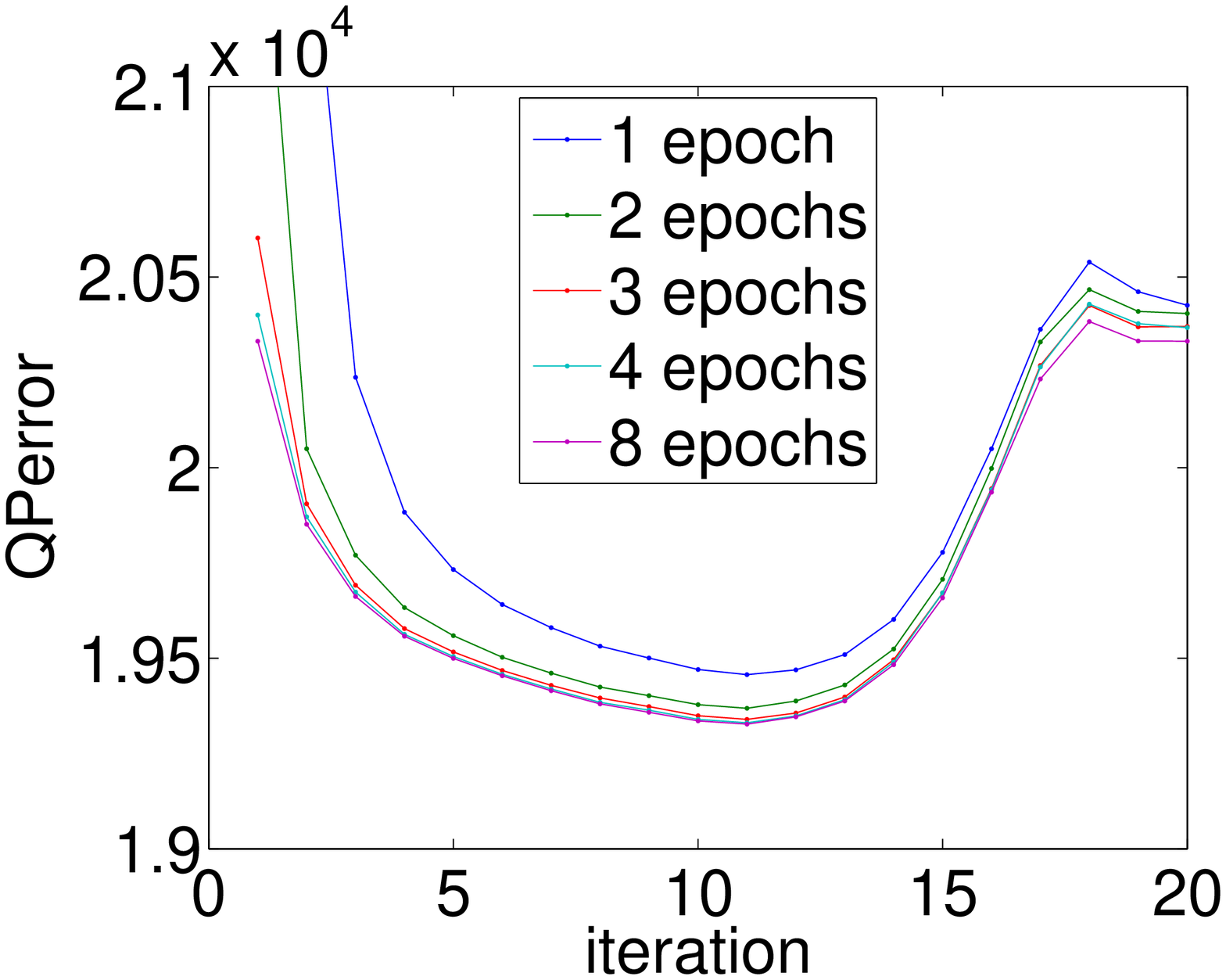} &
    \includegraphics[width=0.24\linewidth]{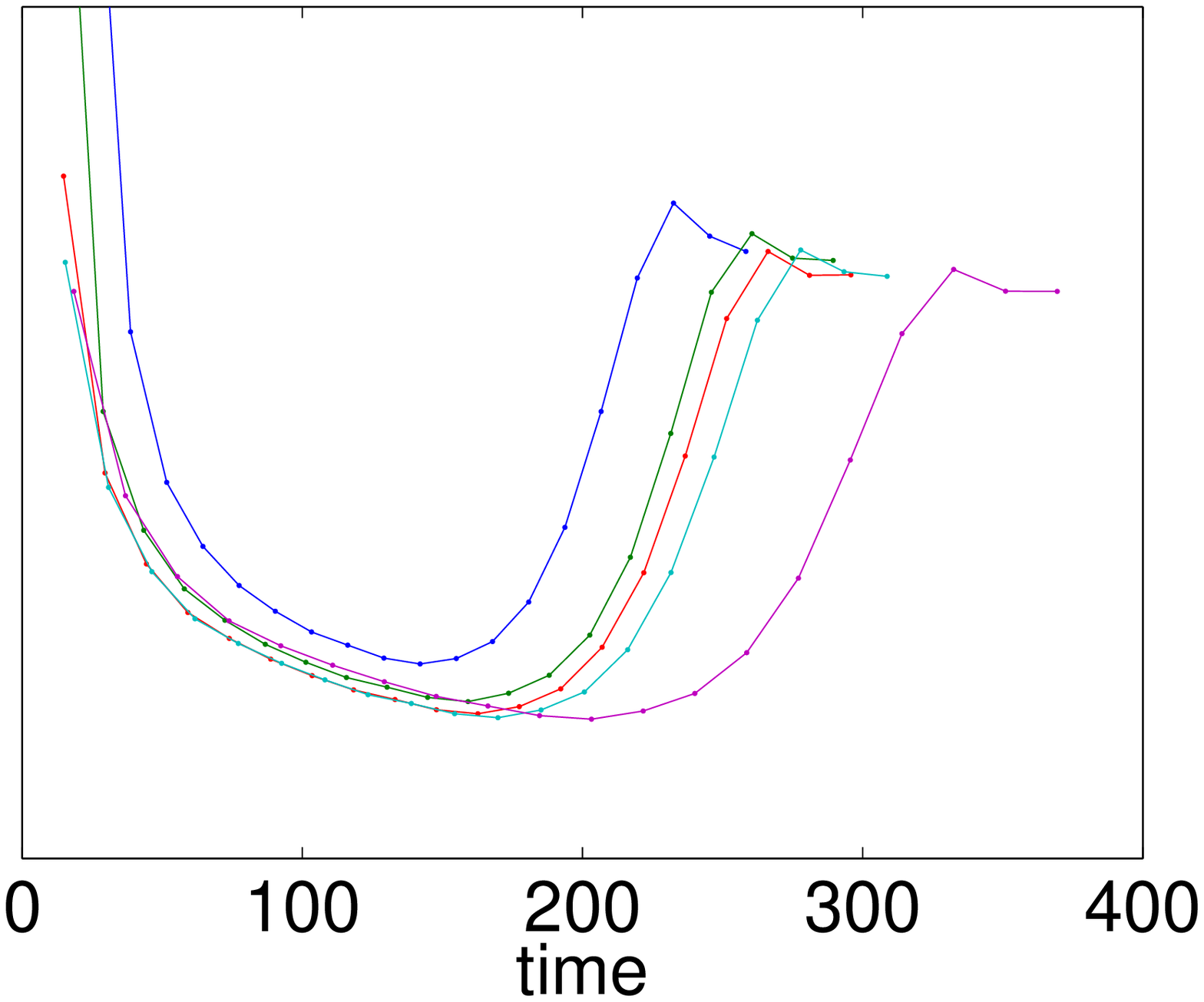} &
    \includegraphics[width=0.235\linewidth]{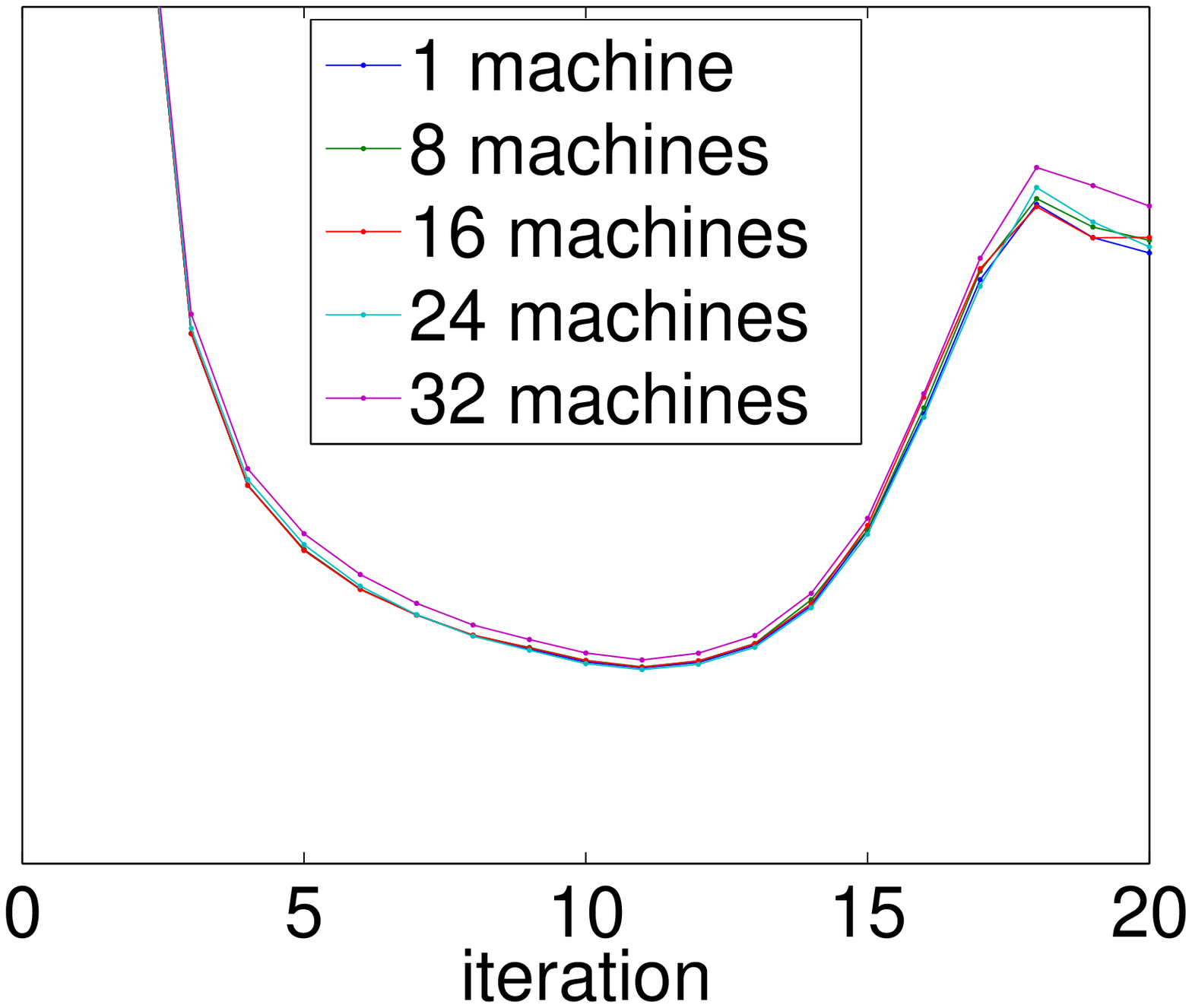} &
    \includegraphics[width=0.235\linewidth]{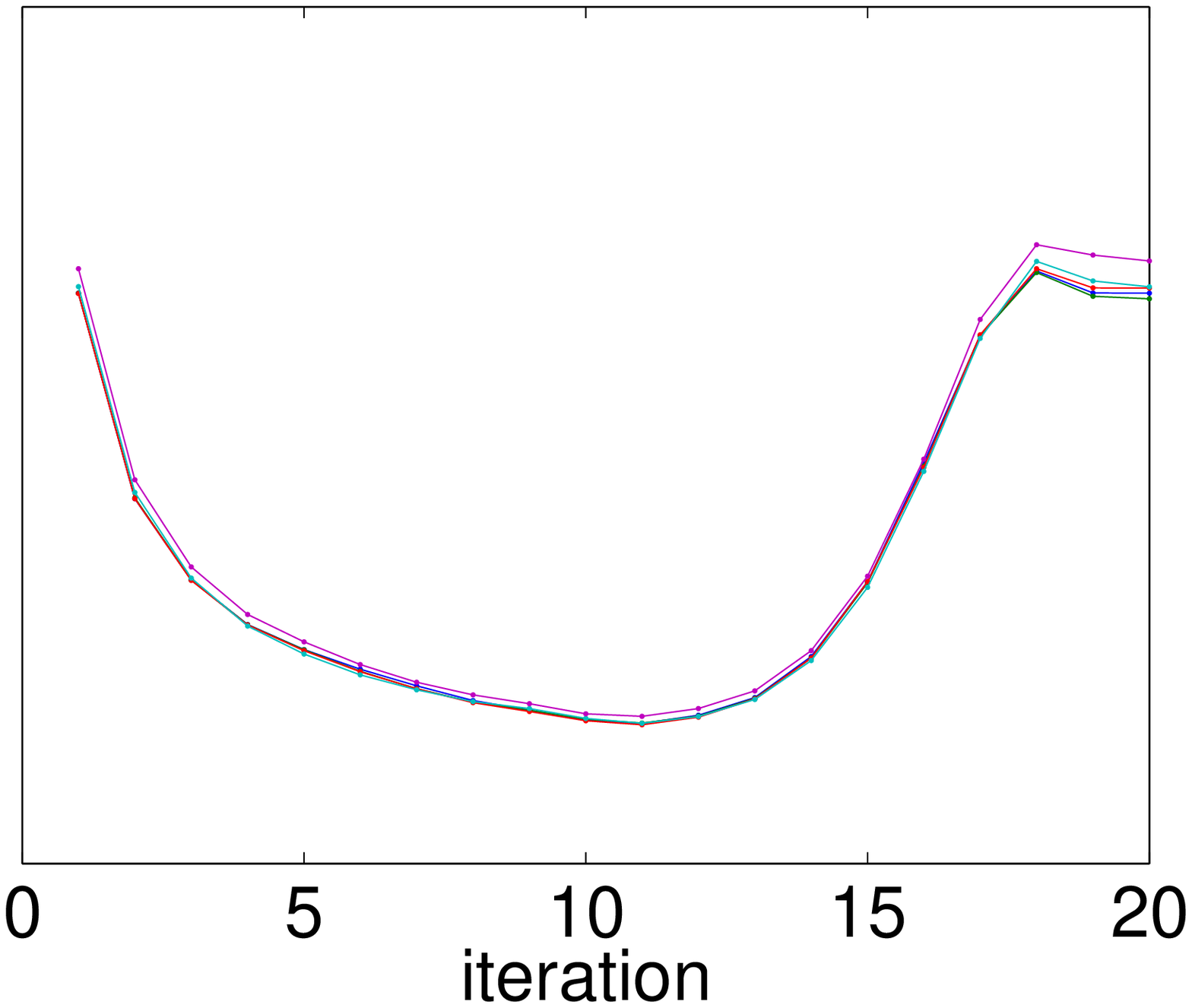} \\[-2.5ex]
    \includegraphics[width=0.274\linewidth]{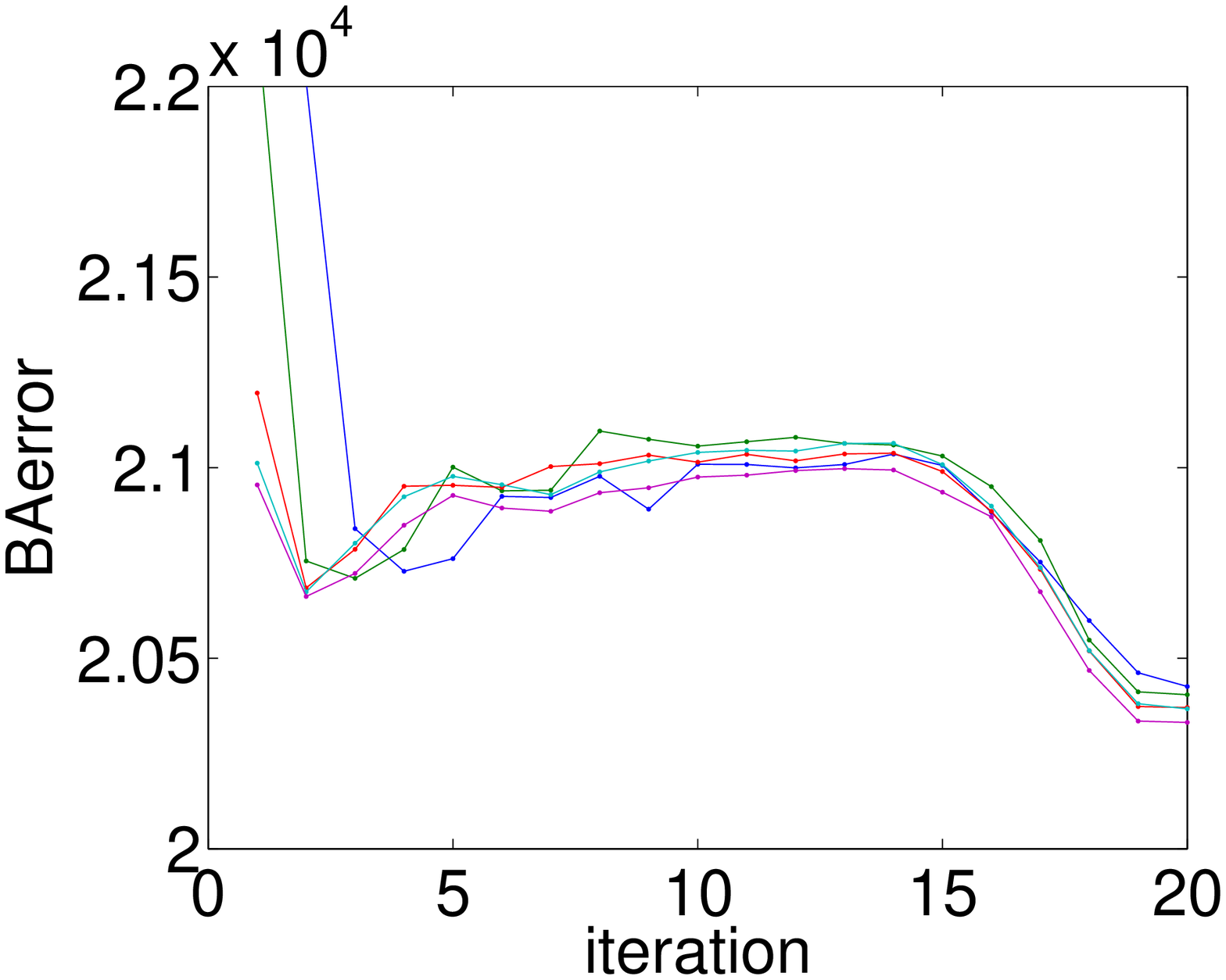} &
    \includegraphics[width=0.24\linewidth]{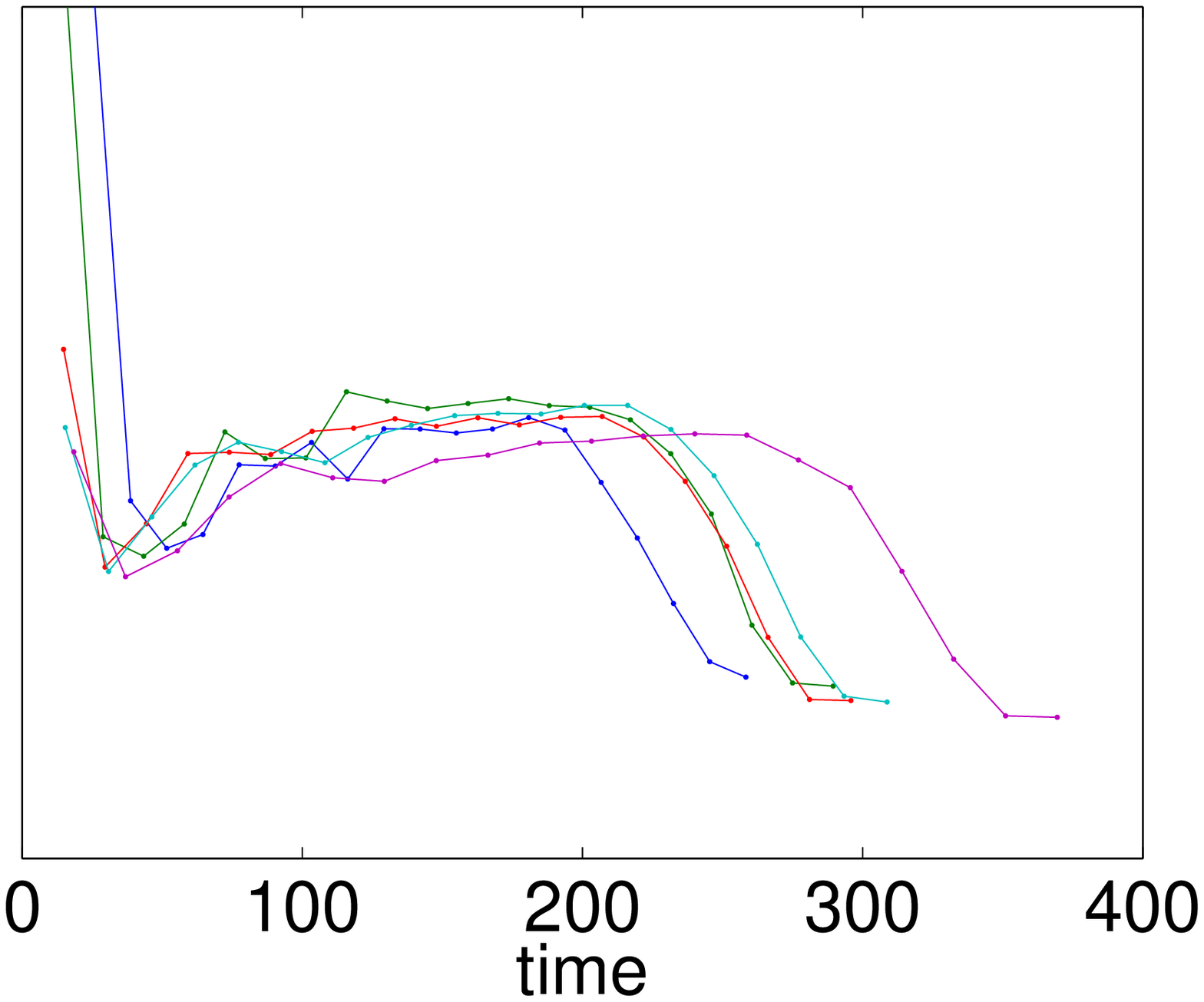} &
    \includegraphics[width=0.235\linewidth]{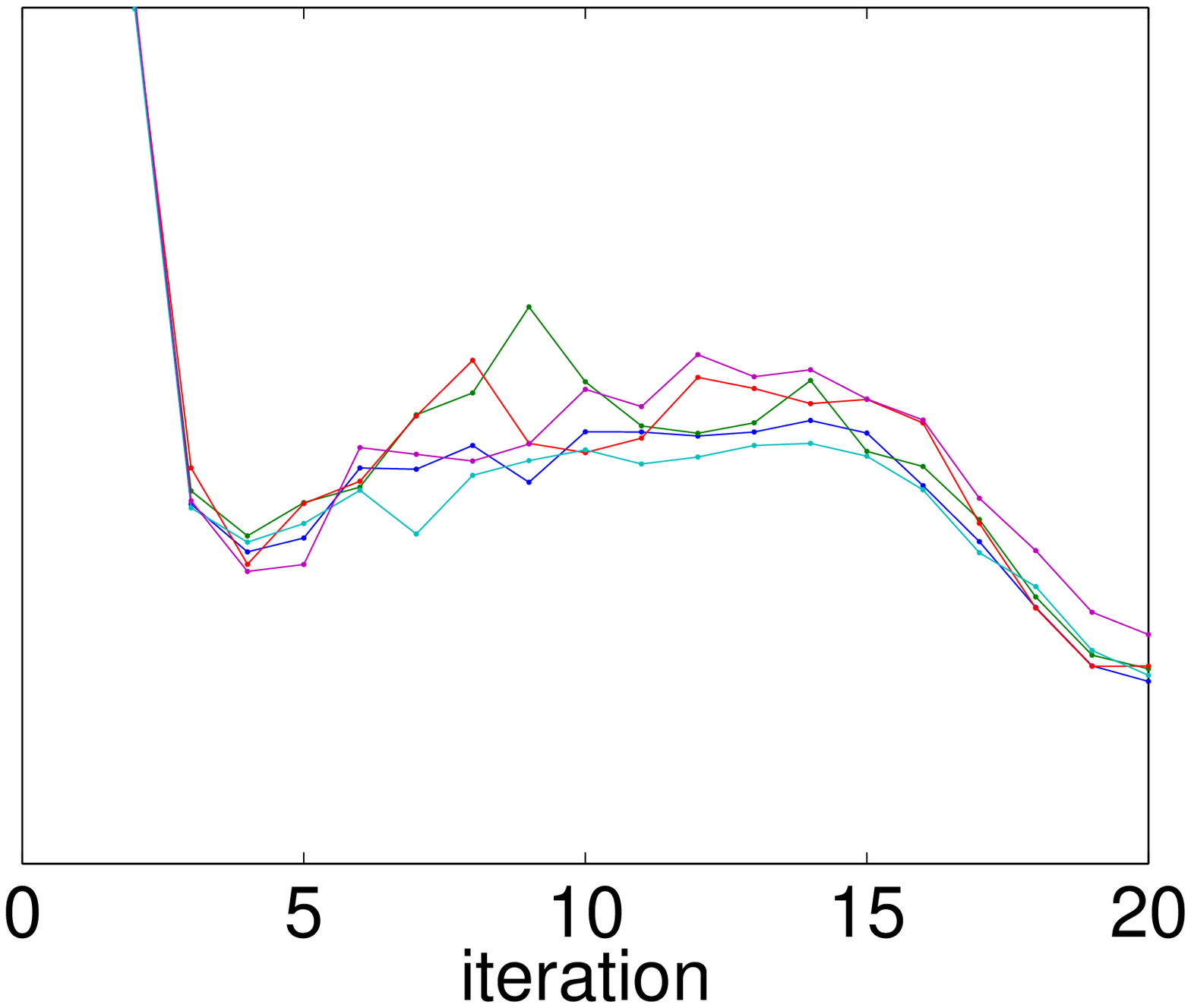} &
    \includegraphics[width=0.235\linewidth]{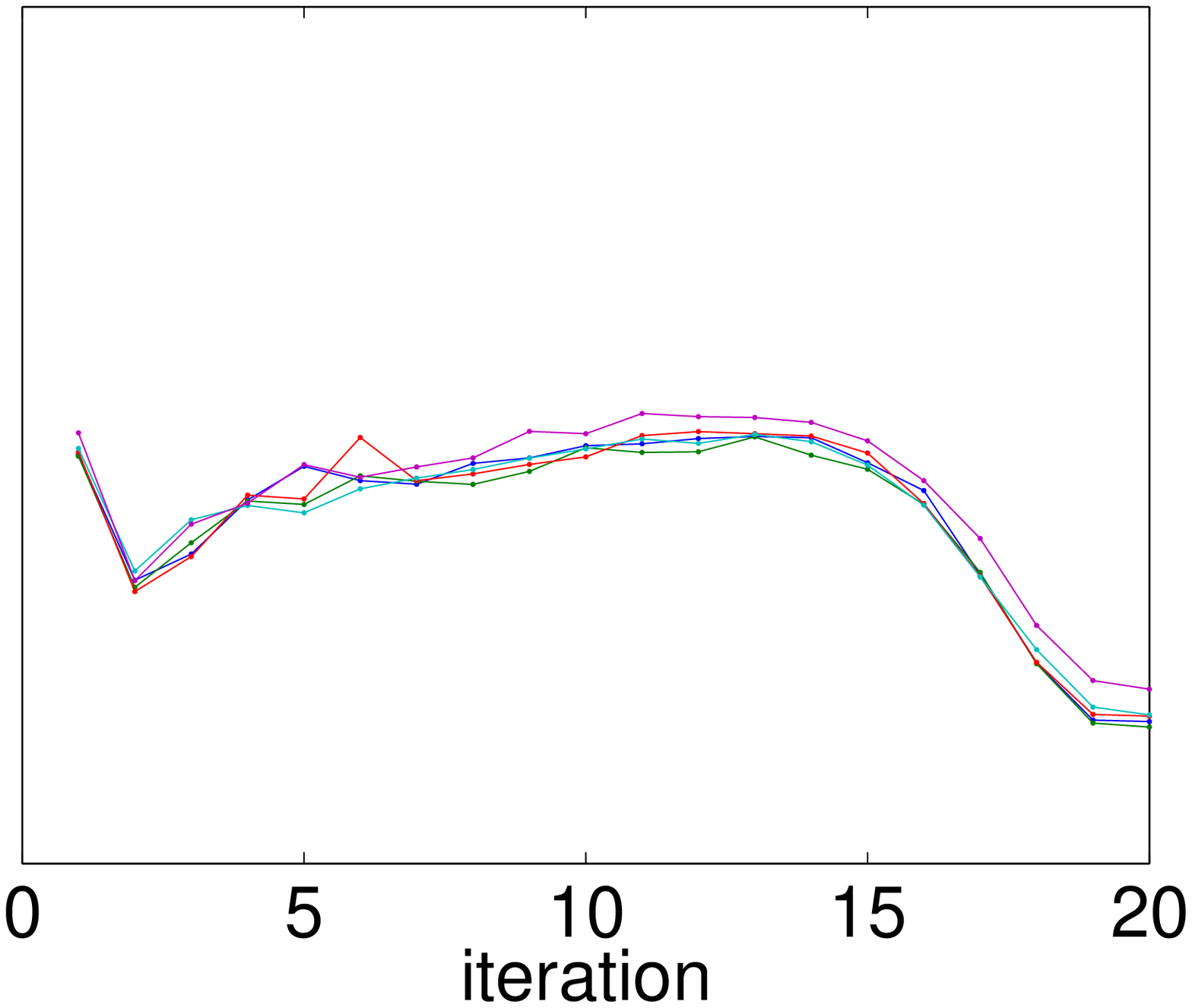} \\[-2.5ex]
    \psfrag{iteration}[t][]{iteration}
    \includegraphics[width=0.274\linewidth]{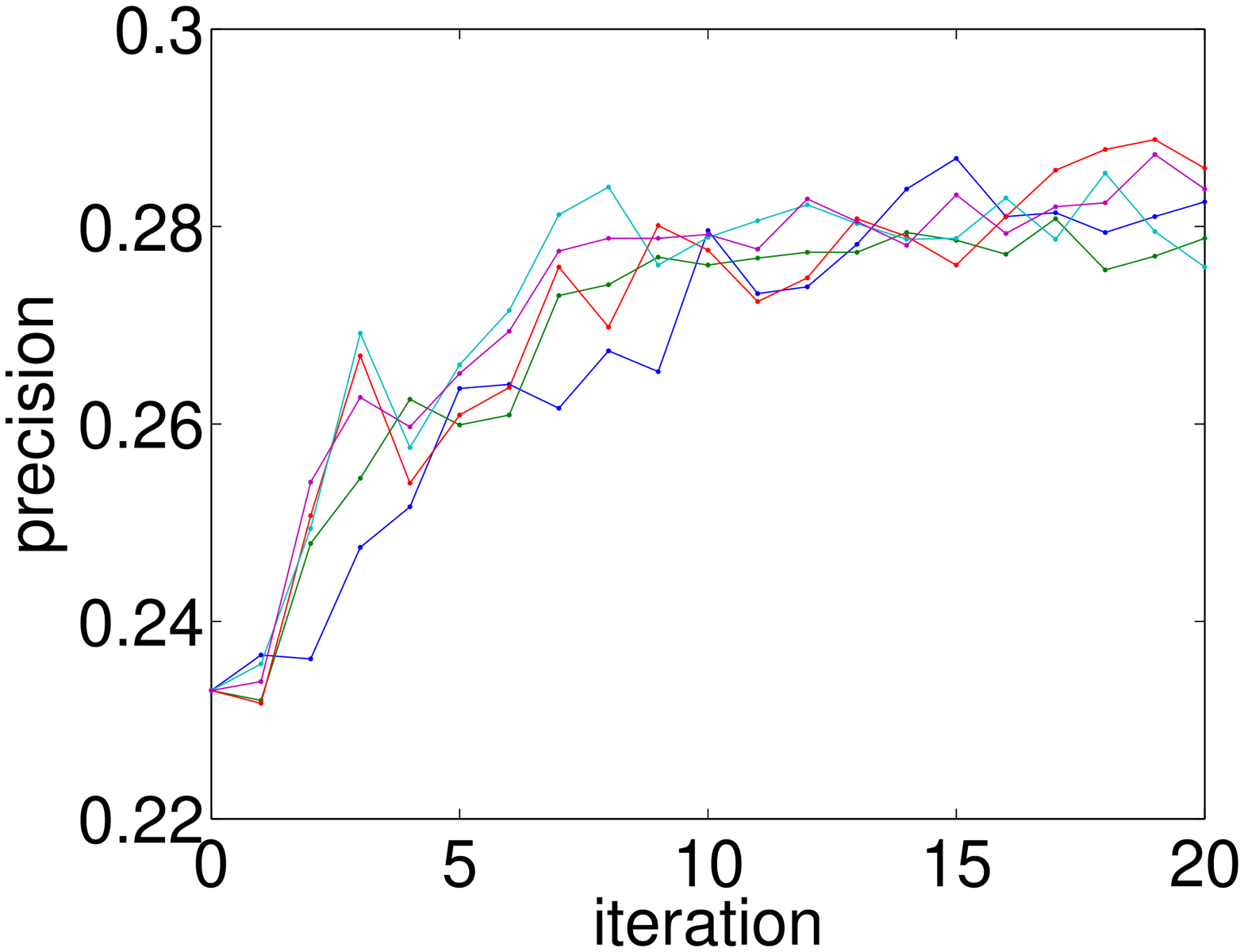} &
    \psfrag{time}[t][]{time}
    \includegraphics[width=0.24\linewidth]{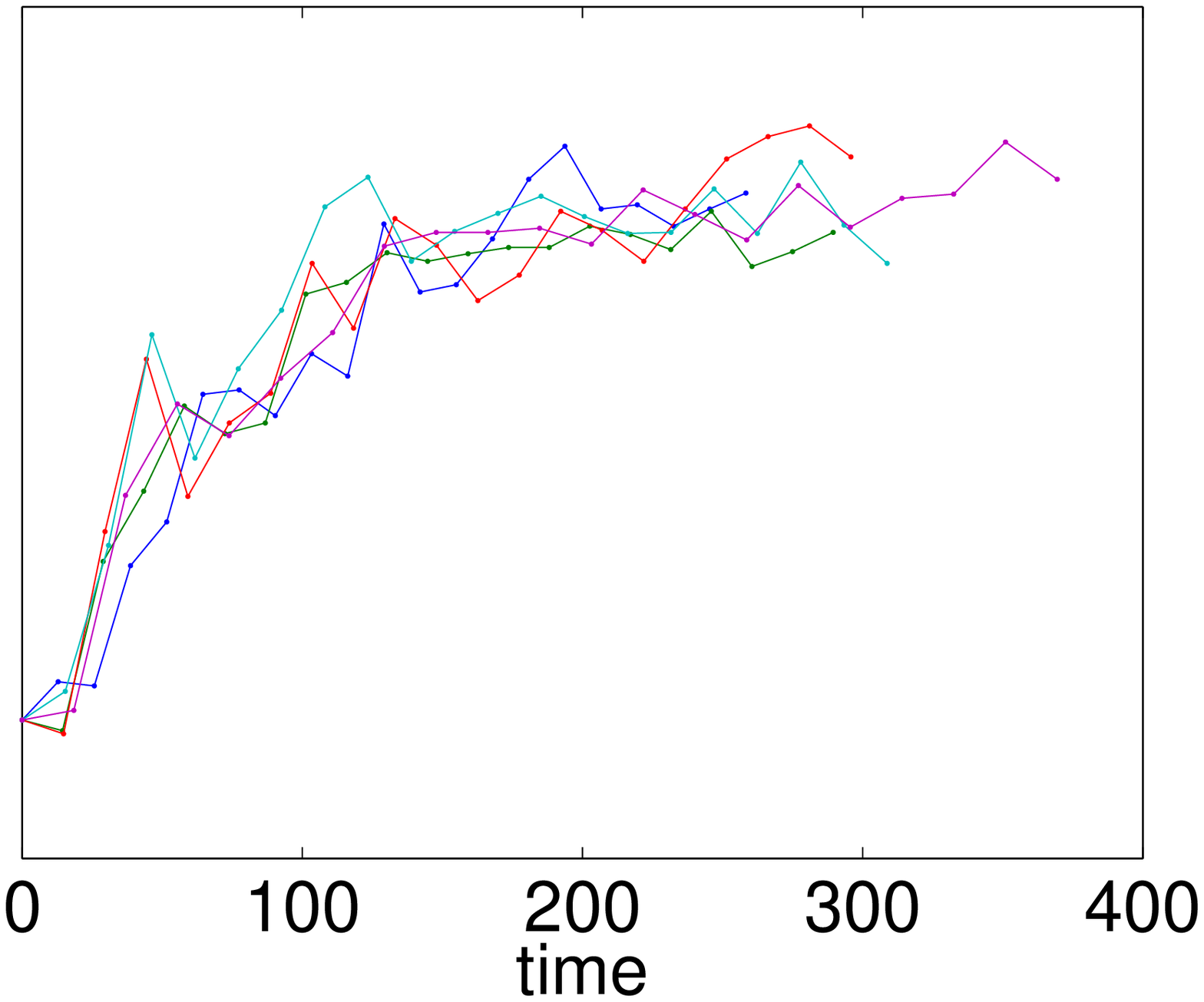} &
    \psfrag{iteration}[t][]{iteration}
    \includegraphics[width=0.235\linewidth]{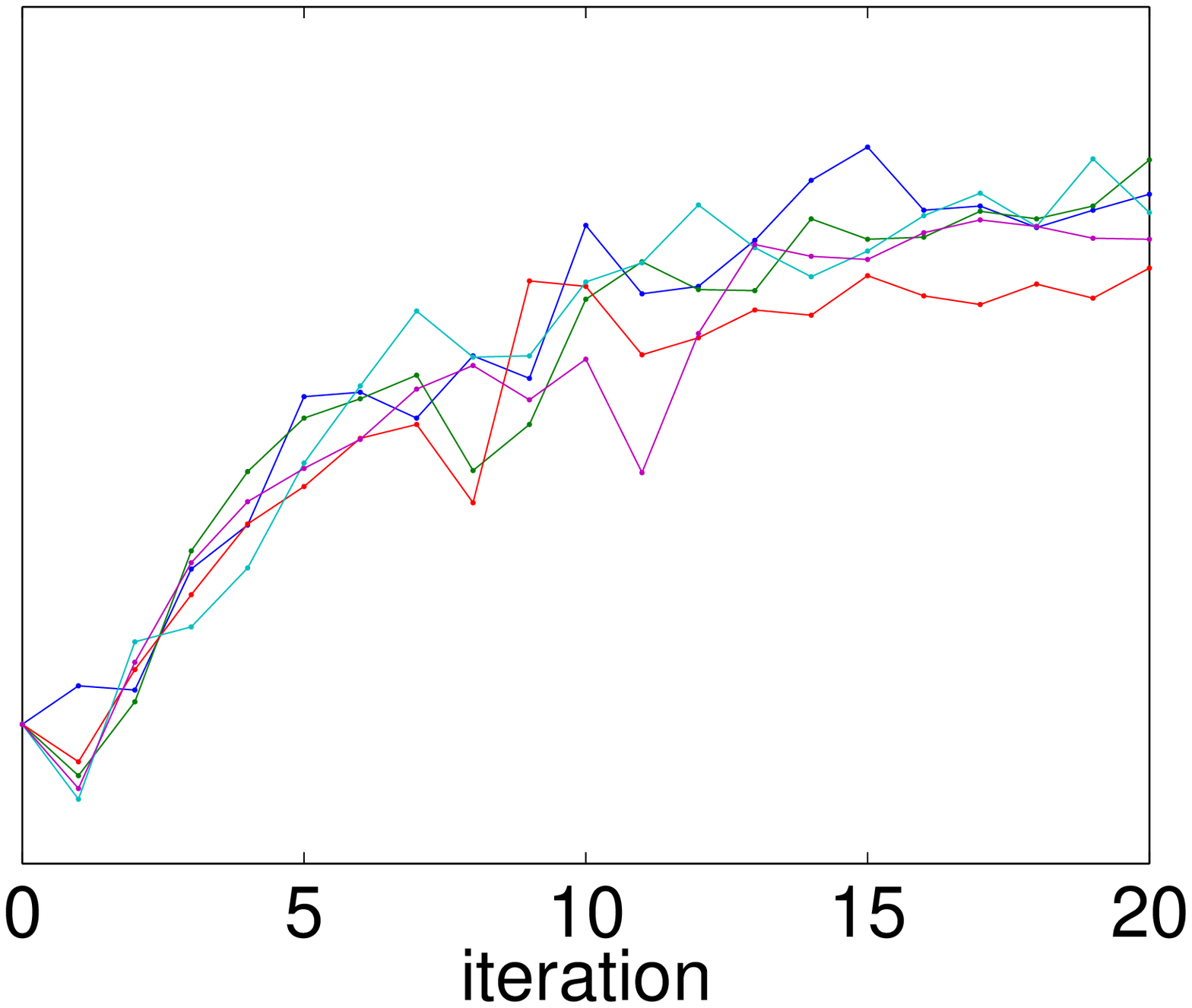} &
    \psfrag{iteration}[t][]{iteration}
    \includegraphics[width=0.235\linewidth]{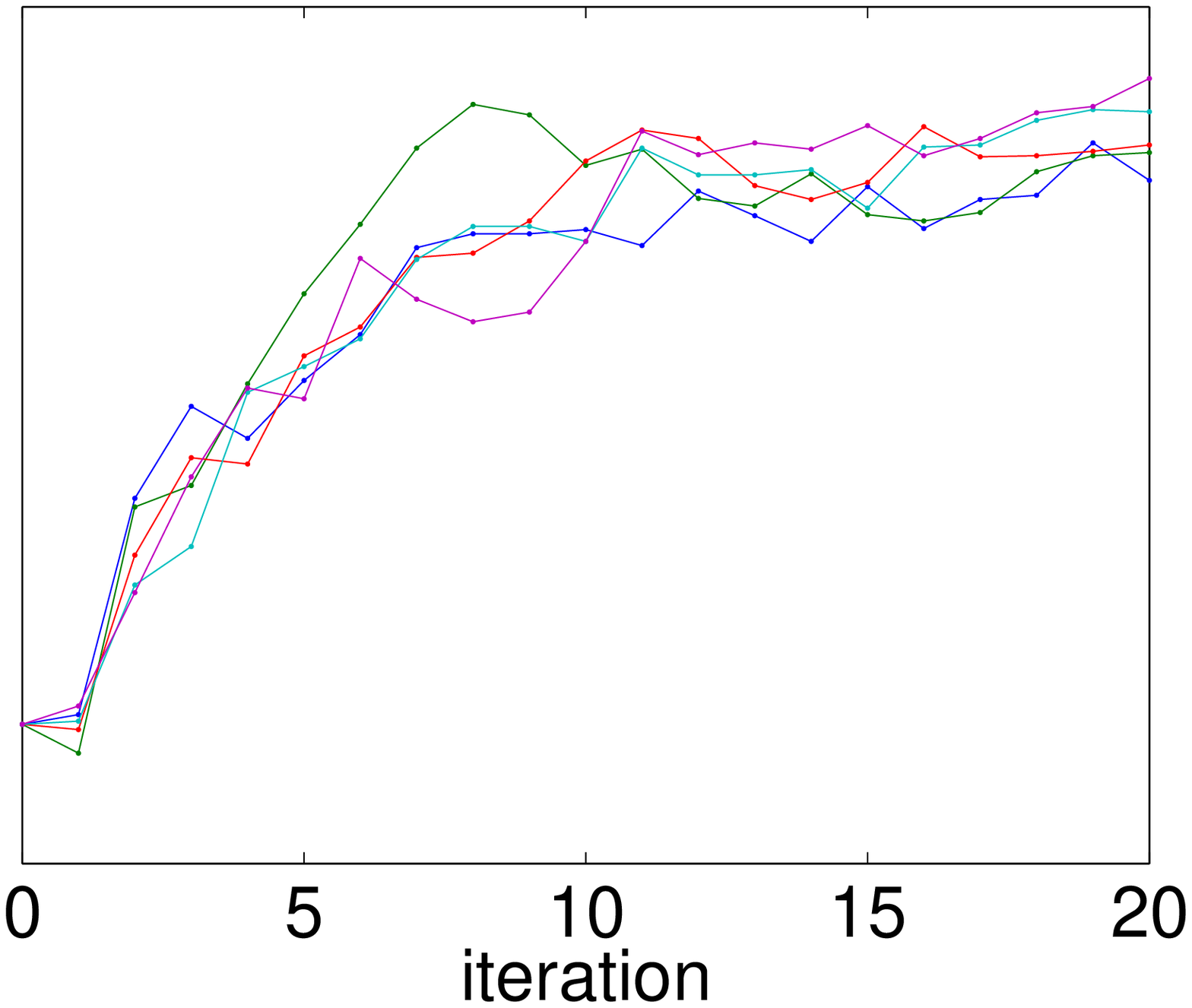}
  \end{tabular}
  \caption{SIFT-10K dataset. \emph{Left two columns}: single machine ($P = 1$) and different number of epochs $e$ in the \W\ step. \emph{Right two columns}: fixed number of epochs (either 1 or 8) but different number of machines $P$.}
  \label{f:sift10k-epochs}
\end{figure}

\begin{figure}[t]
  \psfrag{iteration}{}
  \psfrag{time}{}
  \psfrag{QPerror}[][t]{\footnotesize$E_Q$}
  \psfrag{BAerror}[][t]{\footnotesize$E_{\text{BA}}$}
  \psfrag{precision}[][t]{precision}
  \begin{tabular}{@{}c@{\hspace{0\linewidth}}c@{\hspace{0.015\linewidth}}c@{\hspace{0\linewidth}}c@{}}
    \multicolumn{2}{c}{\makebox[0.45\linewidth][c]{\dotfill Number of epochs in \W\ step\dotfill}} & \multicolumn{2}{c}{\makebox[0.45\linewidth][c]{\dotfill Number of machines $P$\dotfill}} \\
    error-iteration view & error-time view & 2 epochs in \W\ step & 8 epochs in \W\ step \\[-1ex]
    \includegraphics[width=0.274\linewidth]{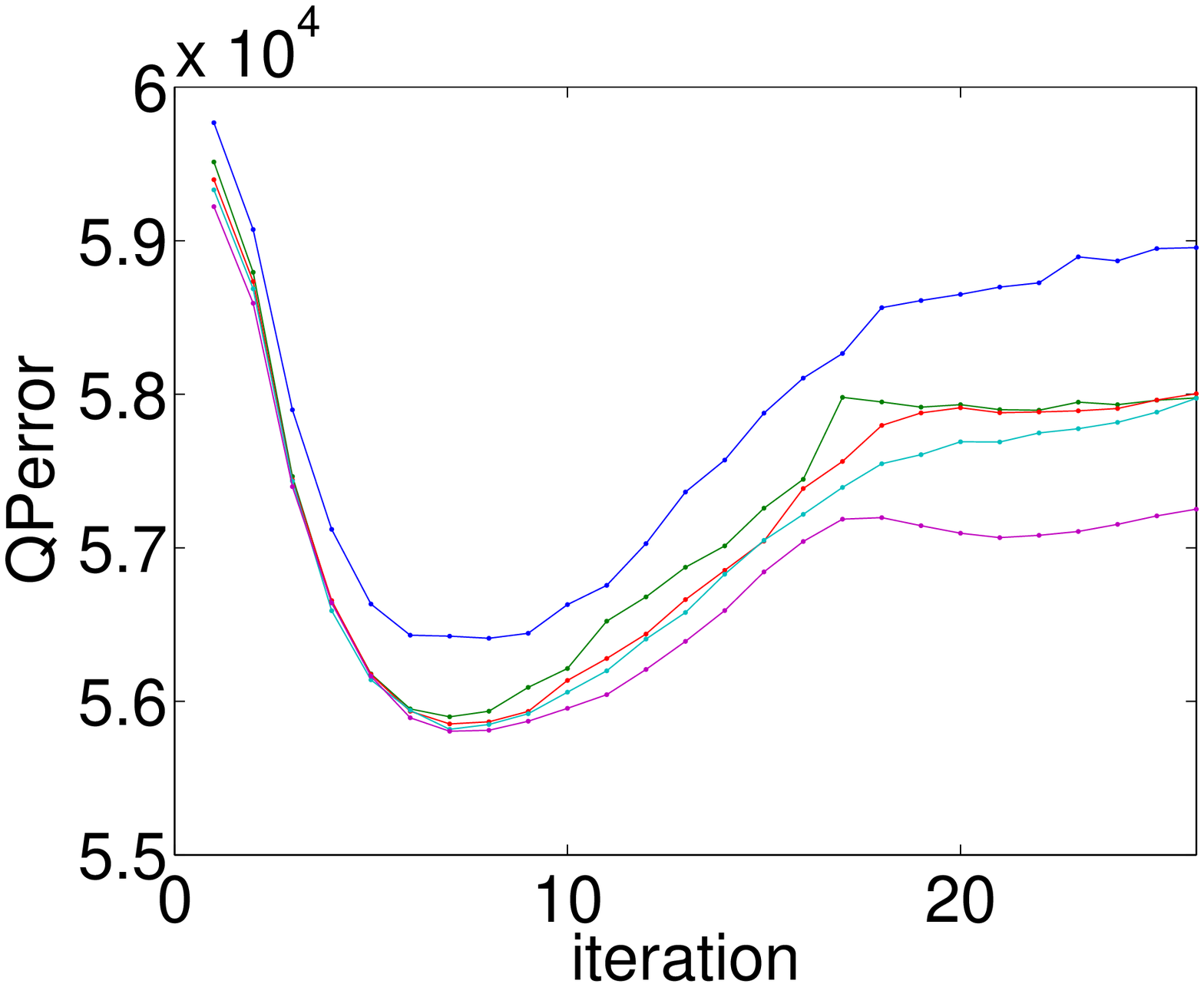} &
    \includegraphics[width=0.235\linewidth]{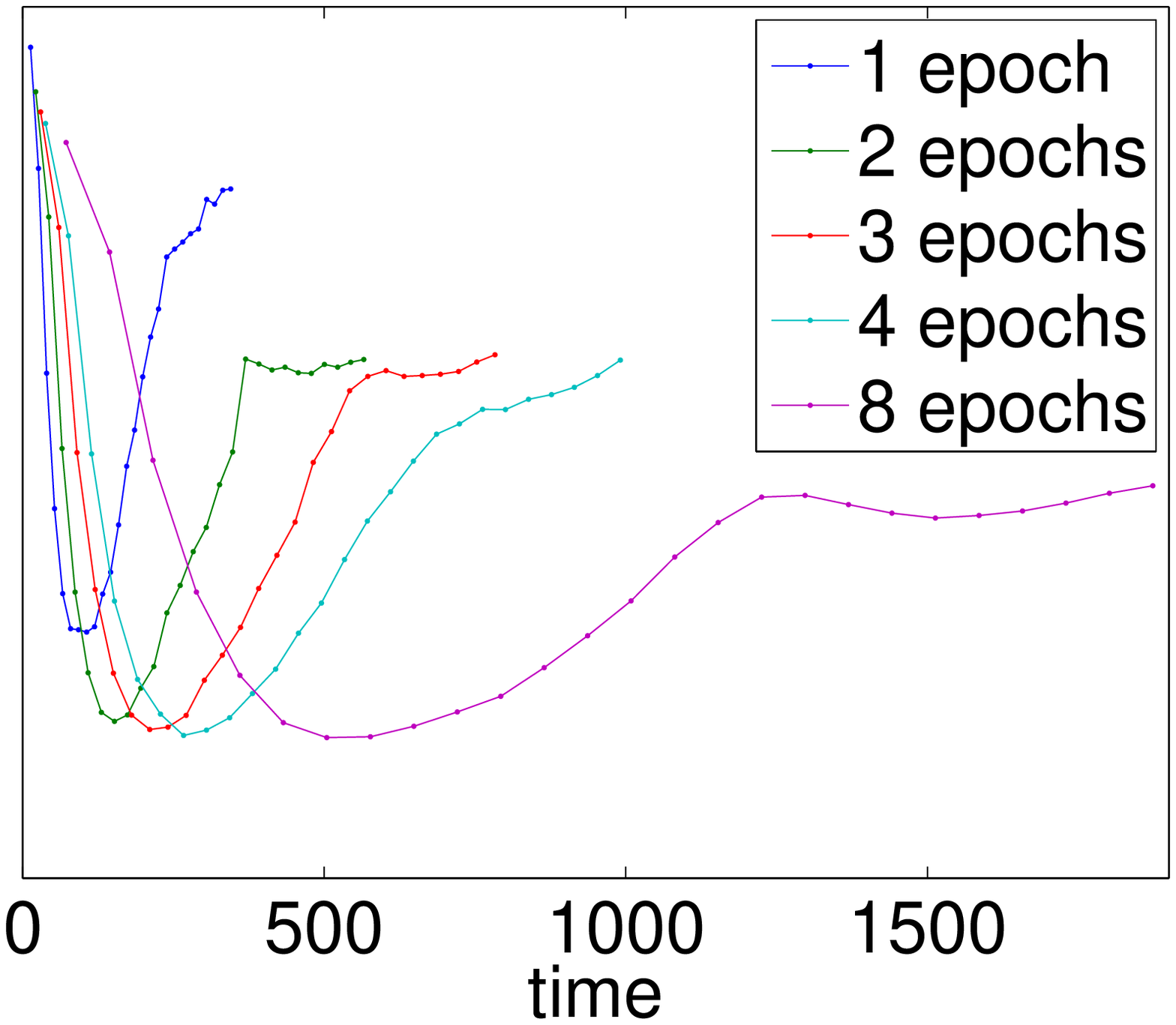} &
    \includegraphics[width=0.235\linewidth]{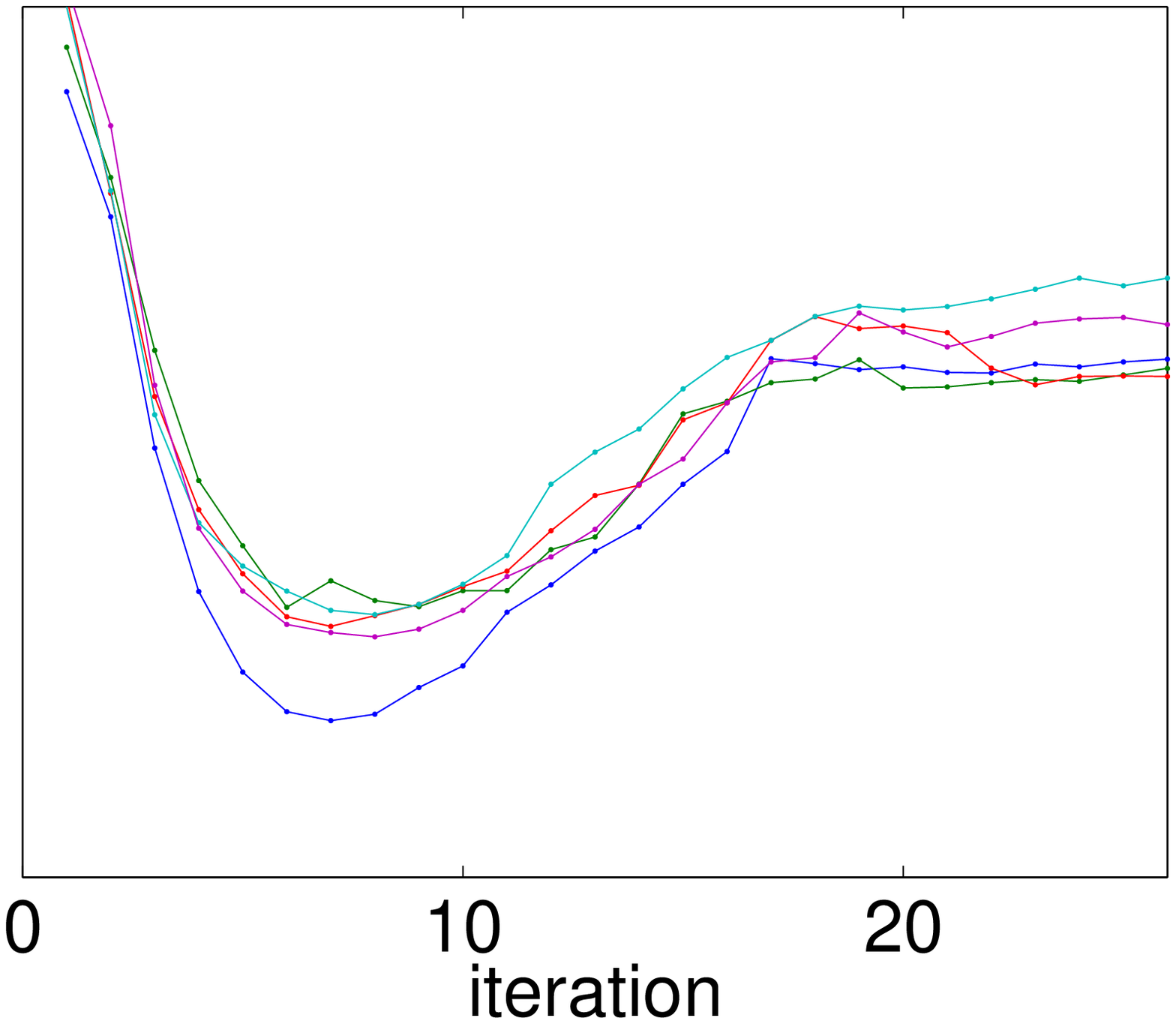} &
    \includegraphics[width=0.235\linewidth]{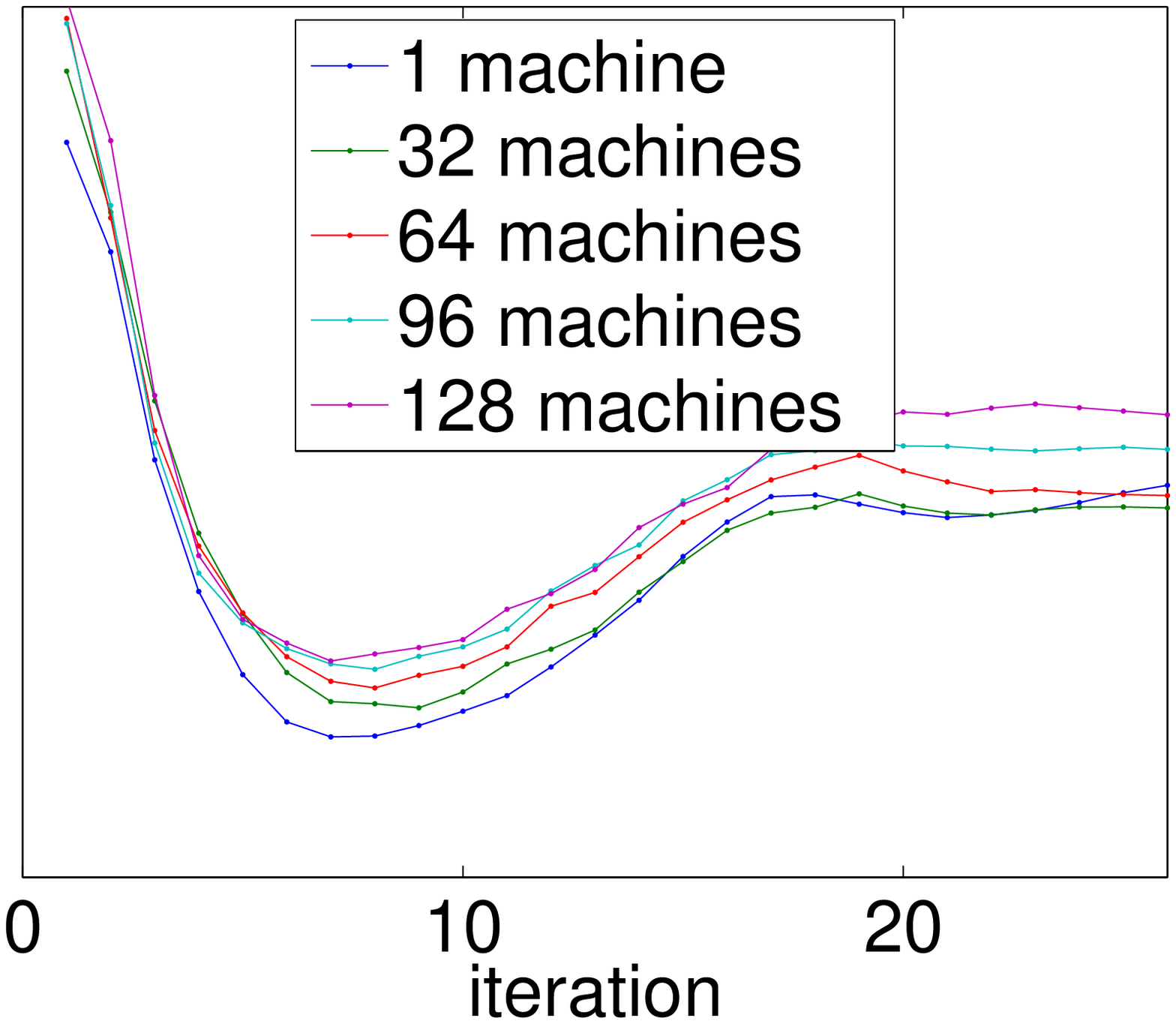} \\[-2.5ex]
    \includegraphics[width=0.274\linewidth]{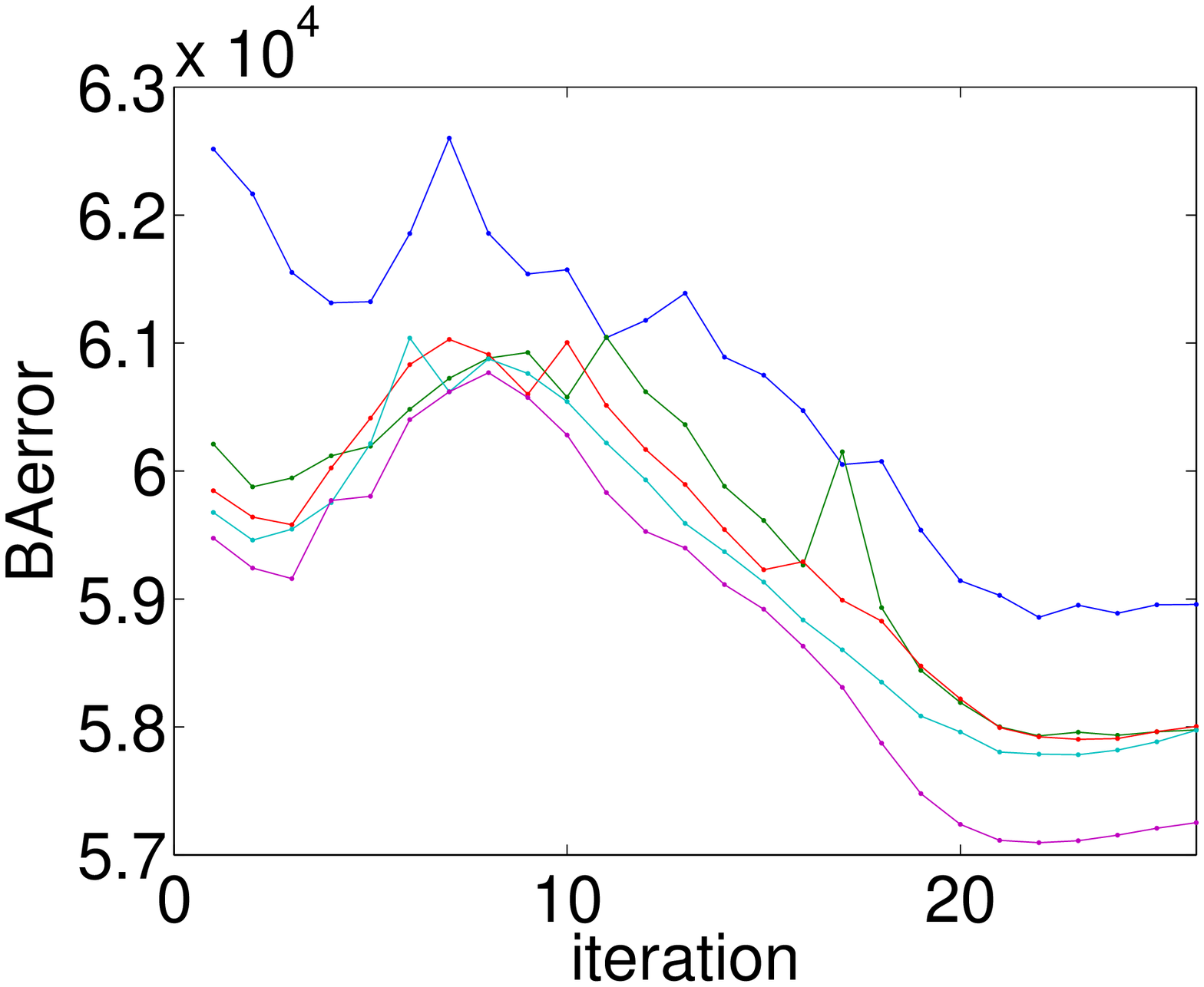} &
    \includegraphics[width=0.235\linewidth]{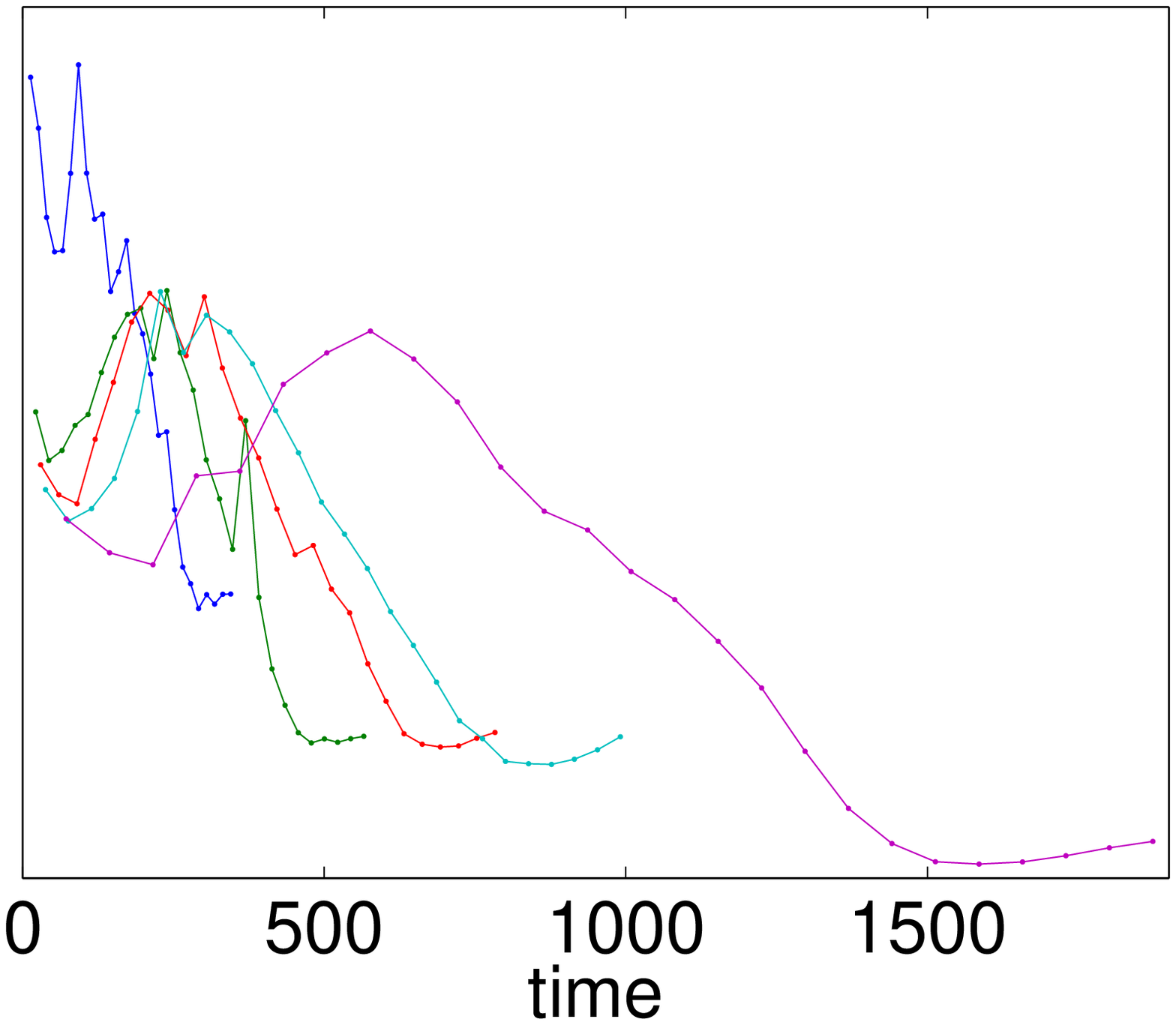} &
    \includegraphics[width=0.235\linewidth]{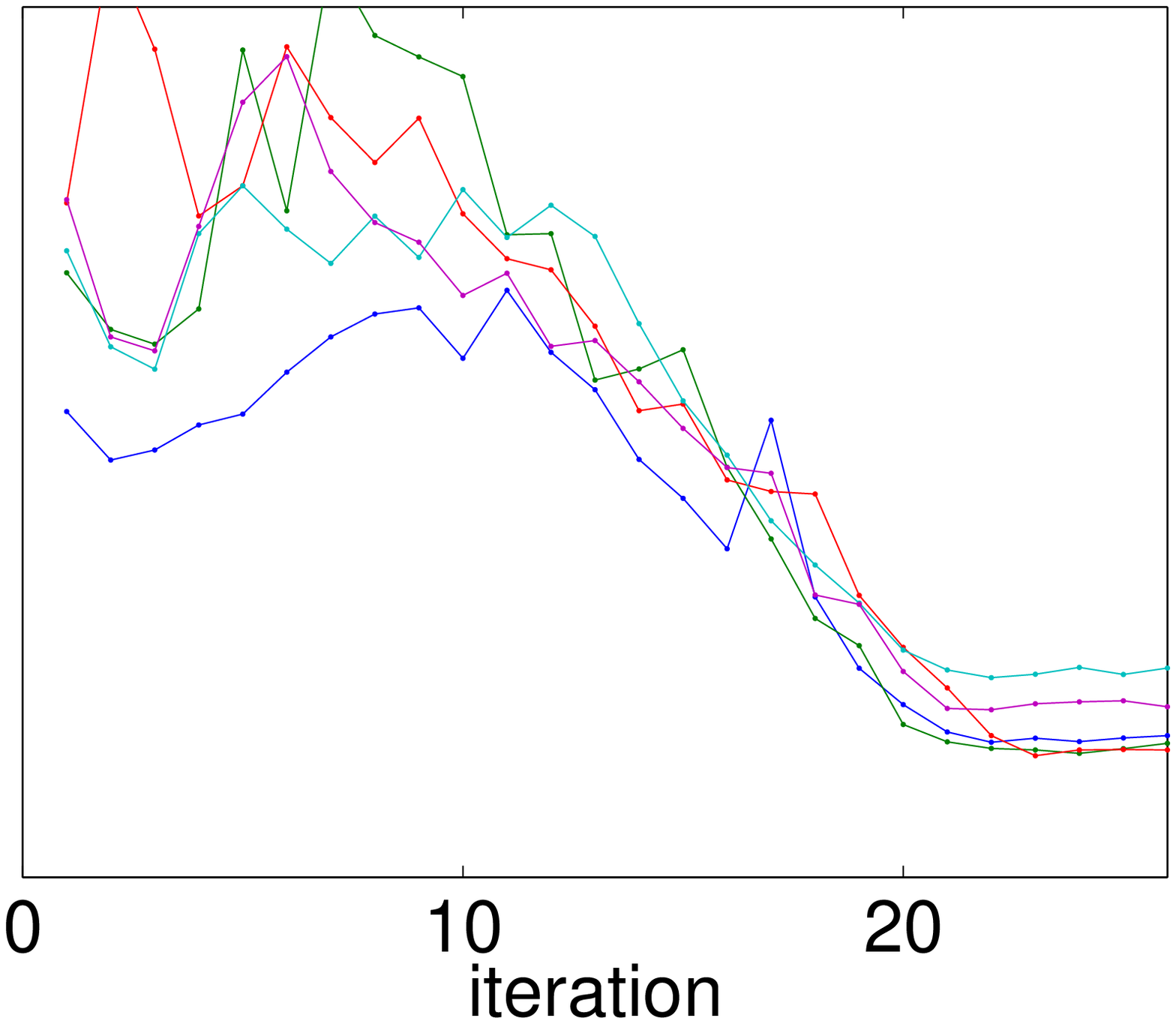} &
    \includegraphics[width=0.235\linewidth]{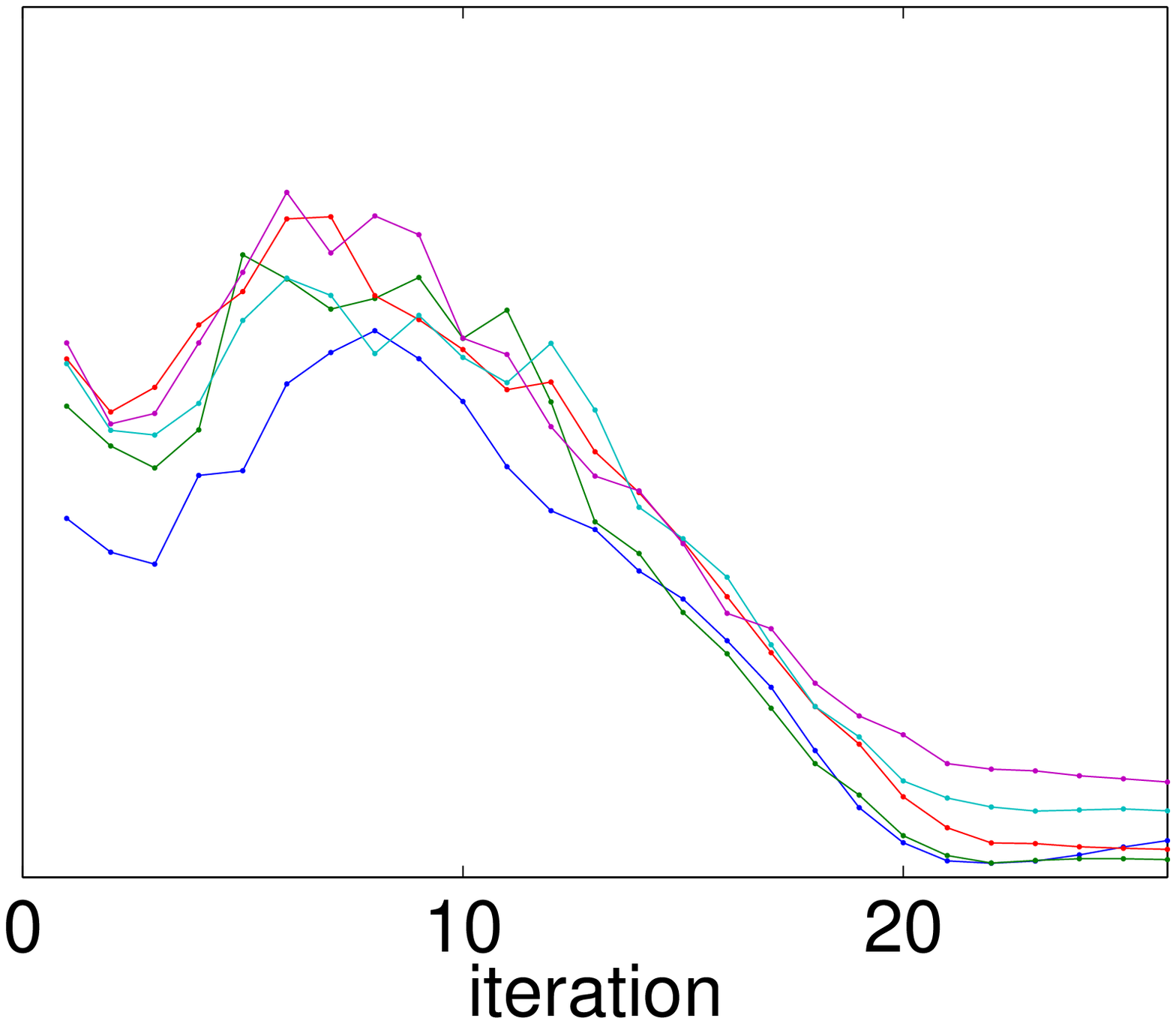} \\[-2.5ex]
    \psfrag{iteration}[t][]{iteration}
    \includegraphics[width=0.274\linewidth]{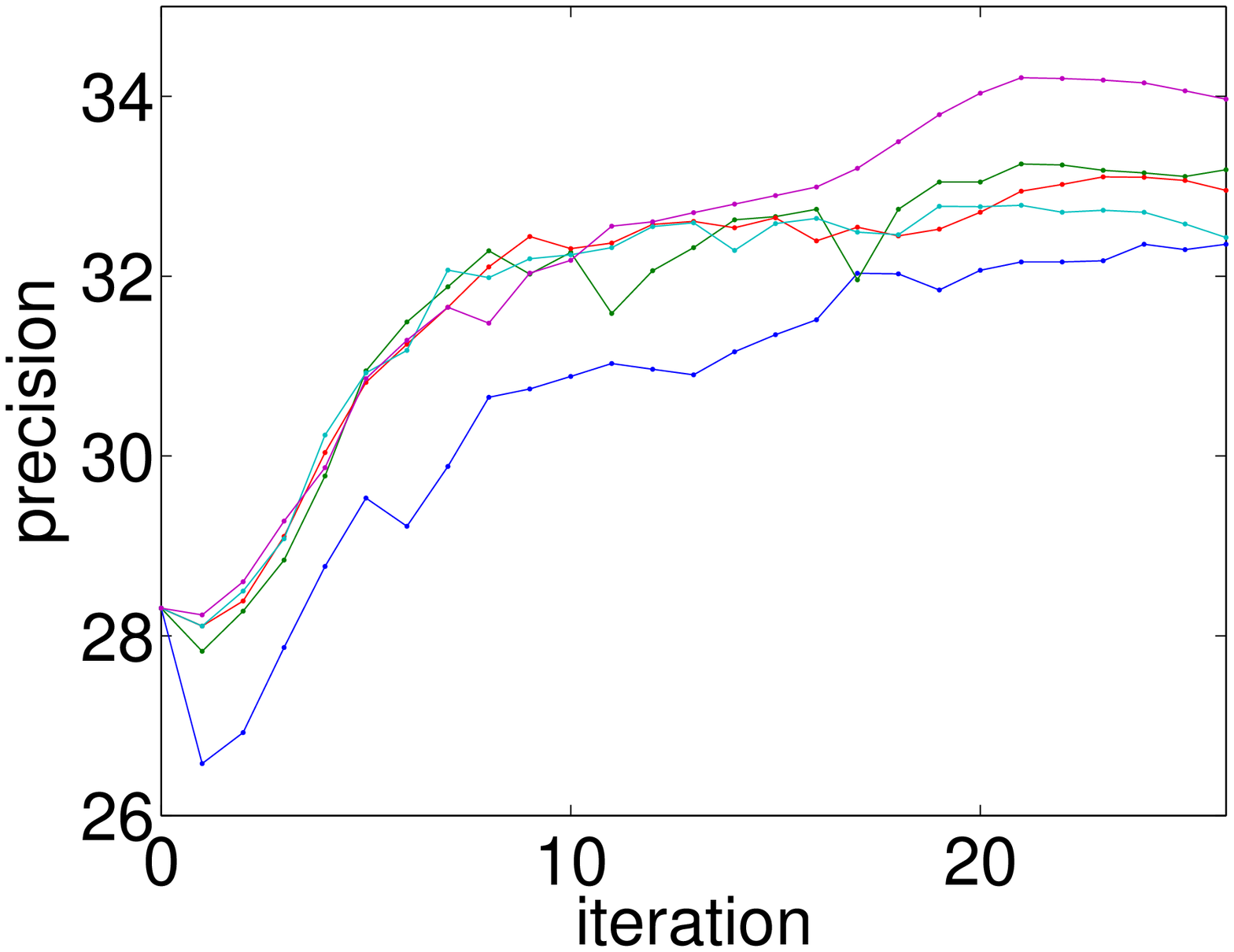} &
    \psfrag{time}[t][]{time}
    \includegraphics[width=0.235\linewidth]{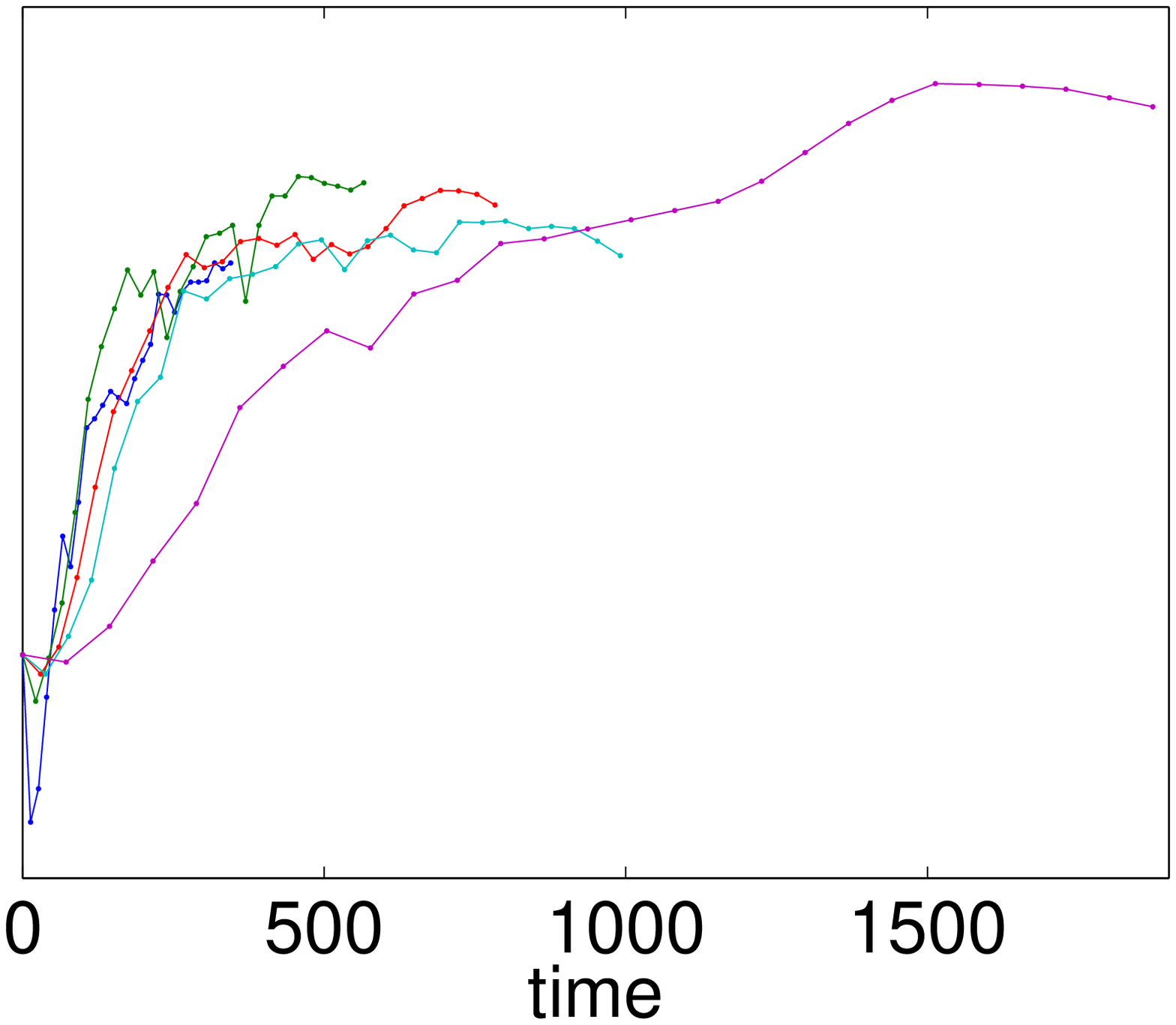} &
    \psfrag{iteration}[t][]{iteration}
    \includegraphics[width=0.235\linewidth]{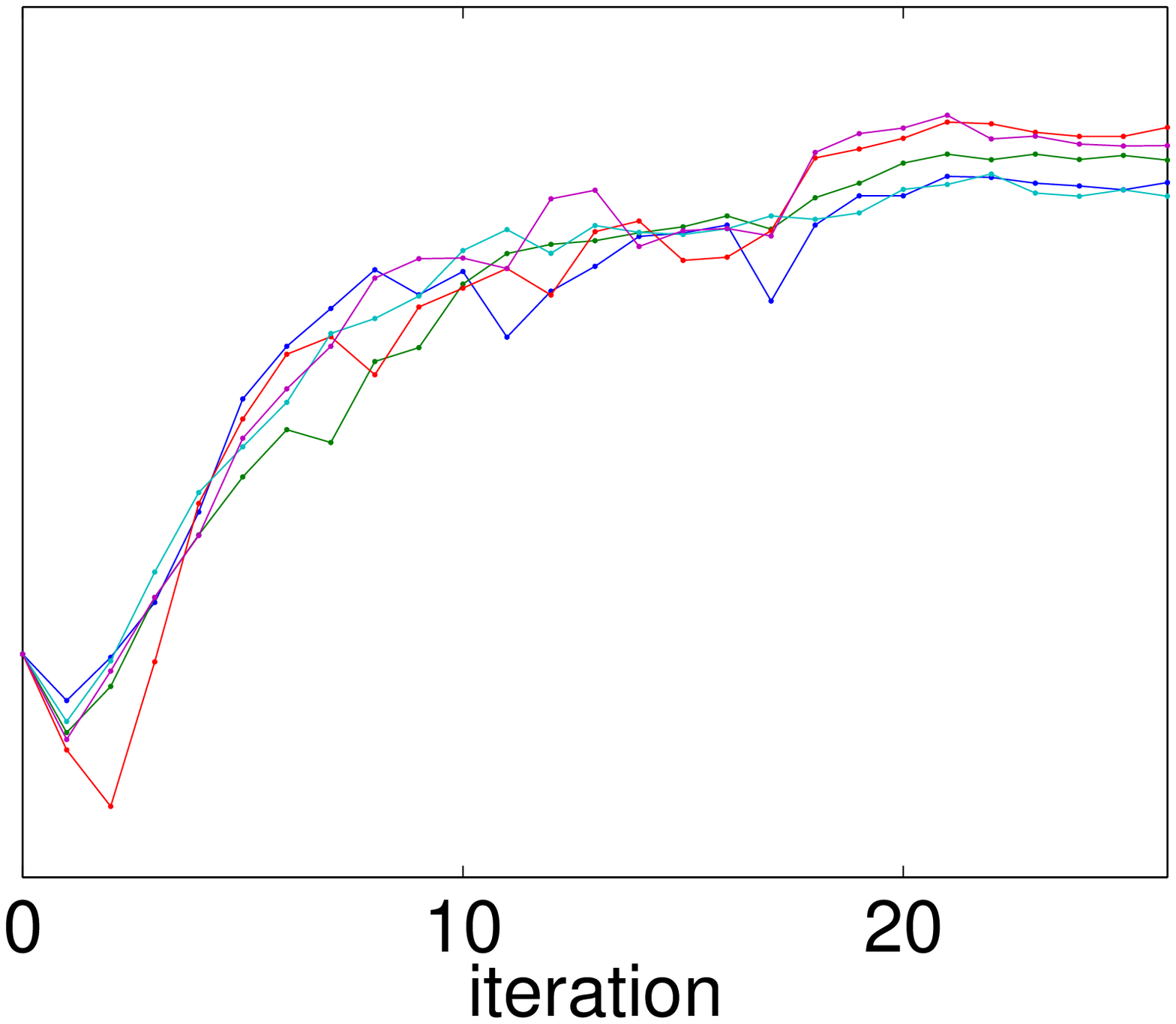} &
    \psfrag{iteration}[t][]{iteration}
    \includegraphics[width=0.235\linewidth]{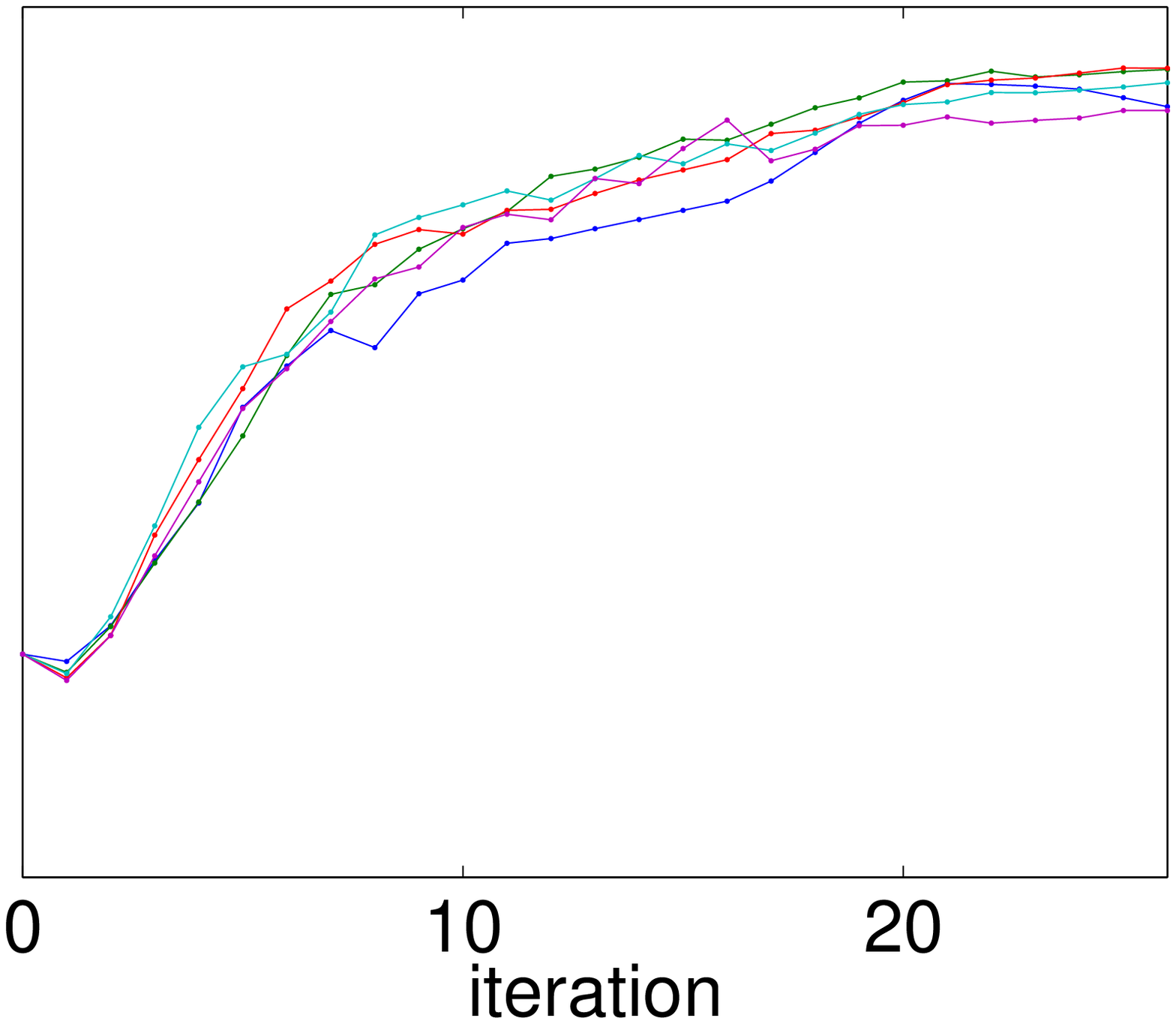}
  \end{tabular}
  \caption{CIFAR dataset. \emph{Left two columns}: single machine ($P = 1$) and different number of epochs $e$ in the \W\ step. \emph{Right two columns}: fixed number of epochs (either 2 or 8) but different number of machines $P$.}
  \label{f:cifar-epochs}
\end{figure}

\begin{figure}[t]
  \psfrag{iteration}{}
  \psfrag{time}{}
  \psfrag{QPerror}[][t]{\footnotesize$E_Q$}
  \psfrag{BAerror}[][t]{\footnotesize$E_{\text{BA}}$}
  \psfrag{precision}[][t]{precision}
  \begin{tabular}{@{}c@{\hspace{0\linewidth}}c@{\hspace{0.015\linewidth}}c@{\hspace{0\linewidth}}c@{}}
    \multicolumn{2}{c}{\makebox[0.45\linewidth][c]{\dotfill Number of epochs in \W\ step\dotfill}} & \multicolumn{2}{c}{\makebox[0.45\linewidth][c]{\dotfill Number of machines $P$\dotfill}} \\
    error-iteration view & error-time view & 2 epochs in \W\ step & 8 epochs in \W\ step \\[-1ex]
    \includegraphics[width=0.274\linewidth]{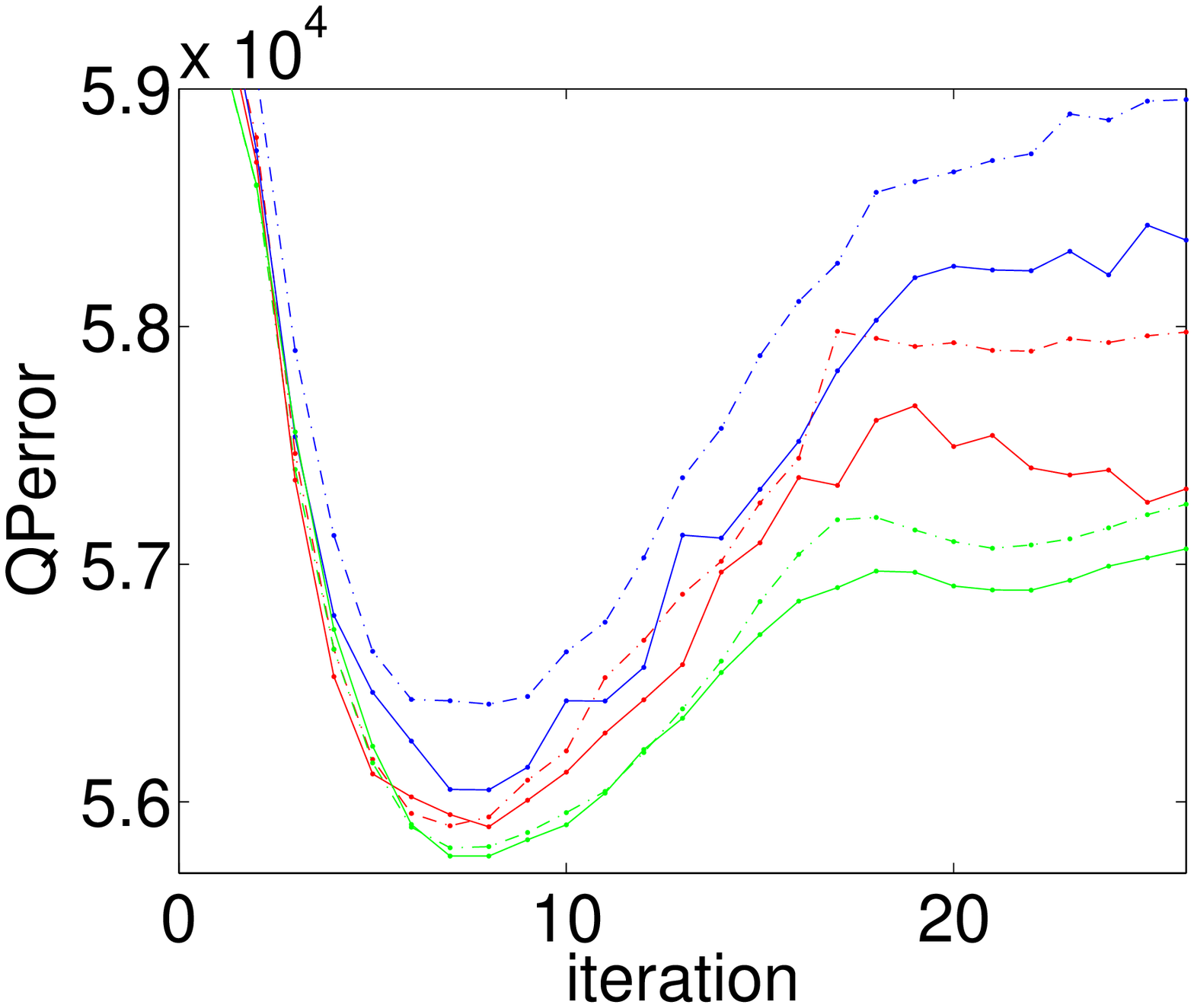} &
    \includegraphics[width=0.235\linewidth]{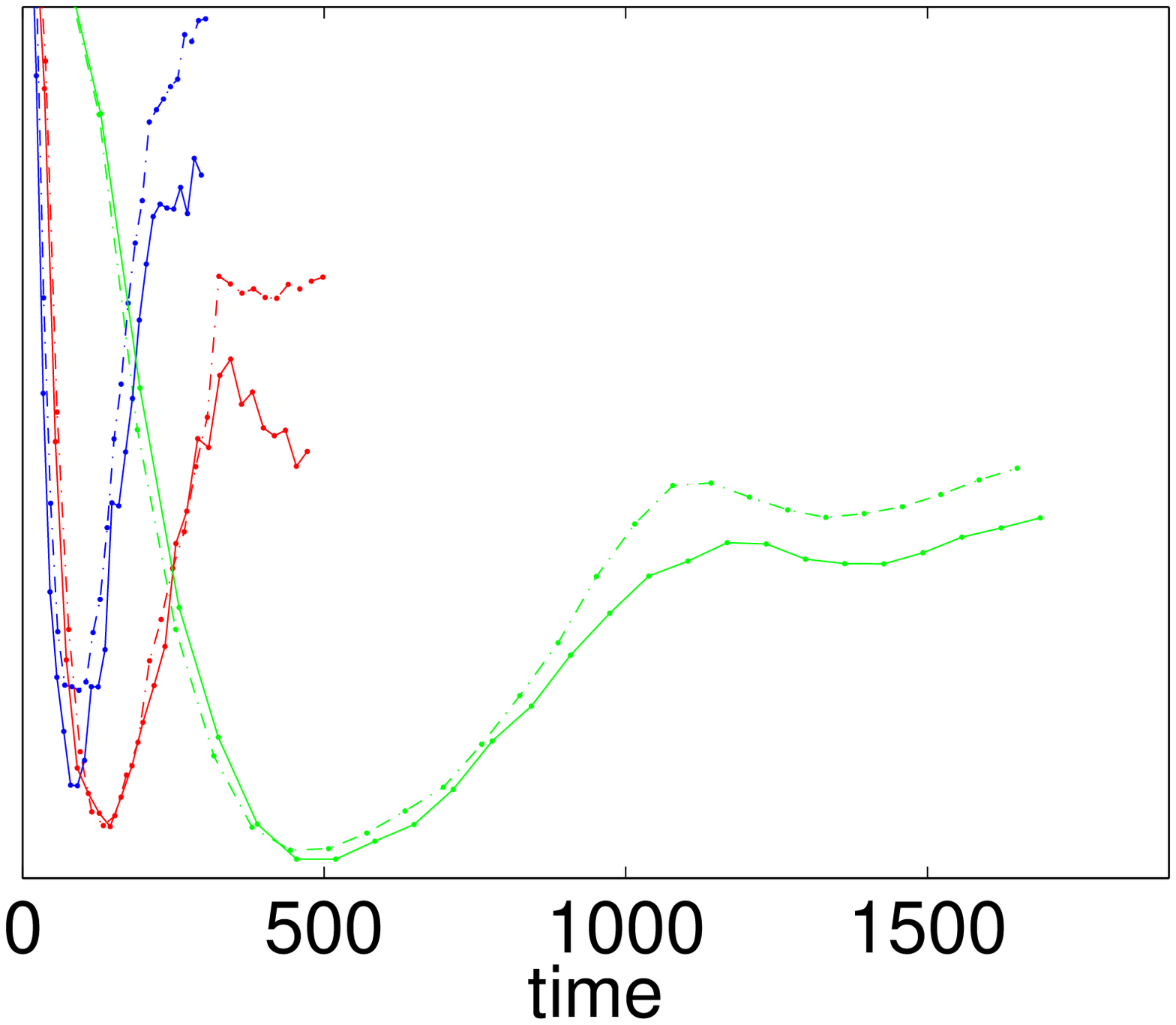} &
    \includegraphics[width=0.235\linewidth]{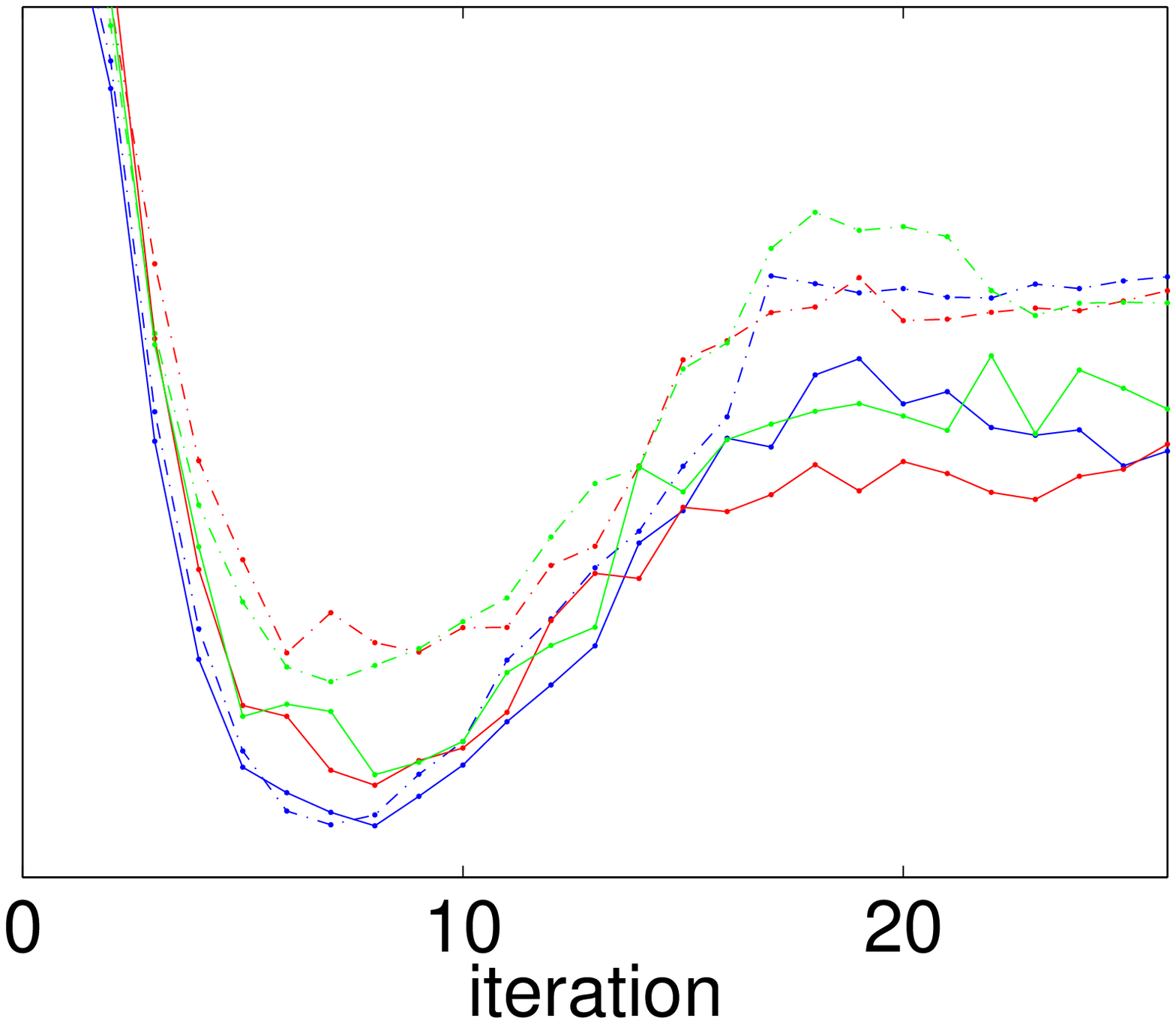} &
    \includegraphics[width=0.235\linewidth]{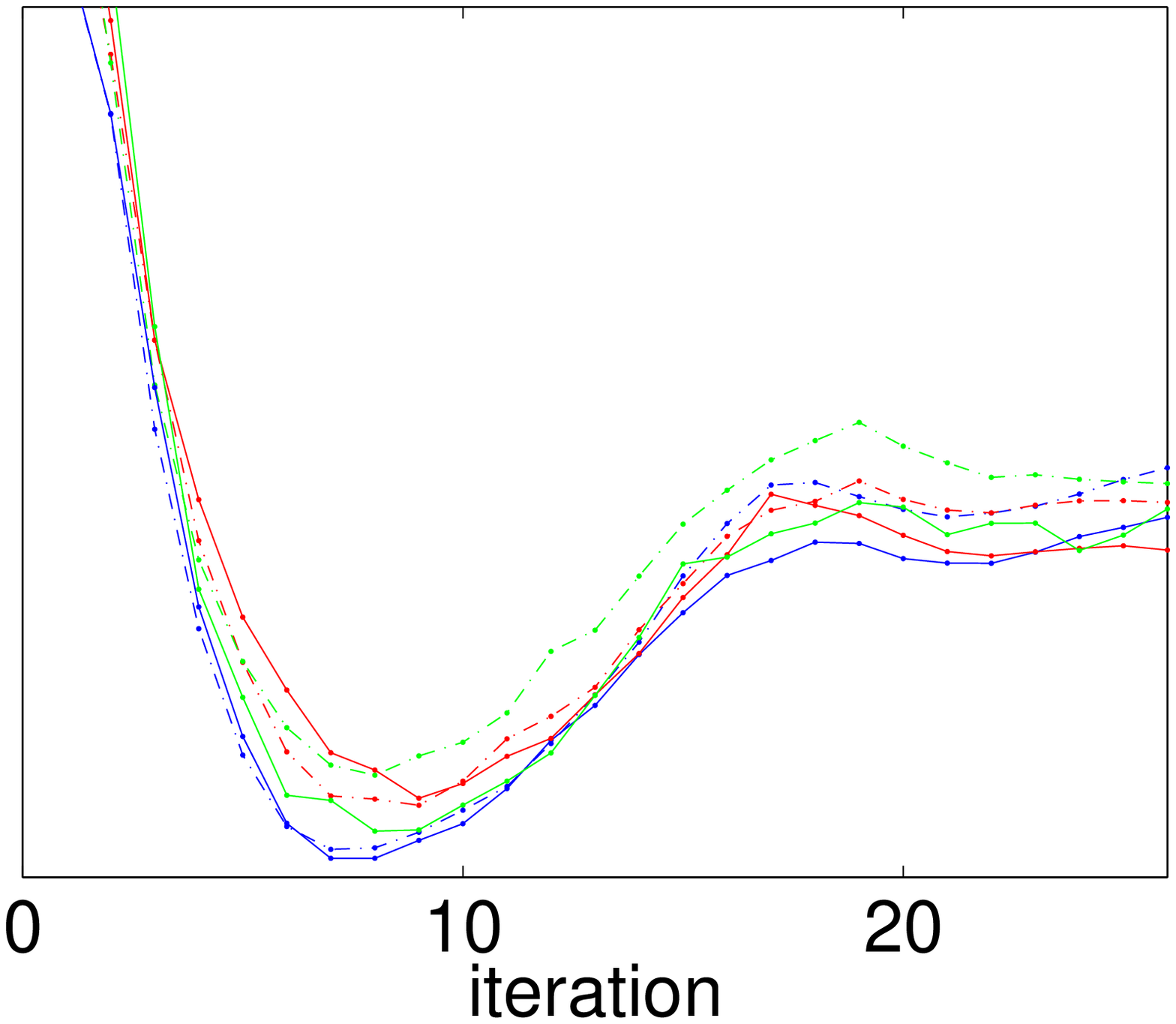} \\[-2.5ex]
    \includegraphics[width=0.274\linewidth]{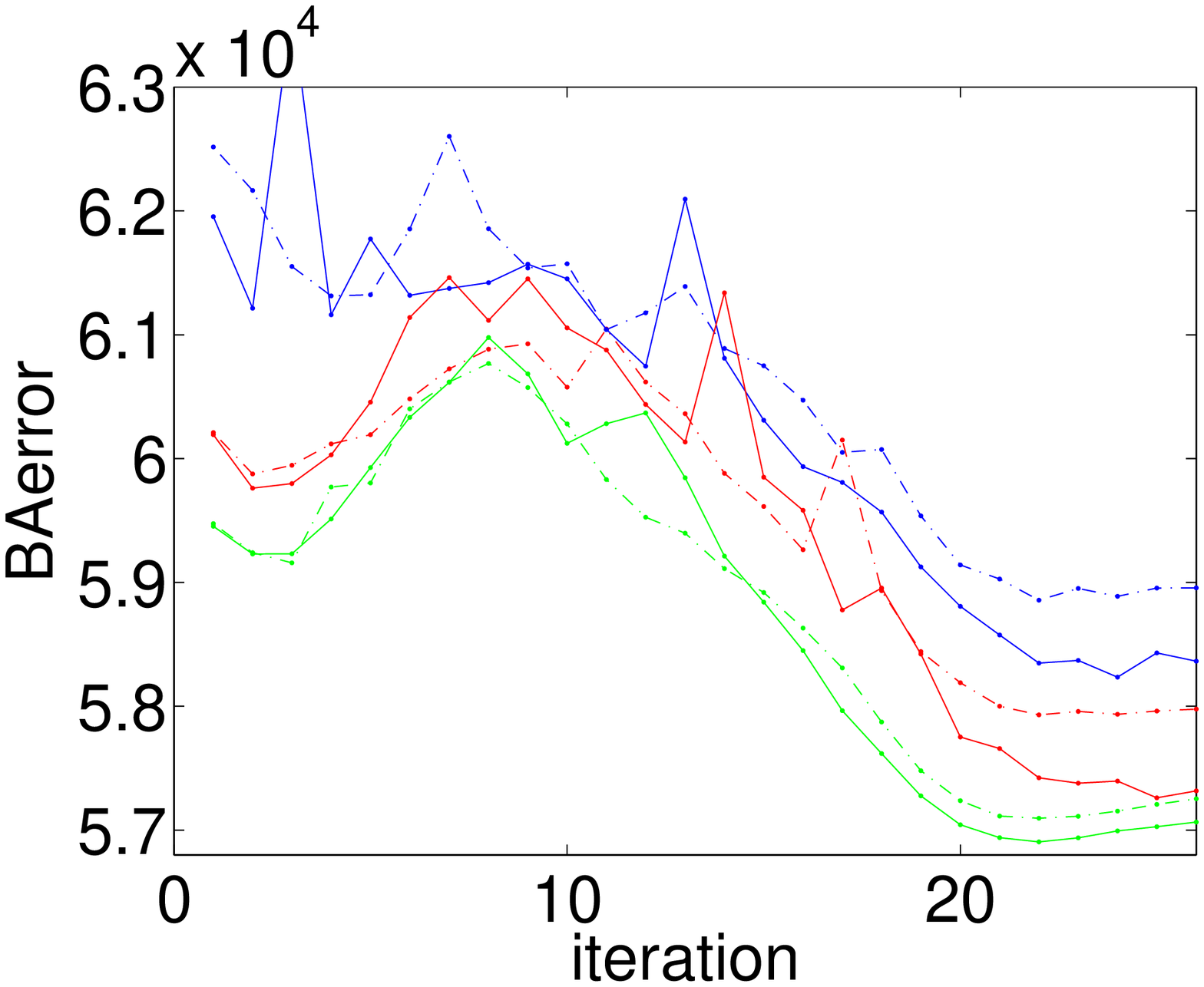} &
    \includegraphics[width=0.235\linewidth]{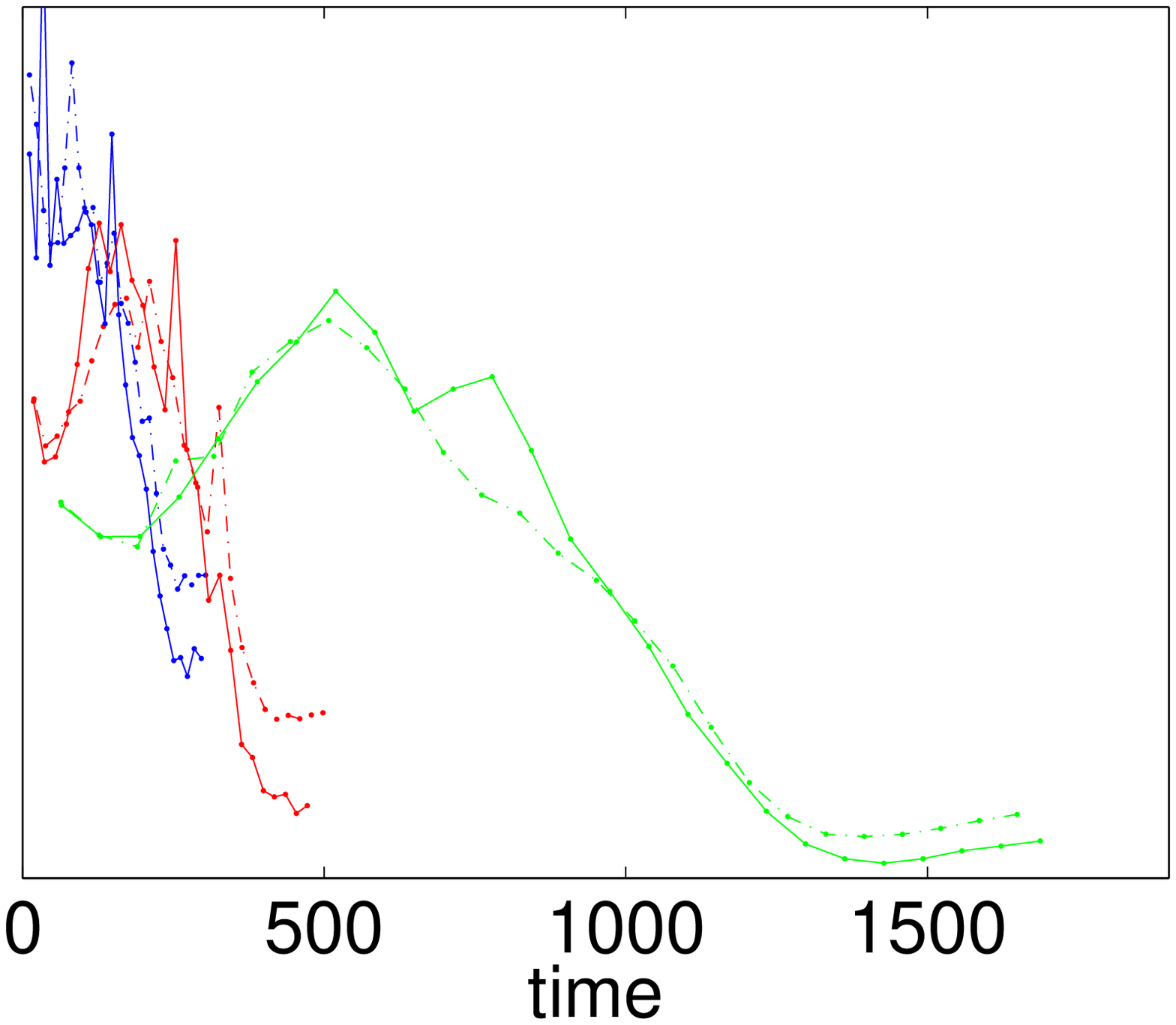} &
    \includegraphics[width=0.235\linewidth]{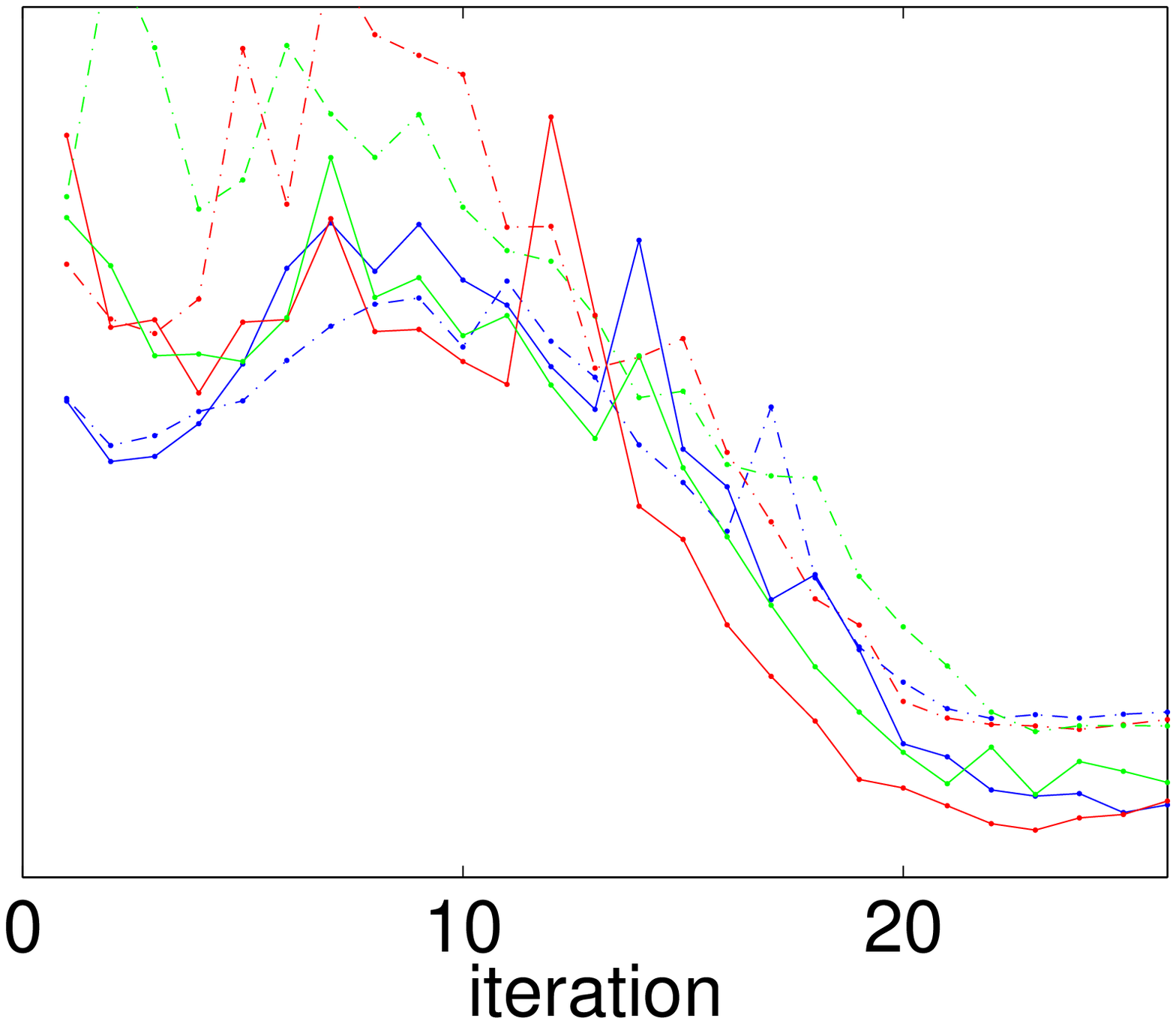} &
    \includegraphics[width=0.235\linewidth]{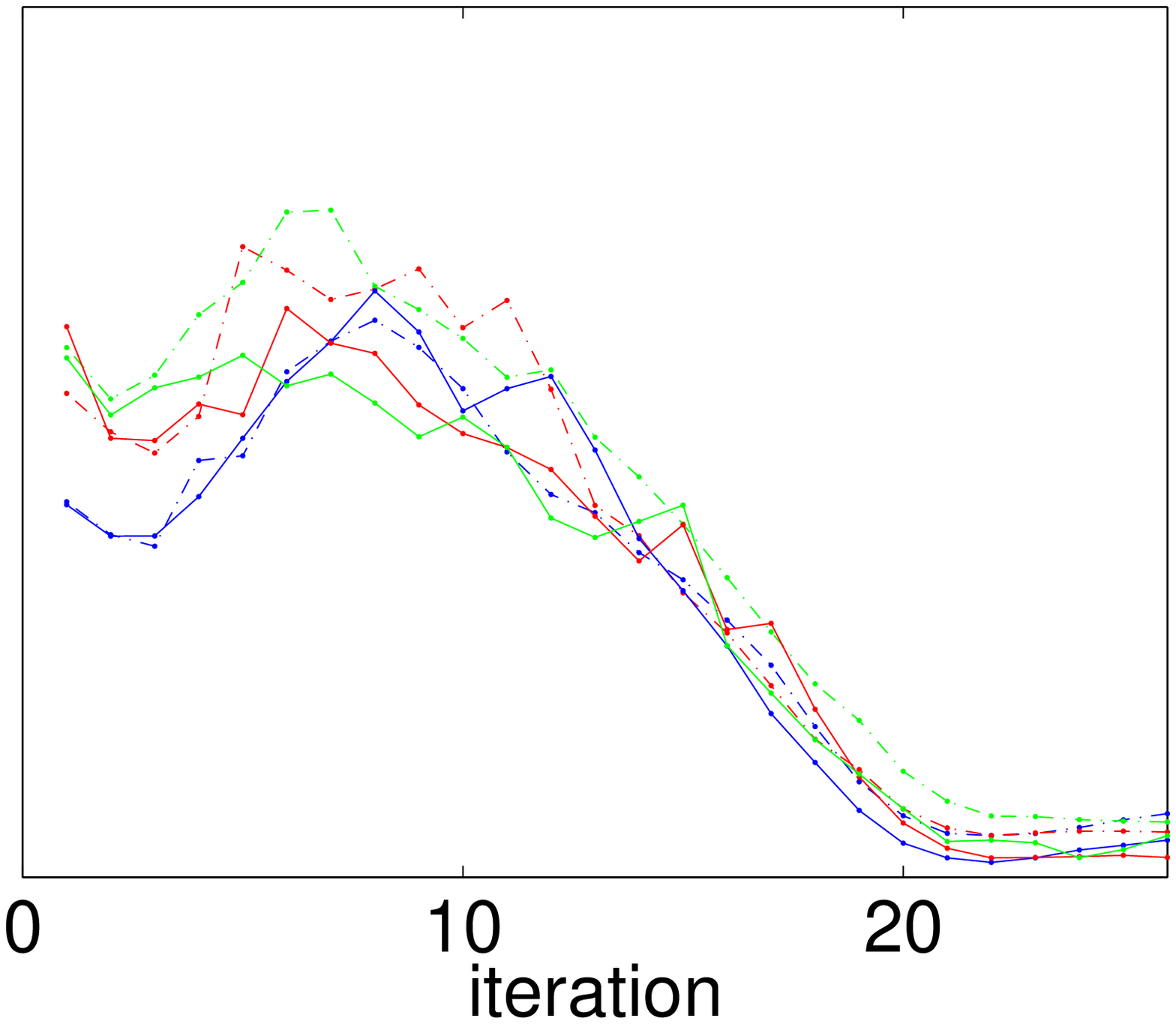} \\[-2.5ex]
    \psfrag{iteration}[t][]{iteration}
    \includegraphics[width=0.274\linewidth]{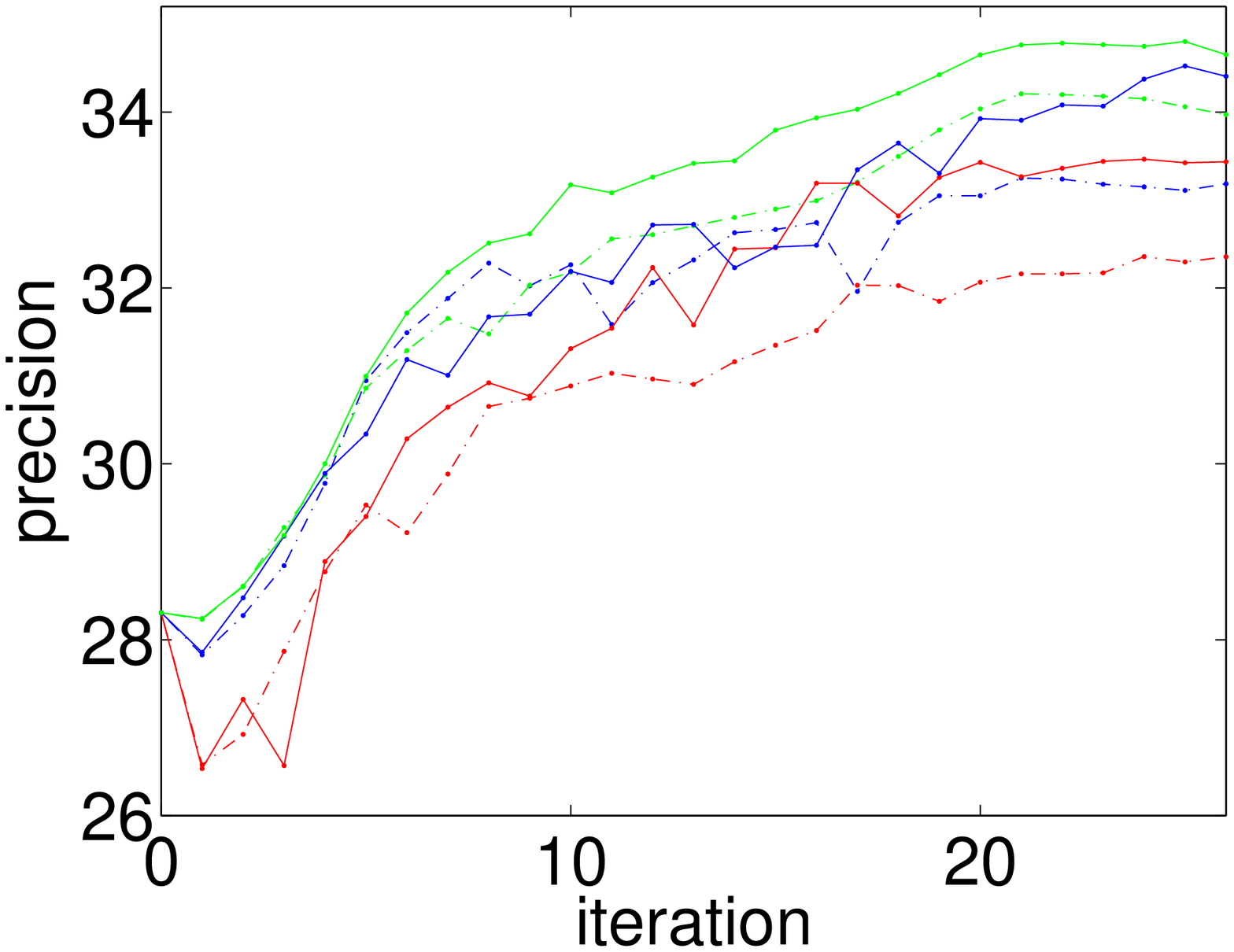} &
    \psfrag{time}[t][]{time}
    \includegraphics[width=0.235\linewidth]{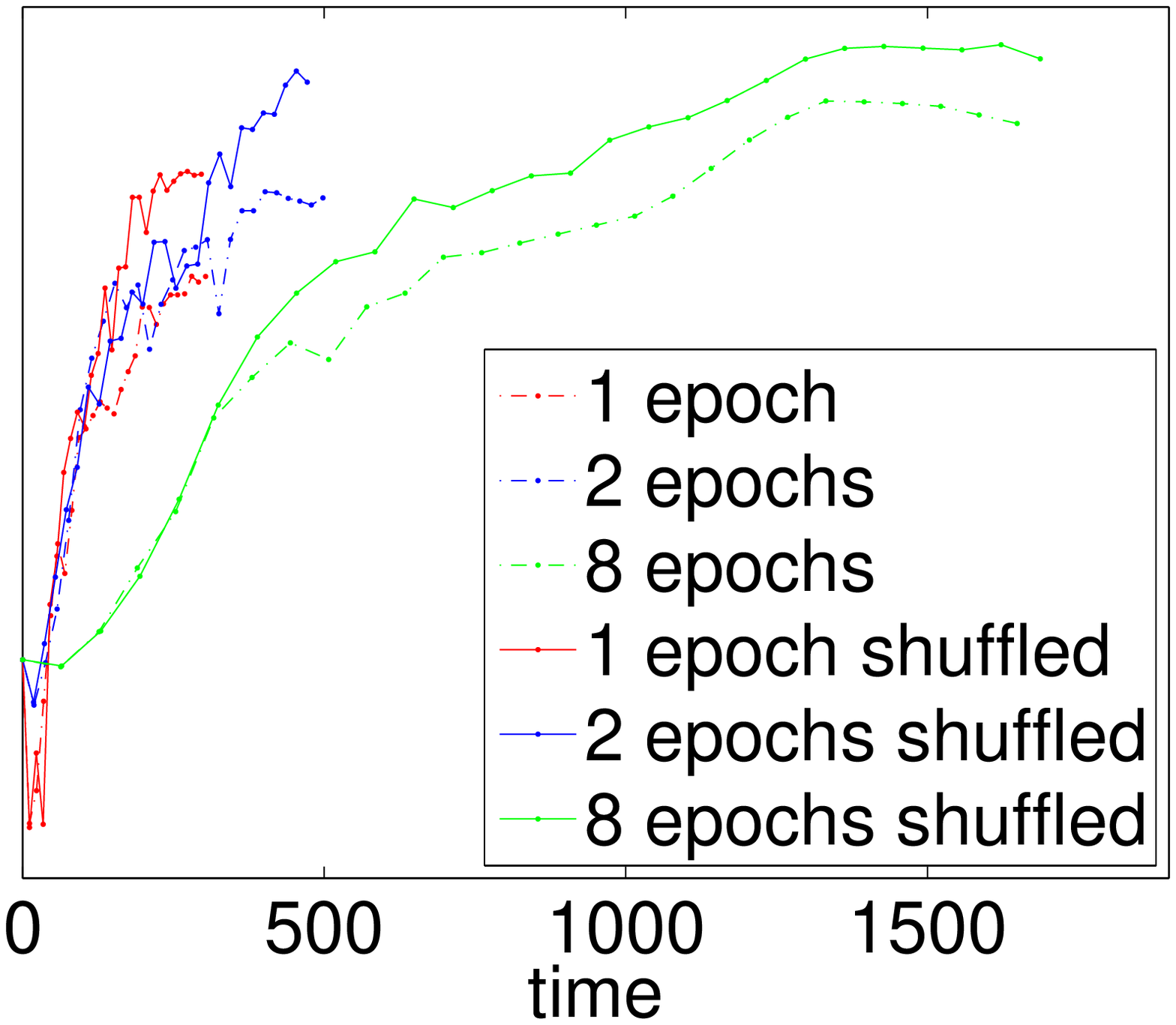} &
    \psfrag{iteration}[t][]{iteration}
    \includegraphics[width=0.235\linewidth]{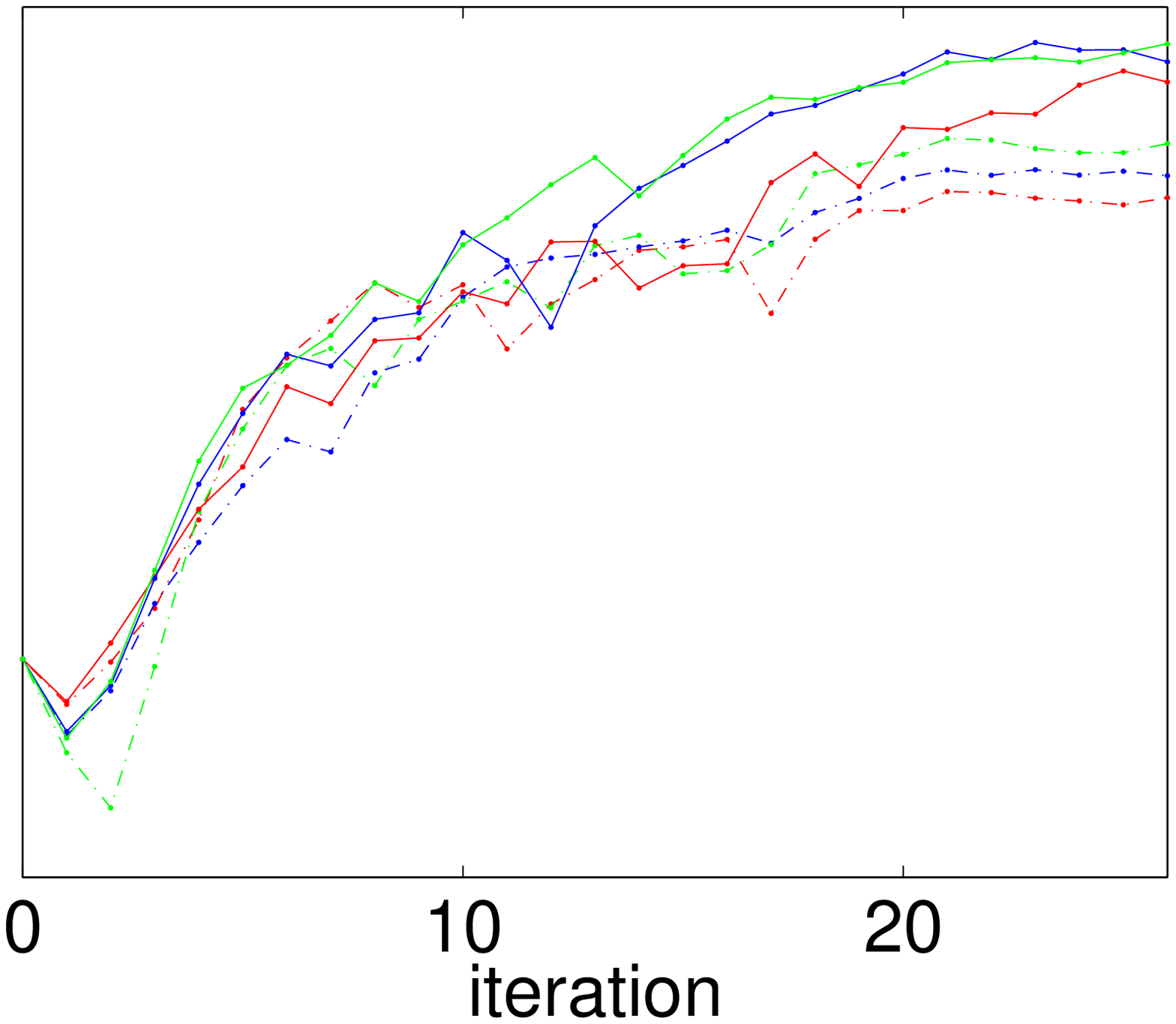} &
    \psfrag{iteration}[t][]{iteration}
    \includegraphics[width=0.235\linewidth]{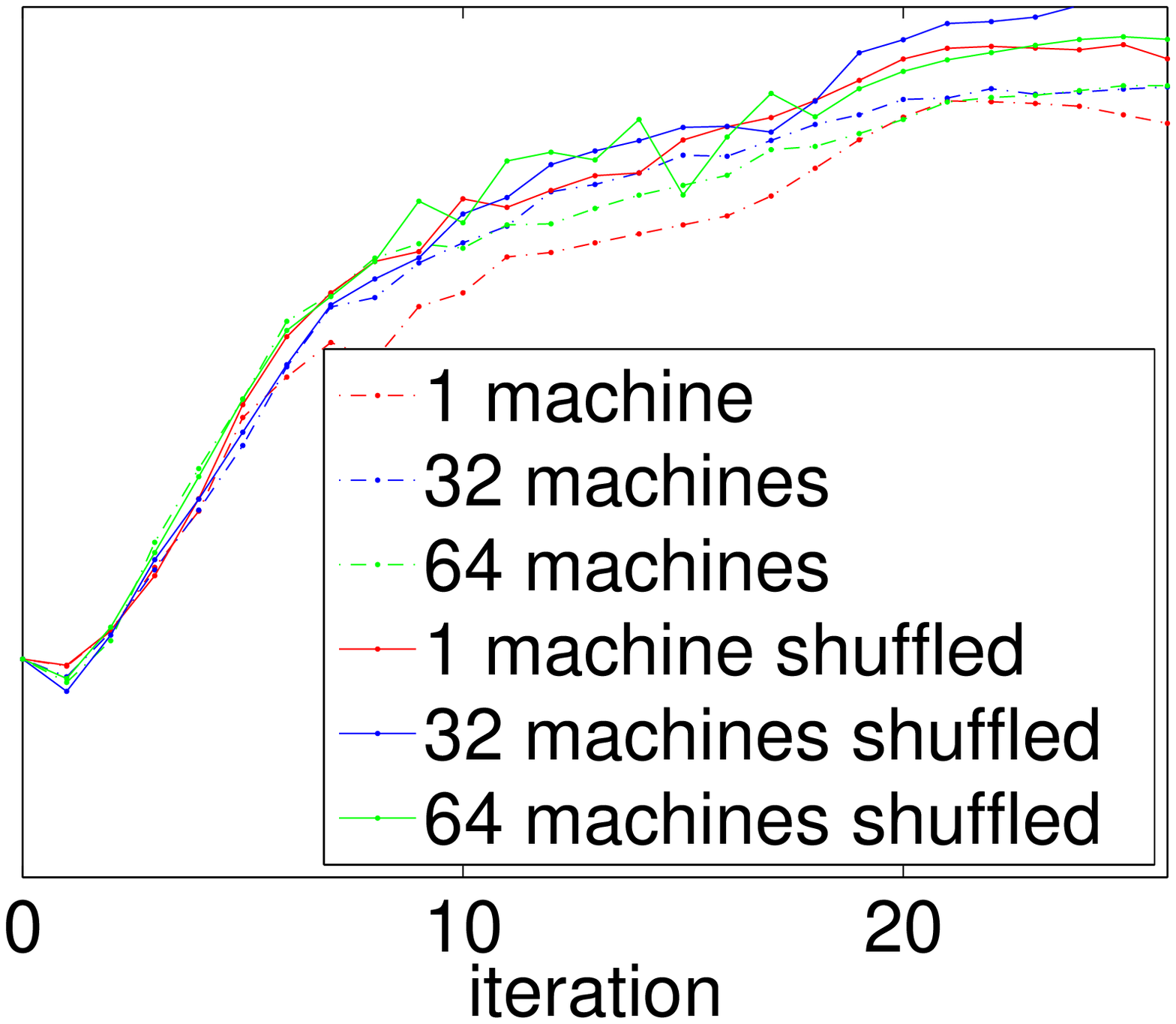}
  \end{tabular}
  \caption{Like fig.~\ref{f:cifar-epochs} but with and without minibatch shuffling in the \W\ step (solid and dashed lines, resp.).}
  \label{f:cifar-epochs-shuffle}
\end{figure}

\subsection{Speedup}
\label{s:expts:speedup}

The fundamental advantage of ParMAC and distributed optimisation in general is the ability to train on datasets that do not fit in a single machine, and the reduction in runtime because of parallel processing. Fig.~\ref{f:speedup} (top row) shows the strong scaling%
\footnote{In ``strong scaling'', the total problem size is fixed and the problem size on each machine is inversely proportional to the number of machines $P$. In ``weak scaling'', the problem size on each machine is fixed, so the total problem size is proportional to $P$. High speedups are easier to obtain in weak scaling than in strong scaling \citep{GoedecHoisie01a}.}
speedups achieved experimentally, as a function of the number of machines $P$ for fixed problem size (dataset and model), in CIFAR and SIFT-1M ($N =$ 50K and 1M training points, respectively). Even though these datasets and especially the number of independent submodels ($M = 2L = 32$ effective submodels of the same size, as discussed in section~\ref{s:speedup-th:practical}) are relatively small, the speedups we achieve are nearly perfect for $P \le M$ and hold very well for $P > M$ up to the maximum number of machines we used ($P = 128$ in the distributed system). The speedups flatten as the number of epochs (and consequently the amount of communication) increases, because for this experiment the bottleneck is the \W\ step, whose parallelisation ability (i.e., the number of concurrent processes) is limited by $M = 2L$ (the \Z\ step has $N$ independent processes and is never a bottleneck, since $N$ is very large). However, as noted earlier, using 1 to 2 epochs gives a good enough result, very close to doing an exact \W\ step. The runtime for SIFT-1M on $P = 128$ machines with 8 epochs was 12 minutes and its speedup 100$\times$. This is particularly remarkable given that the original, nested model did not have model parallelism. These speedups are vastly larger than those achieved by earlier large-scale nonconvex optimisation approaches such as Google's DistBelief system \citep{Le_12a,Dean_12a}, although admittedly the deep nets trained there were far larger than our BAs.

Fig.~\ref{f:speedup} (bottom) shows the speedups predicted by our theoretical model of section~\ref{s:speedup-th}. We set the parameters $e$ and $N$ to their known values, and $M = 2L = 32$ for CIFAR and SIFT-1M and $M = 2L = 128$ for SIFT-1B (effective number of independent equal-size submodels). For the time parameters, we set $t^{\W}_r = 1$ to fix the time units, and we set $t^{\W}_c$ and $t^{\Z}_r$ by trial and error to achieve a reasonably good fit to the experimental speedups. Specifically, we set $t^{\W}_c = 10^4$ for both datasets, and $t^{\Z}_r = 200$ for CIFAR and $40$ for SIFT-1M. Although these are fudge factors, they are in rough agreement with the fact that communicating a weight vector over the network is orders of magnitude slower than updating it with a gradient step, and that the \Z\ step is quite slower than the \W\ step because of the binary optimisation it involves.

Fig.~\ref{f:speedup} (bottom, right plot) also shows the theoretical prediction for the SIFT-1B dataset ($N = 10^8$, $M = 128$), using the same parameters as in SIFT-1M (again assuming the distributed memory system): $t^{\W}_c = 10^4$ and $t^{\Z}_r = 40$. Since here $M$ is quite larger and $N$ is much larger, the speedup is nearly perfect over a very wide range (note the plot goes up to $P = 1\,024$ machines, even though our experiments are limited to $P = 128$).

\begin{figure}[t]
  \centering
  \psfrag{processor}{}
  \begin{tabular}{@{}c@{\hspace{0\linewidth}}c@{\hspace{0\linewidth}}c@{}}
    CIFAR & SIFT-1M & SIFT-1B \\
    \psfrag{speedup}[][t]{speedup $S(P)$ (experiment)}
    \includegraphics[height=0.25\linewidth]{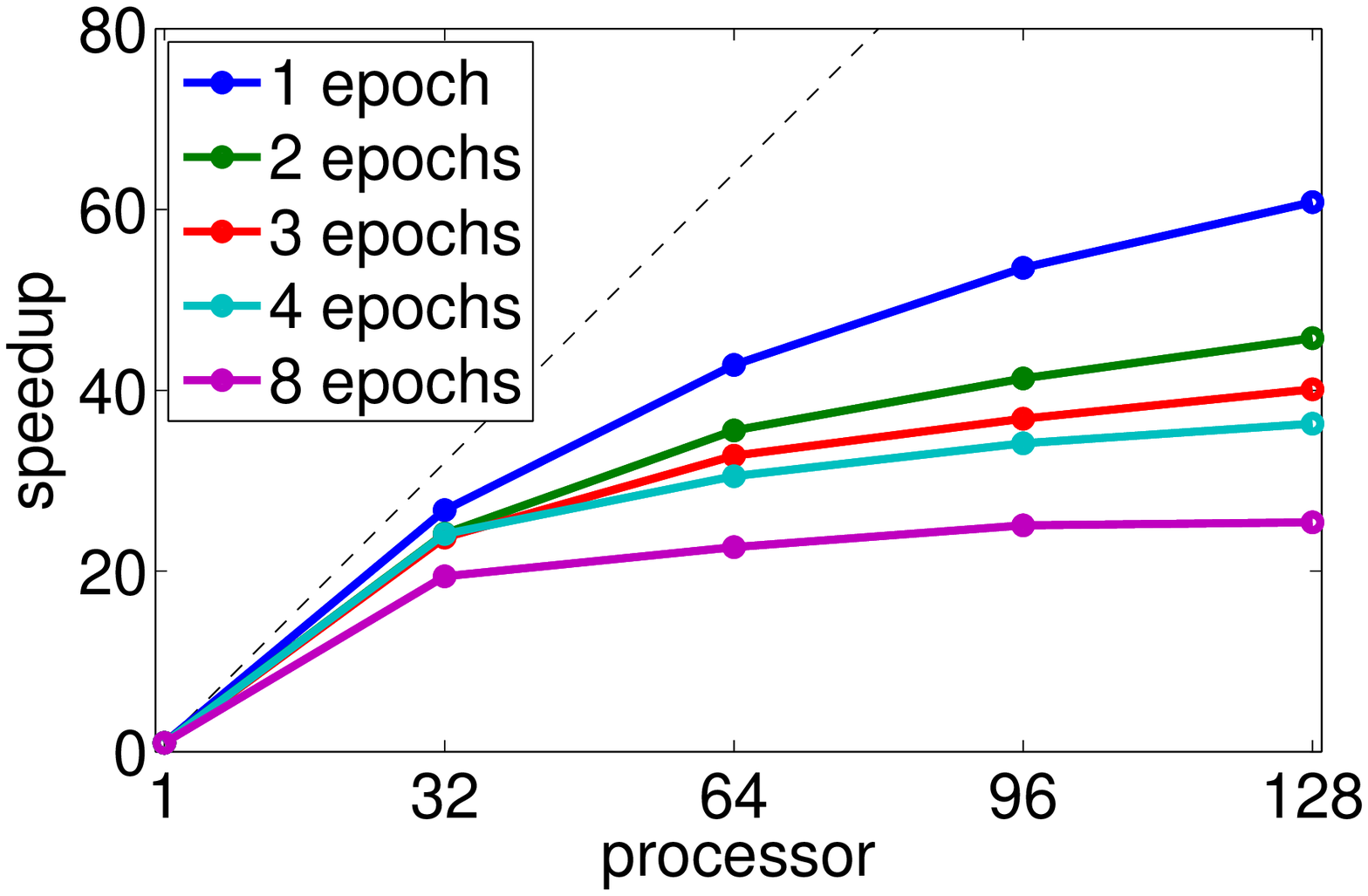} &
    \psfrag{speedup}{}
    \includegraphics[height=0.25\linewidth]{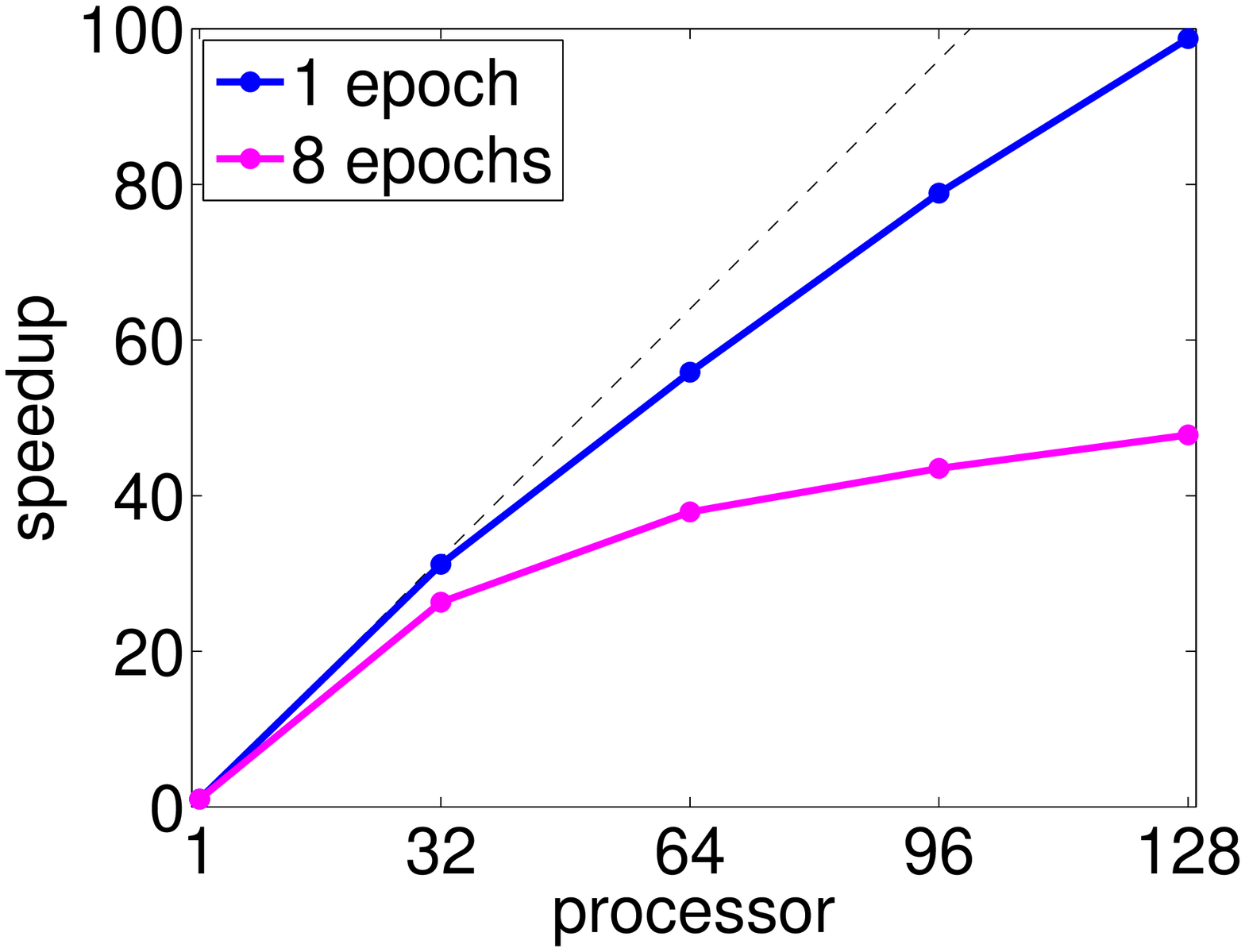} &
    \raisebox{0.125\linewidth}{\caja{c}{c}{too long to run}} \\[2ex]
    \psfrag{processor}[t][]{number of machines $P$}
    \psfrag{speedup}[][t]{speedup $S(P)$ (theory)}
    \includegraphics[height=0.25\linewidth]{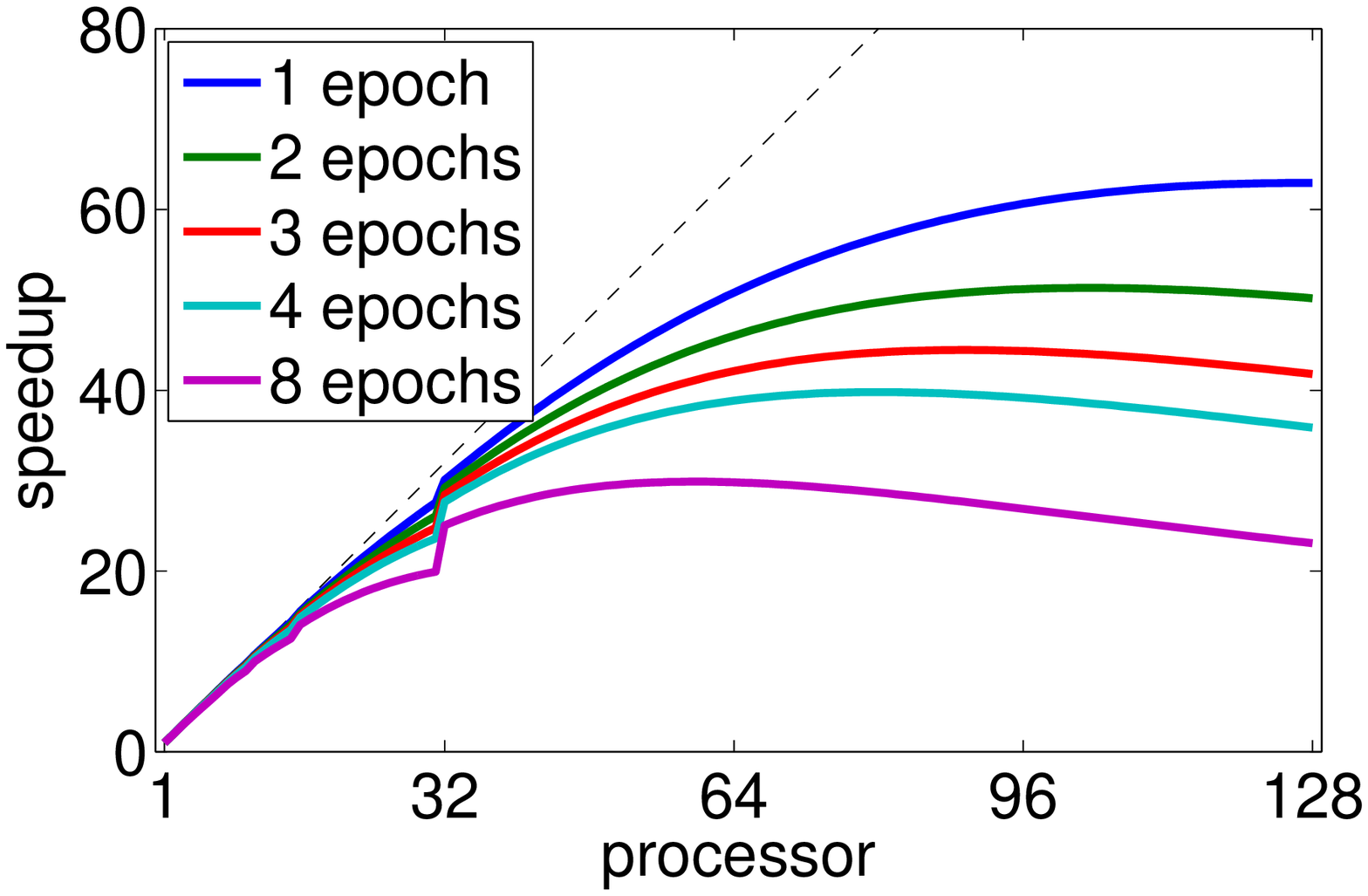} &
    \psfrag{processor}[t][]{number of machines $P$}
    \psfrag{speedup}{}
    \includegraphics[height=0.25\linewidth]{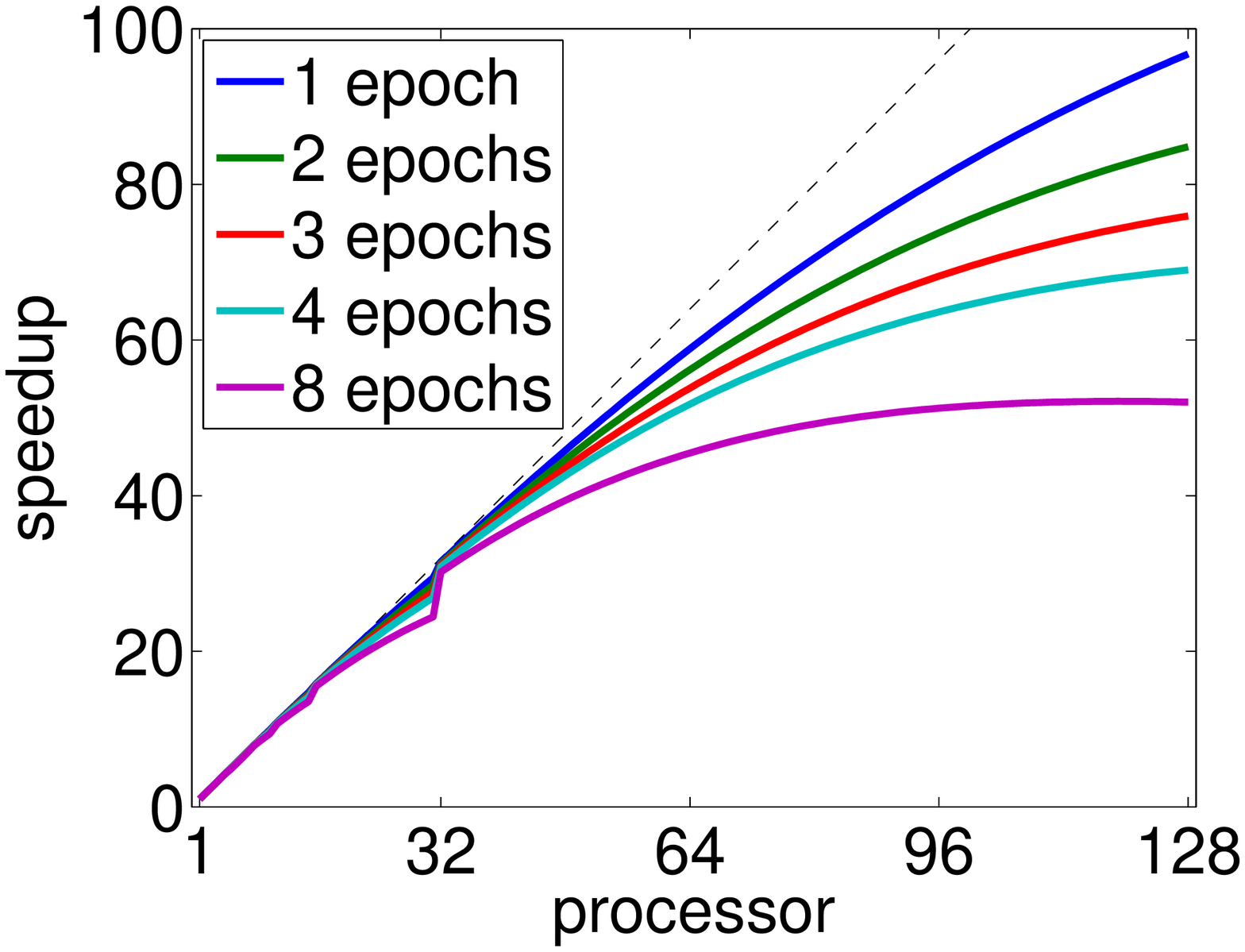} &
    \psfrag{processor}[t][]{number of machines $P$}
    \psfrag{speedup}{}
    \includegraphics[height=0.25\linewidth]{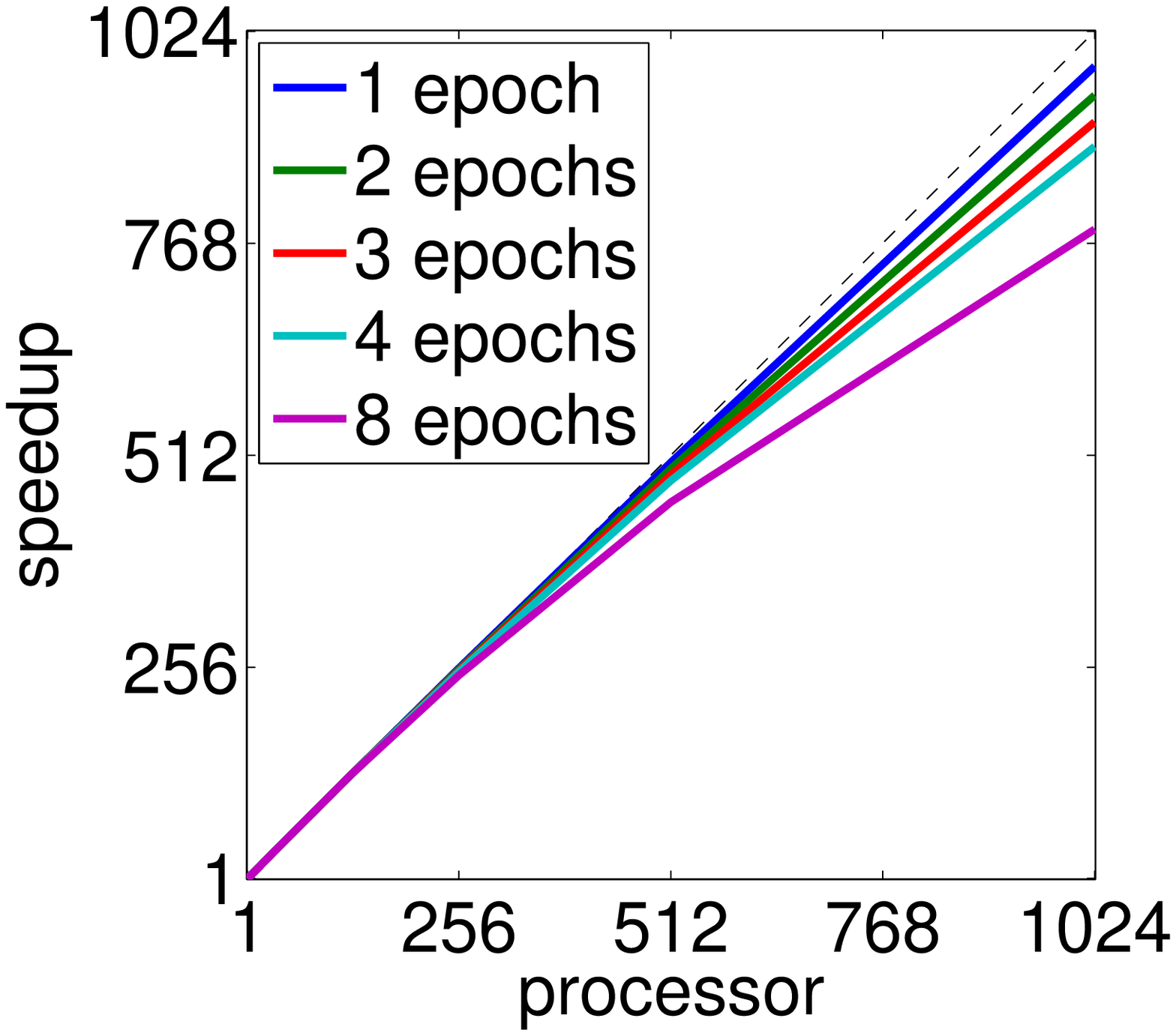}
  \end{tabular}
  \caption{Speedup $S(P)$ as a function of the number of machines $P$ for CIFAR, SIFT-1M and SIFT-1B. Top: experimental result in the distributed memory system. Bottom: theoretical result predicted by the model of section~\ref{s:speedup-th}. The dataset size and number of submodels $(N,M)$ is $(50\,000,32)$ for CIFAR, $(10^6,32)$ for SIFT-1M and $(10^8,128)$ for SIFT-1B.}
  \label{f:speedup}
\end{figure}

\subsection{Large-scale experiment: SIFT-1B dataset}
\label{s:expts:large-scale}

SIFT-1B is one of the largest datasets, if not the largest one, that are publicly available for comparing nearest-neighbour search algorithms with known ground-truth (i.e., precomputed exact Euclidean distances for each query to its $k$ nearest vectors in the base set). The training set contains $N = 100$M vectors, each consisting of 128 SIFT features.

Handling the SIFT-1B dataset required special care because of its size and the limited amount of memory (total 512GB for 128 processors in the distributed system and 256GB for 64 processors in the shared-memory one). Each vector has 128 SIFT features and each feature in the original dataset is stored in a single byte rather than as double-precision floats (8 bytes), as in our other experiments, totalling 12.8GB for the training set if using a linear hash function and 200GB if using an RBF one (see below). Rather than converting it to floats, which would exceed 1TB, we modified our code to convert each feature only as needed. In the \Z\ step each datapoint is processed independently and the conversion to double is done one point at a time. In the \W\ step it is done one minibatch at a time. It is of course possible to use hard disk as additional storage but this would slow down training. The auxiliary coordinates, which must be stored in MAC algorithms, take only 6.25\% the memory of the data (64 bits per datapoint compared to 128 bytes).

We used $L = 64$ bits (hash functions). As hash function, we trained a linear SVM as before, and a kernel SVM using $m$ Gaussian radial basis functions (RBF) with fixed bandwidth $\sigma$ and centres. This means the only trainable parameters are the weights, so the MAC algorithm does not change except that it operates on an $m$-dimensional input vector of kernel values (stored as one byte each), instead of the 128 SIFT features. The Gaussian kernel values are in $(0,1]$ but, as before, to save memory we store them as an unsigned one-byte integer value in $[0,255]$. We used $m = 2\,000$ centres, the maximum we could fit in memory, picked at random from the training set. In trials with a subset of the training set, we set $\sigma = 160$. This worked well and was wide enough to ensure that, with our limited one-byte precision, no data point would produce $m$ zeros as kernel values.

On trials on a subset of the training dataset, we set the number of epochs to $e = 2$ with shuffling (we observed no improvements by using more epochs, which is understandable given the size of the dataset). We initialised the binary codes from truncated PCA trained on a subset of size 1M (recall@R=100: 55.2\%), which gave results comparable to the baseline in \citet{Jegou_11b} if using 8 bytes per indexed vector (without postprocessing by reranking as done in that paper).

We ran ParMAC on the whole training set in the distributed system with 128 processors for 6 iterations and in the shared-memory one with 64 processors for 10 iterations. The results are given in the following table and figures~\ref{f:sift1m-iters}--\ref{f:sift1m-recall}:
\begin{center}
  \begin{tabular}{@{}lccc@{}}
    \toprule
    \raisebox{-0.7ex}[0pt][0pt]{\caja{t}{l}{Hash function \\ (encoder)}} & \raisebox{-0.7ex}[0pt][0pt]{\caja{t}{c}{Recall \\ @R=100}} & \multicolumn{2}{c}{Time (hours)} \\
    \cmidrule{3-4}
     &  & distrib. & shared \\ 
    \midrule
    linear SVM & 61.5\% & 29.30 & 11.04 \\
    kernel SVM & 66.1\% & 83.44 & 32.19 \\
    \bottomrule
  \end{tabular}
\end{center}
The learning curves (fig.~\ref{f:sift1m-iters}) are essentially identical over both systems. The nonlinear RBF hash function outperforms the linear one in recall, as one would expect. The improvement occurs across the whole range of $R$ recall values (fig.~\ref{f:sift1m-recall}). Note the error in the nested model, $E_{\text{BA}}$, does not decrease monotonically. This is because MAC optimises instead the penalised function $E_Q$, in an effort the minimise $E_{\text{BA}}$ as $\mu$ increases.

Based on our previous results, the small number of epochs and the larger number of submodels in the \W\ step, we expect nearly perfect speedups. We cannot compute the actual speedup because the single-machine runtime is enormous. Using a scaled-down model and training set, we estimated that training in one machine (with enough RAM to hold the data and parameters) would take months.

Although the speedups are comparable on both the distributed and the shared-memory system, the former is 3--4 times slower. The reason is the distributed system has both slower processors and slower interprocessor communication (across a network); see also fig.~\ref{f:shared_dist}.

\begin{figure}[p]
  \centering
  \psfrag{recallR}[][t]{recall@R=100}
  \psfrag{AEerr}[][t]{$E_{\text{BA}}$}
  \psfrag{iteration}[][]{iteration}
  \begin{tabular}{@{}c@{\hspace{0.05\linewidth}}c@{}}
    \includegraphics[height=0.33\linewidth]{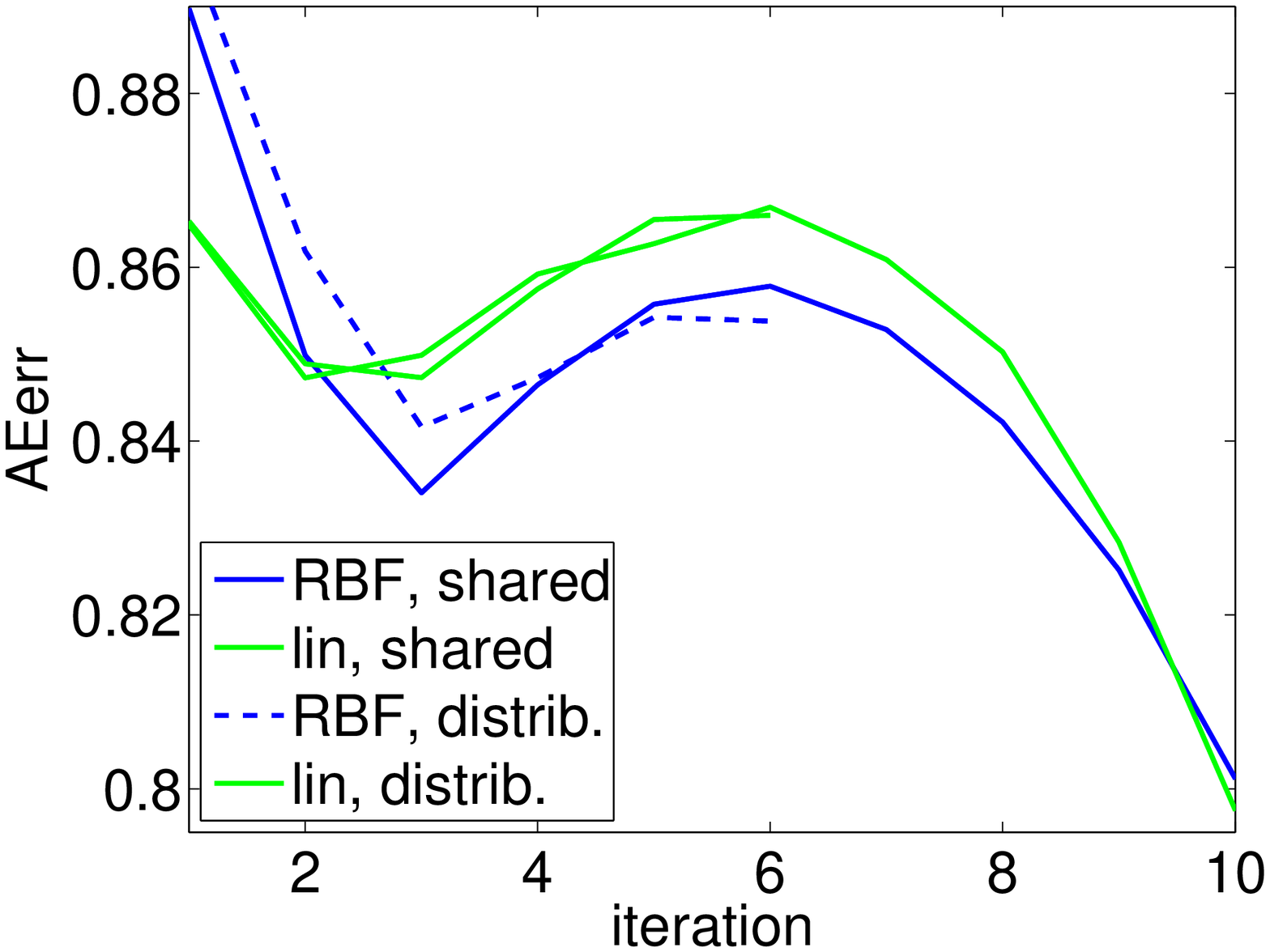} &
    \includegraphics[height=0.33\linewidth]{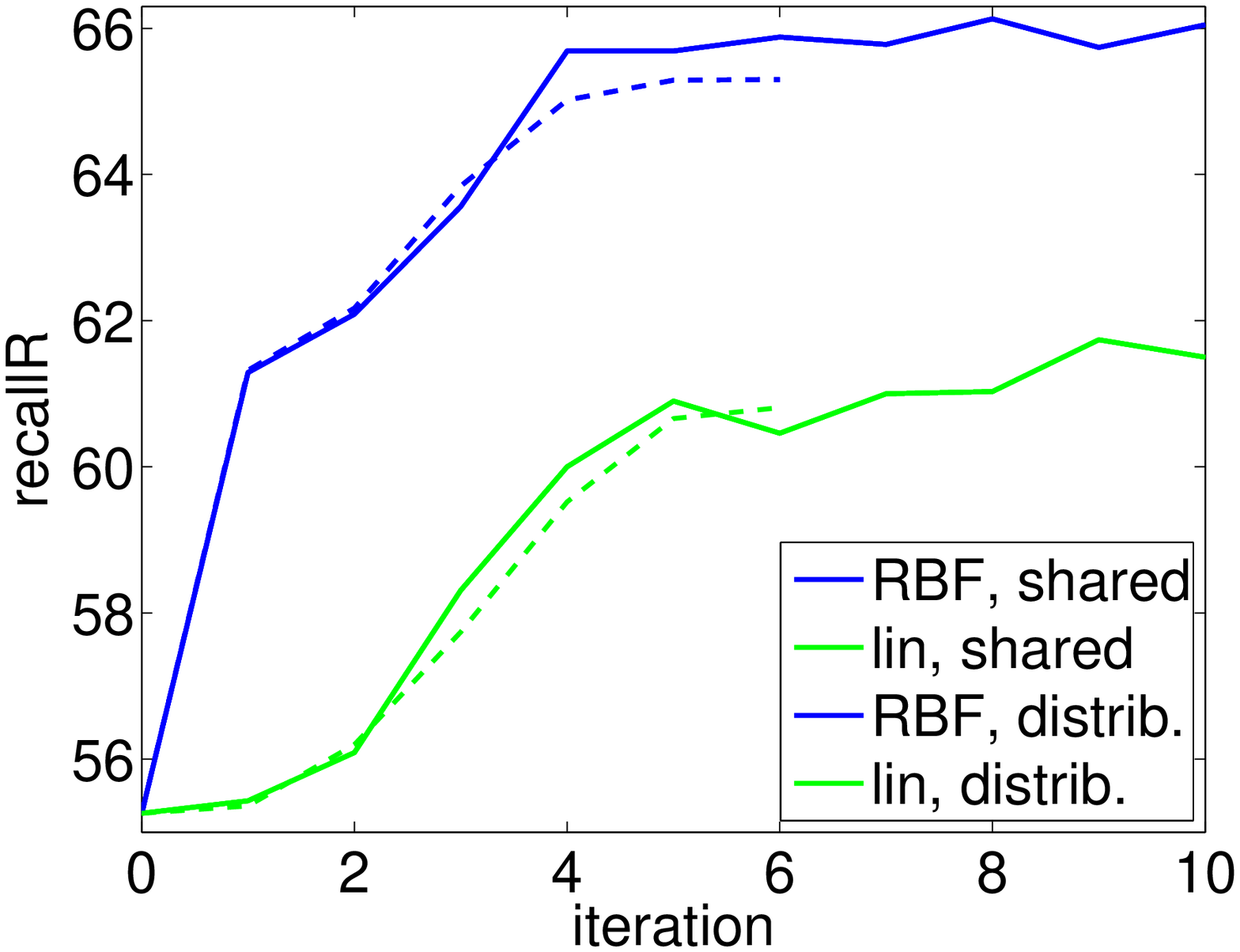}
  \end{tabular}
  \caption{SIFT-1B dataset, using the shared-memory and distributed system (solid and dashed lines, resp.).}
  \label{f:sift1m-iters}
\end{figure}

\begin{figure}[p]
  \centering
  \begin{tabular}{@{}c@{}c@{}c@{}}
    \psfrag{recallR}[][]{recall@R}
    \psfrag{R}[t][]{$R$}
    \includegraphics[height=0.26\linewidth]{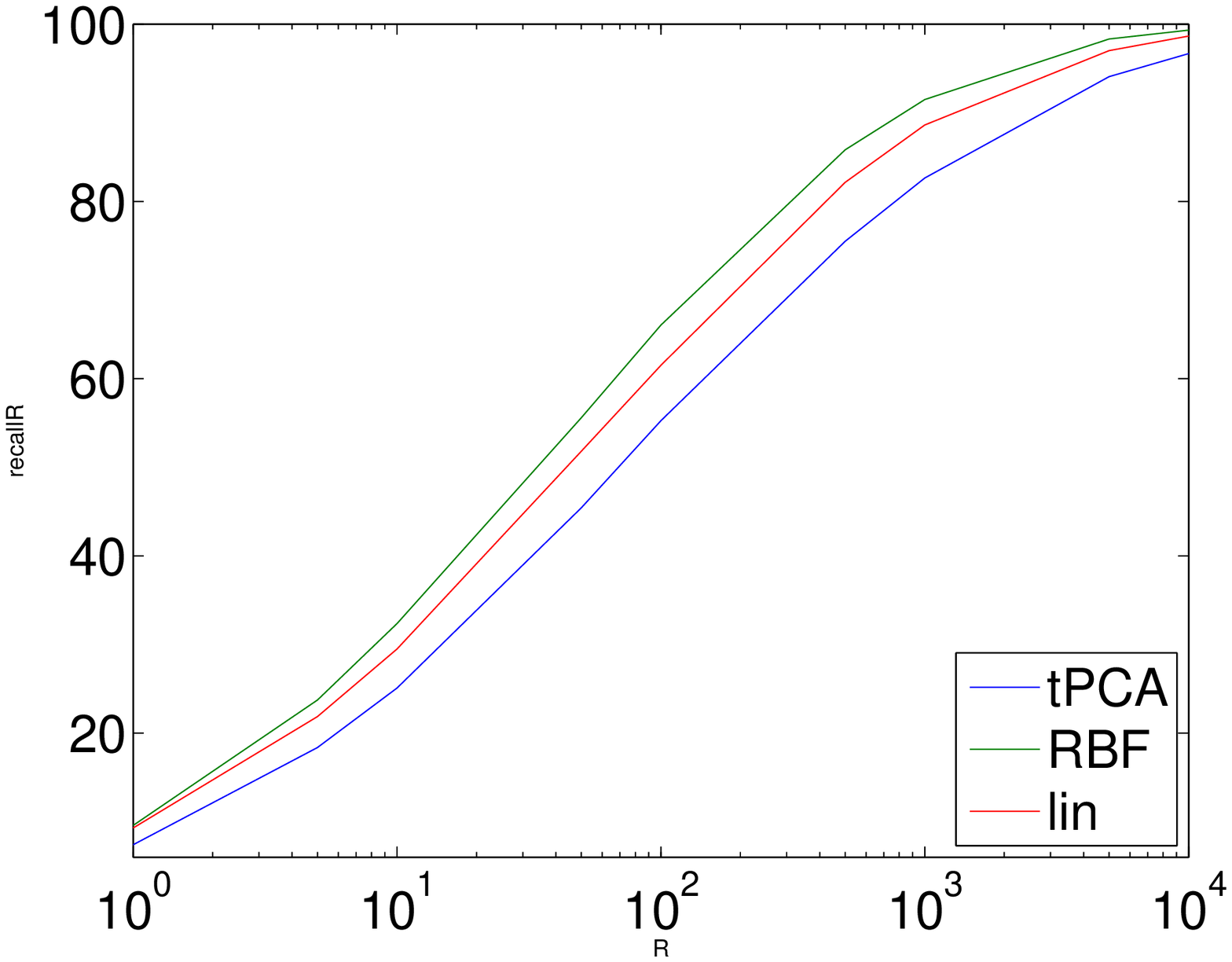} &
    \psfrag{recallR}{}
    \psfrag{R}[t][]{$R$ (linear hash function)}
    \includegraphics[height=0.26\linewidth,bb=5 158 588 616,clip]{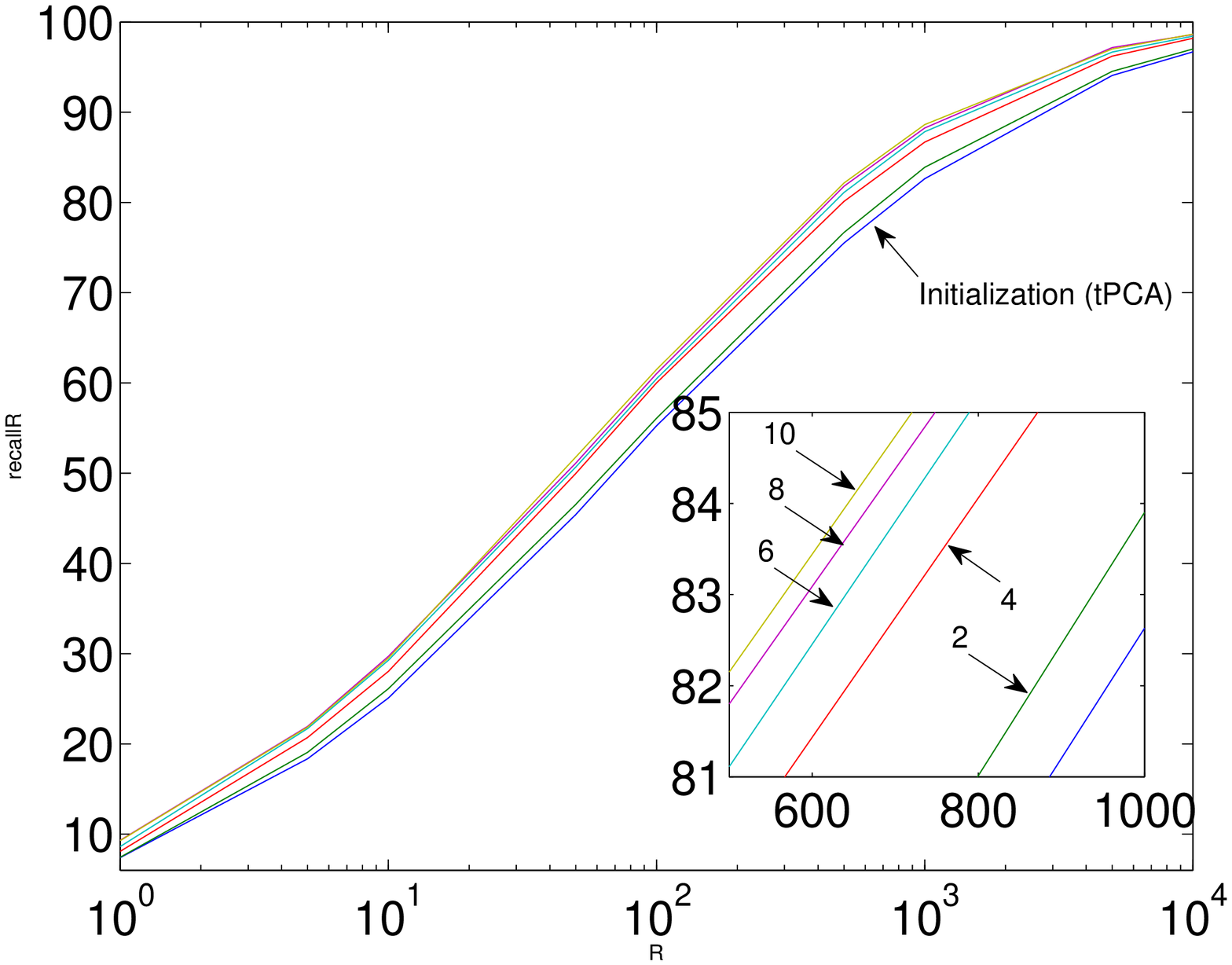} &
    \psfrag{recallR}{}
    \psfrag{R}[t][]{$R$ (RBF hash function)}
    \includegraphics[height=0.26\linewidth,bb=5 158 588 616,clip]{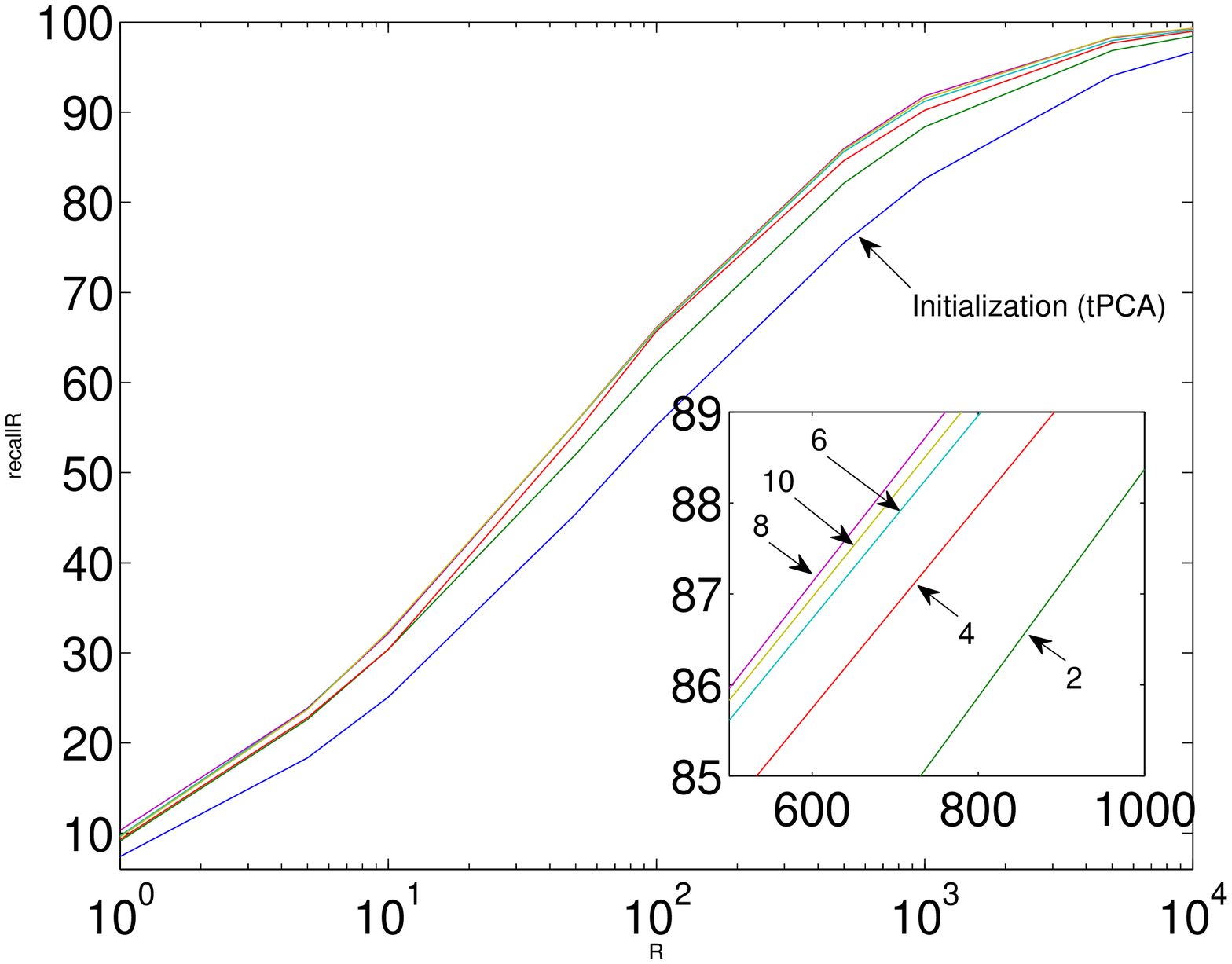}
  \end{tabular}
  \caption{Recall@R on the SIFT-1B dataset for truncated PCA (initialisation), linear and kernel hash functions (left plot: final result; right two plots: over iterations, as labelled).}
  \label{f:sift1m-recall}
\end{figure}

\begin{figure}[p]
  \centering
  \psfrag{settings}[t][]{\# nodes $\times$ \# processors}
  \psfrag{time (s)}[][t]{time (s)}
  \includegraphics[width=0.40\linewidth]{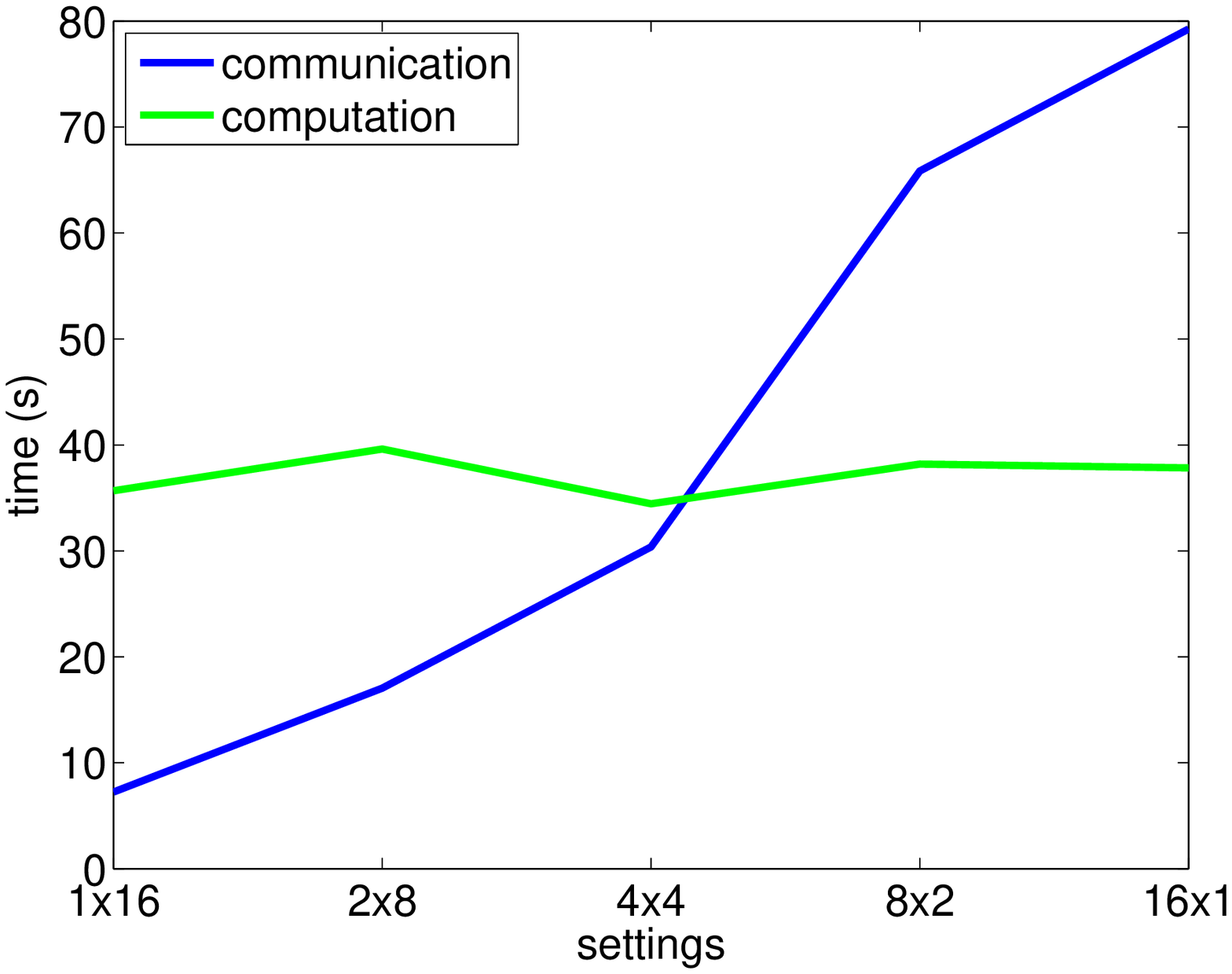}
  \caption{Time spent on communication and computation as a function of the number of processors per node in the TSCC cluster. The time for our shared-memory UC Merced cluster (also using 16 processors, i.e., corresponding to 1$\times$16) is 2.57 and 8.76 seconds for communication and computation, respectively.}
  \label{f:shared_dist}
\end{figure}

\subsection{Shared-memory vs distributed systems}

The TSCC distributed cluster consists of nodes containing 16 processors and 64GB RAM. These processors communicate through shared-memory within a node, and across a network otherwise (which is slower). In all our experiments up to now, we always allocated processors within the same node if possible. But, depending on whether a user requests processors within or across nodes, there is then a tradeoff between both communication modes. A full study of this issue is beyond our scope, which is to understand the ParMAC algorithm in general rather than for specific computer architectures. However, we ran a small experiment to evaluate the computation and communication time spent as a function of the number of processors per node. We set the total number of processors to $P = 16$ but allocated them in the following configurations: from a single node with all 16 processors (1$\times$16, pure shared-memory) to 16 nodes each with 1 processor (16$\times$1, pure distributed), and intermediate configurations such as 2 nodes each with 8 processors (2$\times$8). We used the RBF hash function from the SIFT-1B experiment with all settings as before and trained it on a subset of 20K points for a single iteration. Figure~\ref{f:shared_dist} shows the resulting times. While the computation time remains constant, the communication time increases as we move from shared-memory to distributed settings, as expected. Hence, the effect on the ParMAC algorithm would be to increase the \W\ step runtime correspondingly and lower the parallel speedup.

\section{Discussion}
\label{s:discussion}

Developing parallel, distributed optimisation algorithms for nonconvex problems in machine learning is challenging, as shown by recent efforts by large teams of researchers \citep{Le_12a,Dean_12a}. One important advantage of ParMAC is its simplicity. Data and model parallelism arise naturally thanks to the introduction of auxiliary coordinates. The corresponding optimisation subproblems can often be solved reusing existing code as a black box (as with the SGD training of SVMs and linear mappings in the BA). A circular topology is sufficient to achieve a low communication between machines. There is no close coupling between the model structure and the distributed system architecture. The development and implementation of ParMAC for binary autoencoders on large datasets in a distributed cluster was achieved in a few months by the PI and one junior PhD student (both without prior experience in MPI).

Rather than an algorithm, MAC is a meta-algorithm that can produce a specific optimisation algorithm for a given nested problem, depending on how the auxiliary coordinates are introduced and on how the resulting subproblems are solved (in this sense, MAC is similar to expectation-maximisation (EM) algorithms; \citealp{Dempst_77a}). For example, in the low-dimensional SVMs of \citet{WangCarreir14a}, the \Z\ step is a small quadratic program for each data point. However, regardless of these specifics, the resulting MAC algorithm typically exhibits a \W\ step with $M$ independent submodels and a \Z\ step with $N$ independent coordinates for the data points. Likewise, the specifics of a ParMAC algorithm (how the \W\ and \Z\ steps are optimised) will depend on its corresponding MAC algorithm. However, it will always split the data and auxiliary coordinates over machines and consist of a \Z\ step with no communication between machines, and a \W\ step where submodels visit machines in a circular topology, effectively training themselves by stochastic optimisation.

Further improvements can be made in specific problems. For example, it is possible to have further parallelisation or less dependencies (e.g.\ the weights of hidden units in layer $k$ of a neural net depend only on auxiliary coordinates in layers $k$ and $k+1$). This may reduce the communication in the \W\ step, by sending to a given machine only the model portion that it needs, or by allocating cores within a multicore machine accordingly. Also, the \W\ and \Z\ step optimisations can make use of further parallelisation by GPUs or by distributed convex optimisation algorithms. And, if the submodels are small in size, it may be better for each machine to operate on a set of submodels and then send them all together in a larger message (rather than sending each submodel as it is finished), since this will reduce latency overheads (i.e., the setup cost of a message). Many more refinements can (and should) be done in an industrial implementation. For example, one can store and communicate reduced-precision values for data and parameters with little effect of the accuracy, as has been done in neural nets (e.g.\ \citealp{Gupta_15a,Han_15a,Han_16a}). Various system-dependent optimisations may be possible (beyond those that a good compiler may be able to do, such as loop unrolling or code inlining), such as improving the spatial or temporal locality of the code given the type and size of the cache in each machine. In this paper, we have tried to keep our implementation as simple as possible, because our goal was to understand the parallelisation speedups of ParMAC in a setting as general as possible, rather than trying to achieve the very best performance for a particular dataset, model or distributed system.

ParMAC is very efficient in communication: no data or coordinates are ever sent, only the entire model $e+1$ times per iteration. Using one epoch (which is sufficient in large datasets), or using $e$ epochs but performing them within each machine, the entire model is sent twice per iteration. This is near optimal if we note the following. If the data cannot be communicated, then at every iteration each submodel must visit each machine (for it to be trained on the entire data). Hence, any correct algorithm will have to send the entire model at least once; ParMAC does so twice. Also, the circular topology is the minimal topology (considered as a directed graph on the $P$ machines) that is necessary to be able to optimise a global model on the entire dataset with $P$ machines, because each machine must be able to communicate with some other machine. It has $P$ edges and is truly distributed, with each machine having the same importance. 

A popular model for distributed optimisation (e.g.\ with parallel SGD) uses a worker-server connectivity: $W \gg 1$ workers that do actual parameter optimisation and $S \ge 1$ ``parameter servers''  that collect and broadcast parameters to worker machines. This is a bipartite graph with bidirectional edges between servers and workers, having $SW$ edges, which is quite larger than the number of machines $P = S+W$. The entire model must be sent twice per iteration, to and from the parameter server, but this creates a bottleneck when multiple workers send data to the same server, and $S \ll W$ in practice. No such bottleneck occurs in ParMAC.

Parallel and distributed computing systems have been around for decades. One important class are supercomputers, which are carefully designed in terms of the processors, memory system and connection network. They have been traditionally used to solve a wide variety of large-scale scientific computation problems, such as weather prediction, nuclear reactor modelling, or astrophysical or molecular simulations. Another important class are clusters of inexpensive, heterogenous workstations connected through an Ethernet network, with workstations differing in speed, memory/disk capacity, number of cores/GPUs, etc. This is used in data centres in Google, Amazon and other companies, and also in distributed computation models such as SETI@home that capitalise on the computation and Internet connectivity available to individuals, and their willingness to donate them to projects they find worthy. In these systems, the machine learning task may be one of other tasks running concurrently, such as web searches or email in a data centre (which may operate on the same data as the machine learning task), or personal applications in an individual's workstation. Supercomputers and clusters differ considerably across important factors: suitability for a particular problem, computation and communication speed, size of memory and disk, connection network, fault tolerance, load, cost, energy consumption, etc. At present it is unclear what the best choices will be for machine learning models (which exhibit a wide variety themselves), and we expect to see many different possibilities been researched in the immediate future. We suggest that ParMAC, by itself or in combination with other techniques, may play an important role with nested models because of the embarrassing parallelism it introduces and its loose demands on the underlying distributed system.

\section{Conclusion}
\label{s:concl}

We have proposed ParMAC, a distributed model for the method of auxiliary coordinates for training nested, nonconvex models in general, analysed its parallel speedup and convergence, and demonstrated it with an MPI-based implementation for a particular case, to train binary autoencoders. MAC creates parallelism by introducing auxiliary coordinates for each data point to decouple nested terms in the objective function. ParMAC is able to translate the parallelism inherent in MAC into a distributed system by 1) using data parallelism, so that each machine keeps a portion of the original data and its corresponding auxiliary coordinates; and 2) using model parallelism, so that independent submodels (weight vectors of a hash function or hidden unit) visit every machine in a circular topology, effectively executing epochs of a stochastic optimisation, without the need for a parameter server and therefore no communication bottlenecks. This keeps the communication between machines to a minimum within each iteration. In this sense, ParMAC can be seen as a strategy to be able to use existing, well-developed (convex) distributed optimisation techniques---applicable to simple functions---to a setting where simple functions are coupled by nesting into a nonconvex function whose training data is distributed over machines. The convergence properties of MAC (to a stationary point of the objective function) remain essentially unaltered in ParMAC. The parallel speedup can be theoretically predicted to be nearly perfect when the number of submodels is comparable or larger than the number of machines, and to eventually saturate as one continues to increase the number of machines, and indeed this was confirmed in our experiments. ParMAC also makes it easy to account for data shuffling, load balancing, streaming and fault tolerance. Hence, we expect that ParMAC could be a basic building block, in combination with other techniques, for the distributed optimisation of nested models in big data settings.

\subsubsection*{Acknowledgements}

Work supported by a Google Faculty Research Award and by NSF award IIS--1423515. We thank Dong Li (UC Merced) for useful discussions about MPI and performance evaluation on parallel systems, and Quoc Le (Google) for useful discussion about Google's DistBelief system.

\clearpage

\appendix

\section{Theoretical analysis of the speedup: proofs}
\label{s:speedup-app}

In section~\ref{s:speedup-th} we proposed the following theoretical estimate for the speedup $S(P)$:
\begin{equation}
  \label{e:speedup2}
  \tag{\ref{e:speedup}'}
  S(P) = \frac{T(1)}{T(P)} = \frac{\rho \frac{1}{\ceil{M/P}} M P}{\frac{1}{N} P^2 + \rho_2 P + \rho_1 \frac{1}{\ceil{M/P}} M}.
\end{equation}
Consider $S(P)$ as a real function of a real variable $P \ge 1$ (keeping in mind that only integer values of $P$ can occur in practice). The function $\ceil{M/P}$ is piecewise constant and takes the values $M,M-1,\dots,1$ as $P$ increases from $P=1$, with discontinuities where $M/P = k$ for $k=M,M-1,\dots,1$. Hence, $S(P)$ is piecewise continuous on $M$ intervals of the form $\big[1,\frac{M}{M-1}\big),\big[\frac{M}{M-1},\frac{M}{M-2}\big),\dots,\big[\frac{M}{2},M\big),[M,\infty)$. (Many of these intervals occur between integer values of $P$ so they are actually unobserved in practice; for example, for $M=16$ there are 8 intervals between $P=1$ and $P=2$.) Within each interval $P \in \big[\frac{M}{k},\frac{M}{k-1}\big)$ we have $\ceil{M/P} = k$, hence we can equivalently write the speedup of eq.~\eqref{e:speedup} as the following rational function of $P$:
\begin{equation}
  \label{e:speedup3}
  S(P) = \frac{\frac{1}{k} \rho M P}{\frac{1}{N} P^2 + \rho_2 P + \frac{1}{k} \rho_1 M} \quad \text{for } P \in \textstyle\big[\frac{M}{k},\frac{M}{k-1}\big), \qquad k=M,M-1,\dots,1.
\end{equation}

\subsection{Characterisation of the speedup function $S(P)$}

Our main theorem is theorem~\ref{th:speedup-charact} below. It characterises how the speedup grows as a function of $P$. Let us define
\begin{equation}
  \label{e:speedup-max2}
  \tag{\ref{e:speedup-max}'}
  P^*_k = \sqrt{\rho_1 M N / k} \qquad S^*_k = S(P^*_k) = \frac{\rho M / k}{\rho_2 + 2 \sqrt{\rho_1 M / N k}} \qquad k = 1,2,\dots,M.
\end{equation}
\begin{thm}
  \label{th:speedup-charact}
  Consider the function $S(P)$ of eq.~\eqref{e:speedup3} and $P^*_k$ and $S^*_k$ as in eq.~\eqref{e:speedup-max2}. Then:
  \begin{enumerate}
  \item $S^*_k < S^*_{k-1}$ for $k = 2,\dots,M$.
  \item Within interval \smash{$\big[\frac{M}{k},\frac{M}{k-1}\big)$} for $k = 1,2,\dots,M$, we have that $S(P)$ either is monotonically increasing, or is monotonically decreasing, or achieves a single maximum $S^*_k = S(P^*_k)$ at $P^*_k$.
  \item $S\big(\frac{M}{k}\big) > S(P)$ for $1 \le P < \frac{M}{k}$, for $k = 2,\dots,M$.
  \end{enumerate}
\end{thm}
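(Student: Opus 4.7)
My plan is to analyze all three claims via the reciprocal
$$\frac{1}{S(P)} = \frac{P\ceil{M/P}}{\rho M N} + \frac{\rho_2 \ceil{M/P}}{\rho M} + \frac{\rho_1}{\rho P},$$
obtained by dividing the numerator and denominator of $S(P)$ by $\rho M P/\ceil{M/P}$. This decomposition is convenient because at the breakpoints $P = M/k$ (where $\ceil{M/P} = k$) it collapses to the clean identity $1/S(M/k) = 1/(\rho N) + k/M$, which will drive every comparison.

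For claim (i), I would form $S^*_{k-1}/S^*_k$, clear common factors, and reduce it to $\rho_2 + 2c(\sqrt{k} - \sqrt{k-1}) > 0$ with $c = \sqrt{\rho_1 M/N}$, which is immediate. For claim (ii), on $[M/k, M/(k-1))$ the ceiling is constant, so $S(P) = (\rho MP/k)/(P^2/N + \rho_2 P + \rho_1 M/k)$ is a standard rational function whose derivative vanishes uniquely at $P^*_k = \sqrt{\rho_1 MN/k}$, a maximum since $S \to 0$ at $0^+$ and at $\infty$. The three sub-possibilities in claim (ii) correspond to whether $P^*_k$ sits below, inside, or above the interval.

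Claim (iii) I would prove by downward induction on $k$ starting from $k = M$ (where $M/k = 1$ makes the condition vacuous). The inductive step splits $[1,M/k)$ as $[1,M/(k+1)) \cup [M/(k+1),M/k)$; the first piece follows from the hypothesis together with $S(M/(k+1)) < S(M/k)$, read off directly from the identity. On the second piece $\ceil{M/P} = k+1$, and by claim (ii) the supremum of $S$ there is attained at the left endpoint $M/(k+1)$, at an interior maximum $P^*_{k+1}$ (if present), or approached at the left-limit $\lim_{P\to M/k^-} S(P)$. The left-endpoint subcase is again the identity; the left-limit subcase follows from the direct computation $1/S(M/k^-) - 1/S(M/k) = 1/(k\rho N) + \rho_2/(\rho M) > 0$.

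The main obstacle is the interior subcase $P^*_{k+1} \in [M/(k+1), M/k)$, where I must show $S^*_{k+1} < S(M/k)$. Using the closed form $1/S^*_{k+1} = (k+1)\rho_2/(\rho M) + 2\sqrt{(k+1)\rho_1/(MN)}/\rho$ and substituting $t = P^*_{k+1}$ (so $\rho_1 = t^2(k+1)/(MN)$), the inequality $1/S(M/k) < 1/S^*_{k+1}$ reduces, after multiplying by $\rho M$ and collecting terms, to
$$k(k+1)\,t^2 - 2M(k+1)\,t + M^2 < MN\rho_2.$$
The quadratic on the left has discriminant $4M^2(k+1)$ and factors as $k(k+1)(t - t_-)(t - t_+)$ with $t_\pm = (M/k)(1 \pm 1/\sqrt{k+1})$. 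A one-line check gives $t_- \le M/(k+1)$ and $t_+ \ge M/k$, so on the relevant range $t \in [M/(k+1), M/k)$ the quadratic is nonpositive, while $MN\rho_2 > 0$ yields the strict inequality and closes the induction.
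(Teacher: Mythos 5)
Your proof is correct, and I verified its key computations: the reciprocal formula for $1/S(P)$, the breakpoint identity $1/S(M/k)=1/(\rho N)+k/M$, the left-limit difference $1/(k\rho N)+\rho_2/(\rho M)$, the reduction of the interior-maximum comparison to $k(k+1)t^2-2M(k+1)t+M^2<MN\rho_2$, and the root locations $t_\pm=(M/k)\bigl(1\pm 1/\sqrt{k+1}\bigr)$ with $t_-\le M/(k+1)$ and $t_+>M/k$ all check out. Parts 1 and 2 coincide in substance with the paper (part 1 is the same denominator comparison; part 2 is the paper's lemma on the rational function $\delta P/(\alpha P^2+\beta P+\gamma)$ with unique critical point $\sqrt{\gamma/\alpha}$), and part 3 has the same skeleton as the paper's: a chain across the breakpoints $M/k$, with a three-way split on each interval into increasing / decreasing / interior maximum, where the endpoint and left-limit comparisons you do via the reciprocal identity match the paper's direct computations in $\rho'=\rho N$, $\rho'_1$, $\rho'_2$. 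The genuine difference is the hard interior-maximum subcase. The paper (in its indexing) proves $S^*_k<S(M/(k-1))$ by rewriting it as $M+\rho'_1 k-\rho'<2kP^*_k$, squaring (which forces a separate treatment of the sign of the left-hand side), and then establishing the resulting two-sided bound on $M$ via the parameterisation $\rho'=a^2\rho'_1$ with $a\in[1,\sqrt{2k})$ and interval-membership arguments --- a route the authors themselves describe as cumbersome. You instead invert $S$, substitute $t=P^*_{k+1}$ so that $\rho_1$ disappears, and reduce the comparison to the nonpositivity of an explicit quadratic in $t$ on $[M/(k+1),M/k]$, which follows at once from factoring. What your route buys is that the case hypothesis $P^*_{k+1}\in[M/(k+1),M/k)$ enters exactly where it is needed --- as the statement that $t$ lies between the two roots --- so no squaring, no sign cases and no auxiliary parameter are required; the paper's route stays entirely in the original constants but pays with a longer chain of inequalities. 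Your argument is also slightly more robust, since the quadratic is nonpositive on the whole closed interval, so the boundary case $P^*_{k+1}=M/(k+1)$ causes no trouble.
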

\begin{proof}
  Part 1 is obvious by writing
  \begin{equation*}
    S^*_k = \frac{\rho M / k}{\rho_2 + 2 \sqrt{\rho_1 M / N k}} = \frac{\rho M}{\rho_2 k + 2 \sqrt{\rho_1 k M / N}}.
  \end{equation*}
  To prove part 2, we apply lemma~\ref{th:speedup-fcn} to $S(P)$ within interval \smash{$\big[\frac{M}{k},\frac{M}{k-1}\big)$} for $k = 1,2,\dots,M$. We obtain that $S(P)$ either is monotonically increasing, or is monotonically decreasing, or achieves a single maximum $S^*_k = S(P^*_k)$ at $P^*_k$.

  To prove part 3, we apply theorem~\ref{th:speedup-charact2} repeatedly for $k = 2,\dots,M$.
\end{proof}

\begin{rmk}
  As a particular case of theorem~\ref{th:speedup-charact} part 2 for $k=1$, we obtain that for $P \in [M,\infty)$
  \begin{equation}
    \label{e:speedup-largeP:max2}
    \tag{\ref{e:speedup-largeP:max}'}
    P^*_1 = \sqrt{\rho_1 M N} \qquad S^*_1 = S(P^*_1) = \frac{\rho M}{\rho_2 + 2 \sqrt{\rho_1 M / N}}
  \end{equation}
  and $S(P)$ is either monotonically decreasing with $P$ if $M \ge P^*_1$, or it increases from $P = M$ up to a single maximum at $P = P^*_1$ and then decreases monotonically. In both cases, if $t^{\W}_c > 0$ we have that $S(P) \rightarrow 0$ as $P \rightarrow \infty$, and $S(P) \approx \rho N M / P$ for large $P$.
\end{rmk}

\begin{thm}
  \label{th:speedup-charact2}
  Consider the function of eq.~\eqref{e:speedup3}, written more simply using $\rho'_1 = \rho_1 N > 0$, $\rho'_2 = \rho_2 N > 0$ and $\rho' = \rho N = \rho'_1 + \rho'_2 > 0$:
  \begin{equation}
    \label{e:speedup4}
    S(P) = \frac{\frac{1}{k} \rho' M P}{P^2 + \rho'_2 P + \frac{1}{k} \rho'_1 M} \quad \text{for } P \in \textstyle\big[\frac{M}{k},\frac{M}{k-1}\big), \qquad k=2,\dots,M.
  \end{equation}
  Then, for $k = 2,\dots,M$: $S\big(\frac{M}{k-1}\big) > S(P)$ $\forall P \in \big[\frac{M}{k},\frac{M}{k-1}\big)$.
\end{thm}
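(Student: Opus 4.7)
The plan is to reduce the strict inequality to a polynomial inequality in $P$ whose difference factors as a manifestly nonnegative sum. Two observations about the endpoint $P_0 := M/(k-1)$ make this tractable. First, $P_0$ lies at the left end of the next interval $[M/(k-1),M/(k-2))$, so $S(P_0)$ is evaluated using formula~(\ref{e:speedup4}) with $k$ replaced by $k-1$. Second, at this $P$ the identities $P_0^2 = P_0 \cdot M/(k-1)$ and $\rho'_1 M/(k-1) = \rho'_1 P_0$ collapse the denominator, giving the clean form
$$S(P_0) = \frac{\rho' A}{A + \rho'}, \qquad A := \frac{M}{k-1}.$$

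Next, cross-multiplying by the positive quantities $A + \rho'$ and $P^2 + \rho'_2 P + \rho'_1 M/k$ reduces the desired inequality to
$$A\left(P^2 + \rho'_2 P + \frac{\rho'_1 M}{k}\right) > (A + \rho')\, \frac{MP}{k}.$$
The central step is to expand the difference of the two sides, substitute $\rho' = \rho'_1 + \rho'_2$, and group the resulting monomials to obtain the factorisation
$$AP\left(P - \frac{M}{k}\right) + \rho'_2 P \left(A - \frac{M}{k}\right) + \frac{\rho'_1 M}{k}\,(A - P).$$
Once this form is in hand, positivity on $P \in [M/k, M/(k-1))$ is immediate: the first term is nonnegative since $P \ge M/k$; the second is strictly positive since $A > M/k$ (because $k \ge 2$); and the third is strictly positive since $P < A$. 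The sum is therefore strictly positive, and the theorem follows.

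The only real obstacle is spotting the three-term factorisation. Once one guesses that the natural differences on the interval are $P - M/k$, $A - M/k$, and $A - P$, grouping the expanded polynomial by its components in $\rho'_1$, $\rho'_2$, and $AP^2$ confirms the identity with a short calculation. As a by-product, the second and third terms remain strictly positive even at $P = M/(k-1)$, which also recovers the jump of $S$ at each boundary, consistent with the upward discontinuities noted in theorem~\ref{th:speedup-charact}(iii).
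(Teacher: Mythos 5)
Your proof is correct, and it takes a genuinely different route from the paper's. The paper first invokes part 2 of Theorem~\ref{th:speedup-charact} (which rests on Lemma~\ref{th:speedup-fcn}) to establish that $S$ is unimodal on $\big[\frac{M}{k},\frac{M}{k-1}\big)$, and then argues by cases: if $S$ is increasing it compares the left limit at $\frac{M}{k-1}$ with $S\big(\frac{M}{k-1}\big)$; if decreasing it compares the endpoint values; and in the interior-maximum case it must show $S^*_k < S\big(\frac{M}{k-1}\big)$ using \eqref{e:speedup-max2}, which leads to the cumbersome part of the paper's proof (squaring, the bounds $\rho'_1(k-2+\frac{1}{k}) < M < \rho'_1 k$, and the substitution $\rho' = a^2\rho'_1$). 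You bypass both the unimodality machinery and the case analysis: your evaluation $S\big(\frac{M}{k-1}\big) = \frac{\rho' A}{A+\rho'}$ with $A = \frac{M}{k-1}$ (using the next interval's formula, exactly as the paper does) is correct, the cross-multiplied inequality is equivalent since all factors are positive, and the identity $A\big(P^2+\rho'_2 P+\frac{\rho'_1 M}{k}\big) - (A+\rho')\frac{MP}{k} = AP\big(P-\frac{M}{k}\big) + \rho'_2 P\big(A-\frac{M}{k}\big) + \frac{\rho'_1 M}{k}(A-P)$ checks out term by term; each summand is nonnegative on $\big[\frac{M}{k},\frac{M}{k-1}\big)$ and the second is strictly positive because $A > \frac{M}{k}$ when $k\ge 2$, so the strict inequality follows. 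What each approach buys: yours is shorter and entirely elementary, needing neither derivatives nor the location $P^*_k$ of the maximum; the paper's reuses the unimodality characterisation it needs anyway for part 2 of Theorem~\ref{th:speedup-charact}, but pays a real price in the interior-maximum case. One small slip in your closing remark: at $P=\frac{M}{k-1}$ the third term $\frac{\rho'_1 M}{k}(A-P)$ vanishes, so the strict jump at the boundary is carried by the first and second terms (the first is positive there since $A > \frac{M}{k}$), not by the second and third; this does not affect the main argument, which only uses $P < A$ on the half-open interval.
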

\begin{proof}
  From theorem~\ref{th:speedup-charact} part 2, we know that exactly one of the following three cases holds for $P \in \big[\frac{M}{k},\frac{M}{k-1}\big)$:
  \begin{enumerate}
  \item $S(P)$ is monotonically increasing. Then, it suffices to prove that $\lim_{P \rightarrow \frac{M}{k-1}}{ S(P) } < S\big(\frac{M}{k-1}\big)$. Indeed,
    \begin{equation*}
      \lim_{P \rightarrow \frac{M}{k-1}}{ S(P) } = \frac{\rho' M}{\rho' (k-1) + \rho'_2 + M k / (k-1)} < \frac{\rho' M}{\rho' (k-1) + M} = S\left(\frac{M}{k-1}\right).
    \end{equation*}
  \item $S(P)$ is monotonically decreasing. Then, it suffices to prove that $S\big(\frac{M}{k}\big) < S\big(\frac{M}{k-1}\big)$. Indeed,
    \begin{equation*}
      S\left(\frac{M}{k}\right) = \frac{\rho' M}{\rho' k + M} < \frac{\rho' M}{\rho' (k-1) + M} = S\left(\frac{M}{k-1}\right).
    \end{equation*}
  \item $S(P)$ achieves a single maximum $S^*_k = S(P^*_k)$ at $P^*_k$ in the interior of the interval. Then, it suffices to prove that $S^*_k < S\big(\frac{M}{k-1}\big)$. This last case is more complicated. In the sequel, we provide a proof that has not technical difficulties, although it is somewhat cumbersome.
  \end{enumerate}
  Let us then prove that $S^*_k < S\big(\frac{M}{k-1}\big)$. After a bit of algebra, from eqs.~\eqref{e:speedup4} and~\eqref{e:speedup-max2} we obtain the following:
  \begin{equation*}
    S^*_k < S\left(\frac{M}{k-1}\right) \Leftrightarrow M + \rho'_1 k - \rho' < 2 k P^*_k = 2 k \sqrt{\rho'_1 M / k}.
  \end{equation*}
  If $M + \rho'_1 k - \rho' \le 0$, then the condition holds and the proof is done. Otherwise, assume $M + \rho'_1 k - \rho' > 0$ and take squares in the previous equation:
  \begin{multline*}
    (M + \rho'_1 k - \rho')^2 < 4 \rho'_1 k M \Leftrightarrow M^2 + (\rho'_1 k - \rho')^2 - 2 (\rho'_1 k + \rho') M < 0 \\
    \Leftrightarrow \rho'_1 k + \rho' - 2 \sqrt{\rho'_1 \rho' k} < M < \rho'_1 k + \rho' + 2 \sqrt{\rho'_1 \rho' k}.
  \end{multline*}
  Hence, we need to prove that the following inequalities hold:
  \begin{equation}
    \label{e:ineqM}
    \rho'_1 k + \rho' - 2 \sqrt{\rho'_1 \rho' k} < M < \rho'_1 k + \rho' + 2 \sqrt{\rho'_1 \rho' k}
  \end{equation}
  under the following assumptions:
  \begin{itemize}
  \item $P^*_k = \sqrt{\rho'_1 M / k} \in \left(\frac{M}{k},\frac{M}{k-1}\right)$, since case 3 above means that $S(P)$ achieves a maximum $S^*_k$ at $P^*_k$ in the interior of the interval. Equivalently, $\rho'_1 k > M > \rho'_1 (k-1)^2/k = \rho'_1 \left(k-2+\frac{1}{k}\right)$.
  \item $M \ge k \ge 2$, since $k$ takes the values $2,3,\dots,M$.
  \item $M > \rho' - \rho'_1 k$, from above.
  \end{itemize}
  From $M < \rho'_1 k$ it follows that $M < \rho'_1 k + \rho' + 2 \sqrt{\rho'_1 \rho' k}$, and the RHS inequality in~\eqref{e:ineqM} is proven. Now let us prove the LHS inequality in~\eqref{e:ineqM}, which states $M > \rho'_1 k + \rho' - 2 \sqrt{\rho'_1 \rho' k}$. This can be derived from the assumption above that $M > \rho'_1 \left(k-2+\frac{1}{k}\right)$. Specifically, we will prove that $\rho'_1 \left(2 - \frac{1}{k}\right) < -\rho' + 2 \sqrt{\rho'_1 \rho' k}$, and this will complete the proof of the theorem.

  From assumptions $\rho'_1 k > M > \rho' - \rho'_1 k$ we get that $\rho' < 2 \rho'_1 k$, and since $\rho' \ge \rho'_1$, we have that $\rho' \in [\rho'_1,2 \rho'_1 k)$. Now, write $\rho' = a^2 \rho'_1$ with $a \in [1,\sqrt{2k})$. Then
  \begin{equation*}
    \rho'_1 \left(2 - \frac{1}{k}\right) < -\rho' + 2 \sqrt{\rho'_1 \rho' k} \Leftrightarrow a^2 + 2 - \frac{1}{k} < 2 a \sqrt{k}.
  \end{equation*}
  Since $k \ge 2 \Leftrightarrow \frac{3}{2} \ge 2 - \frac{1}{k}$, to prove $a^2 + 2 - \frac{1}{k} < 2 a \sqrt{k}$ it suffices to prove that $a^2 + \frac{3}{2} < 2 a \sqrt{k}$, or equivalently $a \in \left( \sqrt{k} - \sqrt{k - \frac{3}{2}},\sqrt{k} + \sqrt{k - \frac{3}{2}} \right)$. This interval indeed contains $[1,\sqrt{2k})$: a little algebra shows that
  \begin{equation*}
    \sqrt{k} - \sqrt{k - \frac{3}{2}} < 1 \Leftrightarrow k > \left(\frac{5}{4}\right)^2 \qquad \text{and} \qquad \sqrt{k} + \sqrt{k - \frac{3}{2}} > \sqrt{2k} \Leftrightarrow k > \frac{3}{4 (\sqrt{2}-1)}
  \end{equation*}
  both of which hold because $k \ge 2$.
\end{proof}

\begin{lemma}
  \label{th:speedup-fcn}
  Given constants $\alpha,\beta,\gamma,\delta > 0$, define the following real function for $P \ge 0$:
  \begin{equation}
    \label{e:speedup-fcn}
    \psi(P) = \frac{\delta P}{\alpha P^2 + \beta P + \gamma}
  \end{equation}
  and let $P^* = \sqrt{\gamma/\alpha}$ and $\psi^* = \psi(P^*) = \delta/(\beta + 2 \sqrt{\alpha \gamma})$. Then, in the interval $[a,b)$ with $1 \le a < b \le \infty$, $\psi
  \begin{cases}
    \text{is monotonically decreasing if } P^* \le a \\
    \text{achieves a single maximum at } P^* \text{ if } P^* \in (a,b) \\
    \text{is monotonically increasing if } P^* \ge b.
  \end{cases}$
\end{lemma}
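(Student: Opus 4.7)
The plan is to reduce the lemma to a one-line calculus argument: compute $\psi'(P)$ and read off its sign. Since $\alpha,\beta,\gamma>0$, the denominator $\alpha P^2 + \beta P + \gamma$ is strictly positive on $P\ge 0$, so $\psi$ is differentiable there. Applying the quotient rule,
\[
\psi'(P) = \frac{\delta(\alpha P^2 + \beta P + \gamma) - \delta P(2\alpha P + \beta)}{(\alpha P^2 + \beta P + \gamma)^2} = \frac{\delta(\gamma - \alpha P^2)}{(\alpha P^2 + \beta P + \gamma)^2}.
\]
The denominator is positive, so $\mathrm{sign}(\psi'(P)) = \mathrm{sign}(\gamma - \alpha P^2)$. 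Since $\alpha,\gamma > 0$, this is positive for $P < P^* = \sqrt{\gamma/\alpha}$, zero at $P = P^*$, and negative for $P > P^*$. Hence $\psi$ is strictly increasing on $[0,P^*]$, strictly decreasing on $[P^*,\infty)$, and attains a unique global maximum at $P^*$.

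The three cases of the lemma then follow immediately by intersecting this monotonicity pattern with the interval $[a,b)$. If $P^*\le a$, then $[a,b)\subseteq[P^*,\infty)$, so $\psi$ is monotonically decreasing there. If $P^*\ge b$, then $[a,b)\subseteq[0,P^*]$, so $\psi$ is monotonically increasing there. If $P^*\in(a,b)$, then $\psi$ increases on $[a,P^*]$ and decreases on $[P^*,b)$, giving a single maximum at $P^*$. To finish, I would verify the claimed value $\psi^* = \psi(P^*) = \delta/(\beta + 2\sqrt{\alpha\gamma})$ by direct substitution:
\[
\psi(P^*) = \frac{\delta\sqrt{\gamma/\alpha}}{\alpha(\gamma/\alpha) + \beta\sqrt{\gamma/\alpha} + \gamma} = \frac{\delta\sqrt{\gamma/\alpha}}{2\gamma + \beta\sqrt{\gamma/\alpha}} = \frac{\delta}{\beta + 2\sqrt{\alpha\gamma}},
\]
after multiplying numerator and denominator by $\sqrt{\alpha/\gamma}$.

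There is no substantive obstacle: the result is elementary single-variable calculus on a rational function with a single critical point in $(0,\infty)$. The only care needed is to note that positivity of $\alpha,\beta,\gamma$ ensures the denominator never vanishes (so $\psi$ is smooth on $[0,\infty)$) and that $P^* > 0$ lies in the interior, making it a genuine maximum rather than a boundary extremum. Everything else is bookkeeping to match the three cases in the statement.
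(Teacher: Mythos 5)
Your proposal is correct and follows essentially the same route as the paper: compute $\psi'(P)$, observe that its sign is that of $\gamma-\alpha P^2$, and conclude the three cases from where $P^*=\sqrt{\gamma/\alpha}$ falls relative to $[a,b)$. The only cosmetic difference is that the paper invokes $\psi''(P^*)<0$ to identify the maximum, whereas you read the monotonicity directly from the sign of $\psi'$, which is equally valid (and arguably more self-contained).
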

\begin{proof}
  The derivatives of $\psi$ with respect to $P$ are:
  \begin{equation*}
    \psi'(P) = \frac{\delta (-\alpha P^2 + \gamma)}{(\alpha P^2 + \beta P + \gamma)^2} \qquad\psi''(P) = \frac{-2 \delta (-\alpha^2 P^3 + 3 \alpha \gamma P + \beta \gamma)}{(\alpha P^2 + \beta P + \gamma)^3}.
  \end{equation*}
  Hence $\psi'(P^*) = 0$ at $P^* = \sqrt{\gamma/\alpha}$ and $\psi''(P^*) < 0$, and the lemma follows.
\end{proof}

\subsection{Globally maximum speedup $S^* = \max_{P \ge 1}{S(P)}$}

The maximum speedup can be determined as follows. Both $S(P)$ in~\eqref{e:speedup-divisible} and $S^*_k$ in~\eqref{e:speedup-max} are monotonically increasing with $P$ (note $S^*_k$ is decreasing with $k$, and $k$ is decreasing with $P$). Hence, the global maximum of $S(P)$ occurs in the last interval $[M,\infty)$, either at the beginning ($P=M$, if $P^*_1 \le M$) or in its interior ($P=P^*_1$, if $P^*_1 > M$). This also follows from theorem~\ref{th:speedup-charact}. Specifically, the global maximum $S^*$ of $S(P)$ is:
\begin{itemize}
\item If $M \ge \rho_1 N$: $S^* = M / \left( 1 + \frac{M}{\rho N} \right) \le M$, achieved at $P = M$.
\item If $M < \rho_1 N$: $S^* = S^*_1 = \frac{\rho M}{\rho_2 + 2 \sqrt{\rho_1 M / N}} > M$, achieved at $P = P^*_1 = \sqrt{\rho_1 M N} > M$. \\
  Let us prove that $S^*_1 > M$. Assume $S^*_1 \le M$, then
  \begin{equation*}
    \frac{\rho M}{\rho_2 + 2 \sqrt{\rho_1 M / N}} \le M \Leftrightarrow \rho \le \rho_2 + 2 \sqrt{\rho_1 M / N} \Leftrightarrow \rho_1 \le 4 M / N \Leftrightarrow M \ge \rho_1 N / 4
  \end{equation*}
  which contradicts the condition that $M < \rho_1 N$, hence $S^*_1 > M$.
\end{itemize}
In practice, with large values of $N$, the more likely case is $S^* = S^*_1$ for $P = P^*_1 > M$.

\subsection{The ``large dataset'' case}

If we take $P \ll \rho_2 N$ (``large dataset'' case), the $P^2$ term in the speedup expression~\eqref{e:speedup2} becomes negligible, and the speedup becomes
\begin{equation*}
  S(P) = \frac{\rho \frac{1}{\ceil{M/P}} M P}{\frac{1}{N} P^2 + \rho_2 P + \rho_1 \frac{1}{\ceil{M/P}} M} \approx \rho / \left( \frac{\rho_1}{P} + k \frac{\rho_2}{M} \right)
\end{equation*}
where $k = \ceil{M/P} \in \{1,2,\dots,M\}$. Now, by taking $k=1$ ($M < P$) and $k = M/P$ ($M$ divisible by $P$) we obtain the following important cases:
\begin{equation}
  \tag{\ref{e:speedup-largeN}'}
  \text{if $M$ divisible by $P$:} \quad S(P) \approx P; \qquad \text{if $M > P$:} \quad S(P) \approx \rho / \left( \frac{\rho_1}{P} + \frac{\rho_2}{M} \right)
\end{equation}
so that the speedup is almost perfect up to $P = M$, and then it is approximately the weighted harmonic mean of $M$ and $P$ (hence, $S(P)$ is monotonically increasing and between $M$ and $P$). For $P \gg \rho_1$, we have $S(P) \approx \frac{\rho}{\rho_2} M > M$.

\clearpage

\section{Important MPI functions}
\label{s:MPI}

For reference, we briefly describe important MPI functions and their parameters \citep{Gropp_99a,Gropp_99b,MPI12a}.

\subsection{Environment Management Routines}

\begin{itemize}
\item \texttt{MPI\_Init(\&argc,\&argv)}: initialises the MPI execution environment. It must be called exactly once in every MPI program before calling any other MPI functions. For C programs, it may be used to pass the command-line arguments to all processes. Input: \texttt{argc}, pointer to the number of arguments; \texttt{argv}, pointer to the argument vector.
\item \texttt{MPI\_Comm\_size(comm,\&size)}: returns the total number of MPI processes in the specified communicator. If the communicator is \texttt{MPI\_COMM\_WORLD}, then it represents the number of MPI tasks available to your application. Input: \texttt{comm}, communicator (handle). Output: \texttt{size}, number of processes in the group of \texttt{comm} (integer).
\item \texttt{MPI\_Comm\_rank(comm,\&rank)}: returns the rank of the calling MPI process within the specified communicator. Initially, each process is assigned a unique integer rank between 0 and the number of tasks (1 within the communicator \texttt{MPI\_COMM\_WORLD}). This rank is often referred to as a task ID. If a process becomes associated with other communicators, it will have a unique rank within each of these as well. Input: \texttt{comm}, communicator (handle). Output: \texttt{rank}, rank of the calling process in the group of \texttt{comm} (integer).
\item \texttt{MPI\_Finalize()}: terminates the MPI execution environment. It should be the last MPI function called in any MPI program.
\end{itemize}

\subsection{Point to Point Communication Routines}

MPI point-to-point operations involve message passing between exactly two MPI tasks. One task performs a send operation and the other task performs a matching receive operation. There are different types of send and receive functions, used for different purposes, such as synchronous send; blocking send, blocking receive; non-blocking send, non-blocking receive; buffered send; combined send-receive. Their argument list generally takes one of the following formats:
\begin{itemize}
\item Blocking send: \texttt{MPI\_Send(buffer,count,type,dest,tag,comm)}.
\item Non-blocking send: \texttt{MPI\_Isend(buffer,count,type,dest,tag,comm,request)}.
\item Blocking receive: \texttt{MPI\_Recv(buffer,count,type,source,tag,comm,status)}.
\item Non-blocking receive: \texttt{MPI\_Irecv(buffer,count,type,source,tag,comm,request)}.
\end{itemize}
Here is a brief description of the parameters:
\begin{itemize}
\item \texttt{buffer}: program (application) address space that references the data that is to be sent or received. In most cases, this is simply the variable name that is to be sent or received. For C programs, \texttt{buffer} is passed by reference and must be prepended with an ampersand: \texttt{\&buffer}.
\item \texttt{count}: indicates the number of data elements of type \texttt{type} to be sent.
\item \texttt{type}: the data type that is sent or received. For reasons of portability, MPI predefines its elementary data types.
\item \texttt{dest}: for send routines, it indicates the process to which a message should be delivered. Specified as the rank of the receiving process.
\item \texttt{source}: for receive routines, it indicates the originating process of the message. Specified as the rank of the sending process. It may be set to the wild card \texttt{MPI\_ANY\_SOURCE} to receive a message from any task.
\item \texttt{tag}: arbitrary non-negative integer assigned by the programmer to uniquely identify a message. Send and receive operations should match message tags. For a receive operation, the wild card \texttt{MPI\_ANY\_TAG} can be used to receive any message regardless of its tag.
\item \texttt{comm}: it indicates the communication context, or set of processes for which the source or destination fields are valid. Unless the programmer is explicitly creating new communicators, the predefined communicator \texttt{MPI\_COMM\_WORLD} is usually used.
\item \texttt{status}: for receive routines, it indicates the source of the message and the tag of the message. In C, \texttt{status} is a pointer to a predefined structure \texttt{MPI\_Status}. Additionally, the actual number of bytes received is obtainable from \texttt{status} via the \texttt{MPI\_Get\_count} routine.
\item \texttt{request}: used by non-blocking send/receive. Since non-blocking operations may return before the requested system buffer space is obtained, the system issues a unique ``request number''. The programmer uses this system-assigned ``handle'' later (in a \texttt{Wait}-type routine) to determine completion of the non-blocking operation. In C, \texttt{request} is a pointer to a predefined structure \texttt{MPI\_Request}.
\end{itemize}
These are the \emph{blocking message passing routines}:
\begin{itemize}
\item \texttt{MPI\_Send(\&buf,count,datatype,dest,tag,comm)}: basic blocking send operation. It returns only after the application buffer in the sending task is free for reuse. 
\item \texttt{MPI\_Recv(\&buf,count,datatype,source,tag,comm,\&status)}: receive a message and block until the requested data is available in the application buffer in the receiving task.
\item \texttt{MPI\_Ssend(\&buf,count,datatype,dest,tag,comm)}: synchronous blocking send. It sends a message and blocks until the application buffer in the sending task is free for reuse and the destination process has started to receive the message.
\item \texttt{MPI\_Bsend(\&buf,count,datatype,dest,tag,comm)}: buffered blocking send. It allows the programmer to allocate the required amount of buffer space into which data can be copied until it is delivered. It alleviates problems associated with insufficient system buffer space. It returns after the data has been copied from the application buffer space to the allocated send buffer. It must be used with the \texttt{MPI\_Buffer\_attach} routine.
\item \texttt{MPI\_Buffer\_attach(\&buffer,size), MPI\_Buffer\_detach(\&buffer,size)}: used by the programmer to allocate or deallocate message buffer space to be used by \texttt{MPI\_Bsend}. The \texttt{size} argument is specified in actual data bytes (not a count of data elements). Only one buffer can be attached to a process at a time.
\item \texttt{MPI\_Wait(\&request,\&status)}: blocks until a specified non-blocking send or receive operation has completed.
\end{itemize}
These are the \emph{non-blocking message passing routines}:
\begin{itemize}
\item \texttt{MPI\_Isend(\&buf,count,datatype,dest,tag,comm,\&request)}: identifies an area in memory to serve as a send buffer. Processing continues immediately without waiting for the message to be copied out from the application buffer. A communication request handle is returned for handling the pending message status. The program should not modify the application buffer until subsequent calls to \texttt{MPI\_Wait} or \texttt{MPI\_Test} indicate that the non-blocking send has completed.
\item \texttt{MPI\_Irecv(\&buf,count,datatype,source,tag,comm,\&request)}: identifies an area in memory to serve as a receive buffer. Processing continues immediately without actually waiting for the message to be received and copied into the the application buffer. A communication request handle is returned for handling the pending message status. The program must use calls to \texttt{MPI\_Wait} or \texttt{MPI\_Test} to determine when the non-blocking receive operation completes and the requested message is available in the application buffer.
\item \texttt{MPI\_Test(\&request,\&flag,\&status)}: checks the status of a specified non-blocking send or receive operation. It returns in \texttt{flag} logical true (1) if the operation completed and logical false (0) otherwise.
\end{itemize}

\subsection{Collective Communication Routines}

\begin{itemize}
\item \texttt{MPI\_Bcast(\&buffer,count,datatype,root,comm)}: data movement operation. IT broadcasts (sends) a message from the process with rank \texttt{root} to all other processes in the group. 
\item \texttt{MPI\_Gather(\&sendbuf,sendcnt,sendtype,\&recvbuf,recvcount,recvtype,root,comm)}: data movement operation. It gathers distinct messages from each task in the group to a single destination task. Its reverse operation is \texttt{MPI\_Scatter}. 
\item \texttt{MPI\_Allgather(\&sendbuf,sendcount,sendtype,\&recvbuf,recvcount,recvtype,comm)}: data movement operation. It concatenates data to all tasks in a group. Each task in the group, in effect, performs a one-to-all broadcasting operation within the group. 
\item \texttt{MPI\_Reduce(\&sendbuf,\&recvbuf,count,datatype,op,root,comm)}: collective computation operation. It applies a reduction operation on all tasks in the group and places the result in one task. 
\item \texttt{MPI\_Allreduce(\&sendbuf,\&recvbuf,count,datatype,op,comm)}: collective computation and data movement operation. It applies a reduction operation and places the result in all tasks in the group. It is equivalent to \texttt{MPI\_Reduce} followed by \texttt{MPI\_Bcast}. 
\end{itemize}

% \bibliographystyle{abbrvnat}
% \bibliography{macp,macp-xref}

\end{document}